\documentclass[final, 12pt]{colt2018}

\usepackage{times}
\usepackage{booktabs}       % professional-quality tables
\usepackage{amsfonts}       % blackboard math symbols
\usepackage{nicefrac}       % compact symbols for 1/2, etc.
\usepackage{enumitem}

\usepackage{amsmath,amssymb,mathrsfs}
\PassOptionsToPackage{ruled, vlined, linesnumbered}{algorithm2e}
\usepackage{algorithm}
\usepackage[noend]{algpseudocode}

\usepackage{graphicx}
\usepackage{mathtools}
\usepackage{footnote}
\usepackage{float}
\usepackage{xspace}
\usepackage{multirow}
\usepackage{wrapfig}
\usepackage{framed}

\usepackage{footnote}
\makesavenoteenv{tabular}
\makesavenoteenv{table}

%\usepackage[]{color-edits}
%\addauthor{HL}{blue}
%\addauthor{AA}{red}

\newcommand{\TVD}[1]{\norm{#1}_\text{TV}}

\newcommand{\EG}{\textsc{Epoch-Greedy}\xspace}
\newcommand{\EPG}{\textsc{$\epsilon$-Greedy}\xspace}
\newcommand{\minimonster}{\textsc{ILOVETOCONBANDITS}\xspace}

\newcommand{\AdaEG}{\textsc{Ada-Greedy}\xspace}
\newcommand{\AdaILTCB}{\textsc{Ada-ILTCB}\xspace}

\newcommand{\AdaBIN}{\textsc{Ada-BinGreedy}\xspace}
\newcommand{\corral}{\textsc{Corral}\xspace}
\newcommand{\bistro}{\textsc{BISTRO+}\xspace}
\newcommand{\base}[1]{{{\cal{B}}_{#1}}}
\newcommand{\scale}{\rho}
\newcommand{\test}{\textsc{NonstatTest}\xspace}
\newcommand{\true}{\textit{True}\xspace}
\newcommand{\false}{\textit{False}\xspace}
\newcommand{\flag}{\textsc{flag}\xspace}

\newcommand{\calX}{{\mathcal{X}}}
\newcommand{\calS}{{\mathcal{S}}}
\newcommand{\calI}{{\mathcal{I}}}
\newcommand{\calJ}{{\mathcal{J}}}

\newcommand{\calD}{{\mathcal{D}}}
\newcommand{\calE}{{\mathcal{E}}}
\newcommand{\calR}{{\mathcal{R}}}

\newcommand{\avgR}{\wh{\cal{R}}}
\newcommand{\ips}{\wh{r}}
\newcommand{\whpi}{\wh{\pi}}

\newcommand{\whV}{\wh{V}}
\newcommand{\Reg}{\text{\rm Reg}}
\newcommand{\whReg}{\wh{\text{\rm Reg}}}

\newcommand{\one}{\boldsymbol{1}}
\newcommand{\var}{\Delta}
\newcommand{\bvar}{\bar{\Delta}}
\newcommand{\p}{\prime}
\newcommand{\evt}{\textsc{Event}}

\DeclareMathOperator*{\argmax}{argmax}

\DeclarePairedDelimiter\abs{\lvert}{\rvert}
\DeclarePairedDelimiter\bigabs{\big\lvert}{\big\rvert}
\DeclarePairedDelimiter\ceil{\lceil}{\rceil}
\DeclarePairedDelimiter\floor{\lfloor}{\rfloor}

\newcommand{\field}[1]{\mathbb{#1}}

\newcommand{\fR}{\field{R}}

\newcommand{\E}{\field{E}}

\newcommand{\norm}[1]{\left\|{#1}\right\|}

\newcommand{\scO}{\mathcal{O}}

\newcommand{\wh}{\widehat}

\newtheorem{cor}{Corollary}

\newcommand{\order}{\ensuremath{\mathcal{O}}}
\newcommand{\otil}{\ensuremath{\widetilde{\mathcal{O}}}}

\title{Efficient Contextual Bandits in Non-stationary Worlds}

% The \author macro works with any number of authors. There are two
% commands used to separate the names and addresses of multiple
% authors: \And and \AND.
%
% Using \And between authors leaves it to LaTeX to determine where to
% break the lines. Using \AND forces a line break at that point. So,
% if LaTeX puts 3 of 4 authors names on the first line, and the last
% on the second line, try using \AND instead of \And before the third
% author name.
 \coltauthor{\Name{Haipeng Luo} \Email{haipengl@usc.edu}\\
 \addr University of Southern California
 \AND
 \Name{Chen-Yu Wei} \Email{chenyu.wei@usc.edu}\\
 \addr University of Southern California
 \AND
 \Name{Alekh Agarwal} \Email{alekha@microsoft.com} \\
 \addr Microsoft Research, NYC
 \AND 
\Name{John Langford}\Email{jcl@microsoft.com} \\
 \addr Microsoft Research, NYC
 }

%\author{
%   Haipeng Luo \\
%   University of Southern California \\
%   \nolinkurl{haipengl@usc.edu} \\
%   \And
%   Alekh Agarwal \\
%   Microsoft Research, NYC \\
%   \nolinkurl{alekha@microsoft.com} \\
%   \And
%   John Langford \\
%   Microsoft Research, NYC \\
%   \nolinkurl{jcl@microsoft.com} \\ 
%}

\begin{document}
\maketitle

 \begin{abstract}
  Most contextual bandit algorithms minimize regret against the best fixed
  policy, a questionable benchmark for non-stationary environments that are
  ubiquitous in applications.  In this work, we develop several efficient
  contextual bandit algorithms for non-stationary environments by equipping
  existing methods for i.i.d. problems with sophisticated statistical tests so as to dynamically    
  adapt to a change in distribution.  
  
  We analyze various standard notions of regret suited to non-stationary environments for these  
  algorithms, including interval regret, switching regret, and dynamic regret.
  When competing with the best policy at each time, one of our algorithms achieves regret $\order(\sqrt{ST})$
  if there are $T$ rounds with $S$ stationary periods, 
  or more generally $\order(\var^{1/3}T^{2/3})$ where $\var$ is some non-stationarity measure.
  These results almost match the optimal guarantees achieved by an inefficient baseline that is a variant of the classic Exp4 algorithm.
  The dynamic regret result is also the first one for efficient and fully adversarial contextual bandit.
  
  Furthermore, while the results above require tuning a parameter based on the unknown quantity $S$ or $\var$,
  we also develop a parameter free algorithm achieving regret $\min\{S^{1/4}T^{3/4}, \var^{1/5}T^{4/5}\}$.
  This improves and generalizes the best existing result $\var^{0.18}T^{0.82}$ 
  by~\citet{KarninAn16} which only holds for the two-armed bandit problem.
\end{abstract}
\section{Introduction}
%% What is a computationally efficient contextual bandit algorithm with
%% strong regret guarantees in non-stationary environments?

Algorithms for the contextual bandit problem have been developed for
adversarial~\citep{AuerCeFrSc02},
stochastic~\citep{AgarwalHsKaLaLiSc14,LangfordZh08} and
hybrid~\citep{RakhlinSr16,SyrgkanisLuKrSc16} environments. Despite the
specific setting, however, almost all these works minimize the
classical notion of regret that compares the reward of the algorithm
to the \emph{best fixed policy in hindsight}. This is a natural
benchmark when the data generating mechanism is essentially
stationary, so that a fixed policy can attain a large reward. However,
in many applications of contextual bandits, we are faced with an
extremely non-stationary world. For instance, the pool of available
news stories or blog articles rapidly evolves in content
personalization domains, and people's preferences typically exhibit
trends on daily, weekly and seasonal scales. In such cases, one wants
to compete with an appropriately adaptive sequence of benchmark
policies, for the baseline to be meaningful.

Prior works in a context-free setting (that is, the multi-armed bandit
problem) have studied regret to a sequence of actions, whenever that
sequence is \emph{slowly changing} under some appropriate measure (see
e.g.~\citep{AuerCeFrSc02, BesbesGuZe14, BesbesGuZe15, KarninAn16,
  WeiHoLu16}). A natural generalization to the contextual setting
would be to compete with a sequence of policies, all chosen from some
policy class. %, when the sequence of policies is slowly changing. 
Extension of the prior context-free works to the contextual
setting indeed yields algorithms with such guarantees, as we show with
a baseline example (Exp4.S). However, the computation and storage of
the resulting algorithms are both linear in the cardinality of the
policy class, making tractable implementation impossible except for
very small policy classes.

To overcome the computational obstacle, all previous works on
efficient contextual bandits assume access to an optimization oracle
which can find the policy with the largest reward on any dataset
containing context-reward
pairs~\citep{LangfordZh08,AgarwalHsKaLaLiSc14,RakhlinSr16,SyrgkanisLuKrSc16}.
Given such an oracle, however, it is known that no efficient
low-regret algorithms exist in the fully adversarial
setting~\citep[Theorem 25]{Hazan2016}, even without any challenges of
non-stationarity. Consequently all previous works explicitly rely on
assumptions such as i.i.d. contexts, or even i.i.d. context-reward
pairs. 

As a warm-up and also an example to show the difficulty of the problem, 
%compared to the full information setting,
we first consider a general approach to convert an algorithm for the stationary setting
to an algorithm that can deal with non-stationary data.
The idea is to combine different copies of the base algorithm,
each of which starts at a different time to learn over different data segments.
This can be seen as a natural generalization of the approach of~\citet{HazanSe07} for the full information
setting. 
We build on a recent result of~\citet{AgarwalLuNeSc17} to deal with the additional challenges due to partial
feedback and use \bistro~\citep{SyrgkanisLuKrSc16} as the base algorithm
since it is efficient and requires no statistical assumption on the rewards. 
However, unlike the full information setting, the regret rates degrade after this conversion as we show,
making this general approach unsatisfying.
%We can also replace \bistro with other contextual bandit algorithms (e.g. the one
%in~\citep{SyrgkanisKrSc16}).

%\footnote{For simplicity, we look at the $\epsilon$-greedy
%  variant of the algorithm which is qualitatively similar.} 

We next consider a more specific approach by equipping existing algorithms for the i.i.d. setting,
such as \EG~\citep{LangfordZh08} and the statistically more efficient approach of~\cite{AgarwalHsKaLaLiSc14},
with some sophisticated statistical tests to detect non-stationarity
(the resulting algorithms are called \AdaEG and \AdaILTCB respectively).
Once such non-stationarity is detected, the algorithms restart from scratch.
The exact tests are algorithm-specific and based
on verifying certain concentration inequalities which the algorithm
relies upon, but the general idea might be applicable to extending
other contextual bandit algorithms as well.

%We further adapt the statistically and computationally optimal approach of
%Agarwal et al.~\cite{AgarwalHsKaLaLiSc14} to this setting as
%well. Both of our modifications, \AdaEG and \AdaILTCB, probe for the
%longest period in the past over which the data distribution looks
%\emph{approximately i.i.d.}, and compute a distribution over policies
%based on that period. The exact test is algorithm-specific and based
%on verifying certain concentration inequalities which the algorithm
%relies upon, but the general idea might be applicable to extending
%other contextual bandit algorithms as well.

We present strong theoretical guarantees for our algorithms, 
in terms of interval regret, switching regret and dynamic
regret (defined in Section~\ref{sec:setup}). 
A high-level outcome of our analysis is that the algorithms enjoy a regret bound on
any time interval that is sufficiently stationary (called interval
regret), compared with the best fixed policy for that interval. 
%The precise notion of sufficiently stationary is algorithm-specific and
%formalized in Section~\ref{sec:interval_regret}. 
This general result has important corollaries, discussed in
Section~\ref{sec:implications}. For example, if the data-generating
process is typically i.i.d., except there are \emph{hard switches} in
the data distribution every so often, then our algorithms perform as
if they knew the change points in advance, up to a small penalty in
regret (called switching regret). More generally, if the data
distribution is \emph{slowly drifting}, we can still provide
meaningful regret bounds (called dynamic regret) when competing to the
best policy at each time (instead of a fixed policy across all rounds).

These results are summarized in Table~\ref{tab:results}.
The highlight is that our computationally efficient algorithm \AdaILTCB enjoys almost the same guarantee
as the inefficient baseline Exp4.S for all three regret measures,
which is optimal in light of the existing results for the special case of multi-armed bandit.
Importantly, the dynamic regret bounds for our algorithms hold under
a {\it fully adversarial} setting.\footnote{Note that this does not contradict with the hardness results in~\citep{Hazan2016}
since the bound is data-dependent and could be linear in $T$ in the worst case.}
As far as we know, this is the first result on adversarial and efficient contextual bandits.

All the results above, including those for Exp4.S, require tuning a parameter
in terms of some unknown quantity. Otherwise the results degrade as shown in Table~\ref{tab:results}
and become vacuous when the non-stationarity measure (the number of stationary periods $S$ or the reward variation $\var$) is large.
Our final contribution is a parameter-free variant of \AdaEG, called \AdaBIN,
which achieves better regret (even compared to Exp4.S) in the regime when $S$ or $\var$ is large and unknown.
Importantly, this result even improves upon the best existing result by~\citet{KarninAn16} for the context-free setting,
where a regret bound of order $\var^{0.18}T^{0.82}$ is shown for the two-armed bandit problem.
We improve the bound to $\min\{S^{1/4}T^{3/4}, \var^{1/5}T^{4/5}\}$ and also significantly generalize it to 
the multi-armed and contextual setting.

%\exptext{Finally, we also evaluate our simplest algorithm \AdaEG
%empirically. Our results show that the algorithm has
%significant gains over just using the i.i.d. version of the method,
%and frequently improves significantly over a more challenging
%baseline as well.}

\setcounter{footnote}{0}
\renewcommand{\arraystretch}{1.5}
\begin{table}[t]
\centering
\caption[Comparisons]{Comparisons of different results presented in this
  work. ``OE?'' indicates whether the algorithm is Oracle-Efficient or not. 
  $T$ is the total number of rounds, $\calI$ is the interval on which interval regret is measured,
  $S$ is the number of i.i.d. periods, $\var$ is the reward variation,
  and $\bvar \geq \var$ is the total variation, all defined in
  Section~\ref{sec:setup}. These parameters are assumed to be known
  for the column ``tuned'' but unknown for the column ``param-free''.\footnotemark  
  \;Dependence on other parameters are omitted. Results for \bistro assumes a transductive setting,
  and interval regret for the last three algorithms assumes (approximately) i.i.d. data on $\calI$.}
\label{tab:results}
\begin{tabular}{|c|c|c|c|c|c|c|c|}
\hline
\multirow{ 2}{*}{\small Algorithm} & \multirow{ 2}{*}{\small OE?} & \multicolumn{2}{c|}{\small Interval Regret} & 
\multicolumn{2}{c|}{\small Switching Regret} & \multicolumn{2}{c|}{\small Dynamic Regret} \\
\cline{3-8}
& & param-free & tuned & param-free & tuned & param-free & tuned \\
\hline
{\small Exp4.S (baseline)} & {\small N} & $\sqrt{T}$ & $\sqrt{|\calI|}$  & 
$S\sqrt{T}$ & $\sqrt{ST} $ & 
$\sqrt{\var}T^\frac{2}{3}$ & $\var^{\frac{1}{3}}T^{\frac{2}{3}} $ \\
\hline
{\small Corral \bistro} & {\small Y} & $T^{\frac{3}{4}}$ & $T^{\frac{1}{4}}\sqrt{|\calI|}$ & 
$ST^{\frac{3}{4}}$ & $\sqrt{S}T^{\frac{3}{4}}$ & $\sqrt{\var} T^{\frac{5}{6}}$ & $\var^\frac{1}{3}T^{\frac{5}{6}}$ \\
\hline
{\small \AdaEG} & {\small  Y} &  $T^{\frac{1}{6}}\sqrt{|\calI|}$ & $|\calI|^\frac{2}{3} $ &
%\begin{minipage}{2.6cm}\vspace{-5pt}\begin{equation}\tag*{$\ast$} L^{1/6}\sqrt{|\calI|} + \frac{|\calI|}{L^{1/3}} \end{equation}\end{minipage} &
$ \sqrt{S}T^\frac{2}{3}$ & $S^\frac{1}{3}T^\frac{2}{3} $ &
$\sqrt{\var}T^\frac{3}{4}$ &  $\var^\frac{1}{4}T^\frac{3}{4} $ \\
\hline
{\small \AdaILTCB} & {\small Y} & $\sqrt{T}$ & $\sqrt{|\calI|}$  & 
$S\sqrt{T}$ & $\sqrt{ST} $ & 
$ \bvar T^\frac{2}{3}$ & $\bvar^{\frac{1}{3}}T^{\frac{2}{3}} $ \\
\hline
{\small \AdaBIN} & {\small Y} & \multicolumn{2}{c|}{$T^{\frac{3}{4}}$}  & 
\multicolumn{2}{c|}{$S^{\frac{1}{4}}T^{\frac{3}{4}}$} & \multicolumn{2}{c|}{$\var^{\frac{1}{5}}T^{\frac{4}{5}}$} \\
\hline
\end{tabular}

\footnotetext{Other (incomparable) bounds for the ``param-free''  column are also possible. See discussions in respective sections.}

\end{table}

\paragraph{Related work.}
The idea of testing for non-stationarity in bandits was studied
in~\citep{BubeckSl12} and~\citep{AuerCh16} for a very different
purpose. 
%and in~\citep{KarninAn16} for a two-armed bandit problem in non-stationary environments,\footnote{That work focuses on adapting to
%  unknown parameters, and the regret bound there has a large
%  independence in $T$, making the results incomparable.}  all without context. 
The closest bounds to those in Table~\ref{tab:results} are in
the non-contextual setting~\citep{AuerCeFrSc02, BesbesGuZe14,
  BesbesGuZe15, WeiHoLu16} as mentioned earlier. \citet{chakrabarti2009mortal} study a context-free setup where
the action set changes. To the best our knowledge, oracle-efficient
contextual bandit algorithms for non-stationary environments were only
studied before in~\citep{SyrgkanisKrSc16}, where a reduction from
competing with a switching policy sequence to competing with a fixed
policy was proposed.  However, the reduction cannot be applied to the
i.i.d methods~\citep{LangfordZh08, AgarwalHsKaLaLiSc14}, and it heavily
relies on knowing the number of switches and the transductive setting. Additionally,
this approach gives no guarantees on interval regret or dynamic
regret, unlike our results.

 \section{Preliminaries}\label{sec:setup}

The contextual bandits problem is defined as follows.  Let $\calX$ be
an arbitrary context space and $K$ be the number of actions.  Let
$[n]$ denote the set $\{1, \ldots, n\}$ for any integer $n$.  A
mapping $\pi: \calX \rightarrow [K]$ is called a policy and the
learner is given a fixed set of policies $\Pi$.  For simplicity, we
assume $\Pi$ is a finite set but with a large cardinality $N = |\Pi|$.
Ahead of time, the environment decides $T$ distributions $\calD_1,
\ldots, \calD_T$ on $\calX \times [0,1]^K$, and draws $T$
context-reward pairs $(x_t, r_t) \sim \calD_t$ for $t=1,\ldots, T$
independently.\footnote{That is, the data generating process is
  oblivious to the algorithm.}  Then at each round $t = 1, \ldots, T$,
the environment reveals $x_t$ to the learner, the learner picks an
action $a_t \in [K]$ and observes its reward $r_t(a_t)$.  The regret
of the learner with respect to a policy $\pi$ at round $t$ is
$r_t(\pi(x_t)) - r_t(a_t)$.  Most existing results on contextual
bandits focus on minimizing cumulative regret against any fixed policy
$\pi \in \Pi$: $\sum_{t=1}^T r_t(\pi(x_t)) - r_t(a_t)$.

To better deal with non-stationary environments, we consider several
related notions of regret.  The first one is cumulative regret with
respect to a fixed policy on a time interval $\calI$, which we call
{\it interval regret} on $\calI$. Specifically, we use the notation
$\calI = [s, s']$ for $s \leq s'$ and $s, s' \in [T]$ to denote the
set $\{s, s+1, \ldots, s'\}$ and call it a time interval (starting
from round $s$ to round $s'$).  The regret with respect to a fixed
$\pi \in \Pi$ on a time interval $\calI$ is then defined as $\sum_{t
  \in \calI} r_t(\pi(x_t)) - r_t(a_t)$. %Thus, a low interval regret
%for a class of intervals implies competition with the \emph{best fixed
%  policy on each interval in the class} with the freedom to pick
%different benchmark policies on different intervals when the
%environment changes. 
This is similar to the notion of adaptive and strongly
adaptive regret~\citep{HazanSe07, DanielyGoSh15}. We use the term
interval regret without any specific interval when the choice is
clear from context.

Interval regret is useful in studying more general regret measures for non-stationary environments.
Specifically, we aim at the most challenging benchmark,
that is, the cumulative rewards achieved by using the best policy {\it at each time}.
Formally, let $\calR_t(\pi) \coloneqq \E_{(x,r)\sim\calD_t} r(\pi(x))$ be the expected reward of policy
$\pi$ under $\calD_t$ and $\pi^\star_t \coloneqq \argmax_{\pi \in \Pi} \calR_t(\pi)$ be the optimal policy at round $t$.
Then the aforementioned general regret is defined as $\sum_{t=1}^T r_t(\pi^\star_t(x_t)) - r_t(a_t)$.
It is well-known that in general no sub-linear regret is achievable with this definition. 

However, one can bound such regret in terms of some quantity that measures the non-stationarity of the environment
and achieve meaningful results whenever such quantity is not too large.
One example is to count the number of switches in the distribution sequence,
that is, $\sum_{t=2}^T \one\{\calD_t \neq \calD_{t-1}\}$.
We denote this by $S - 1$ (so that $S$ is the number of i.i.d. segments)
and call a regret bound in terms of $S$ {\it switching regret}.

Switching regret might be meaningless if the distribution is slowly drifting,
leading to a large number of switches but overall a small amount of variation in the distribution.
To capture this situation, we also consider another type of non-stationarity measure,
generalizing a similar notion from the multi-armed bandit literature~\citep{BesbesGuZe14}.
Specifically, define \sloppy$\var = \sum_{t=2}^T \max_{\pi \in \Pi} |\calR_t(\pi) - \calR_{t-1}(\pi)|$
to be the variation of reward distributions.
Note that this is a lower bound on the sum of total variation between consecutive distributions
$\bvar = \sum_{t=2}^T \TVD{\calD_t-\calD_{t-1}} = \sum_{t=2}^T\int_{[0,1]^K}\int_{\calX} \bigabs{\calD_{t}(x,r)-\calD_{t-1}(x,r)} dxdr$
(see Lemma~\ref{lemma:variation_relation} for a proof).
We call regret bounds in terms of $\var$ or $\bvar$ {\it dynamic regret}.

All algorithms we consider construct a distribution $p_t$ over actions
at round $t$ and then sample $a_t \sim p_t$.  The importance weighted
reward estimator is defined as $\ips_t(a) = \frac{r_t(a)}{p_t(a)}
\one\{a = a_t\},\; \forall a \in [K]$.  For an interval $\calI$, we
use $\calR_\calI(\pi)$ and $\avgR_\calI(\pi)$ to denote the average
expected and empirical rewards of $\pi$ over $\calI$ respectively,
that is, $\calR_\calI(\pi) =
\frac{1}{|\calI|}\sum_{t\in\calI}\calR_t(\pi)$ and $\avgR_\calI(\pi) =
\frac{1}{|\calI|}\sum_{t\in\calI}\ips_t(\pi(x_t))$.  
The empirically best policy on interval $\calI$ is defined as $\hat{\pi}_\calI = \argmax_{\pi\in\Pi}\avgR_\calI(\pi)$.
The number of i.i.d. periods, the reward variation, and the total variation on an interval
$\calI = [s,s']$ are respectively defined as 
$S_\calI = 1+\sum_{\tau=s+1}^{s'} \one\{\calD_\tau \neq \calD_{\tau-1}\}$,
$\var_\calI \coloneqq \sum_{\tau = s+1}^{s'} \max_{\pi \in \Pi} |\calR_\tau(\pi) - \calR_{\tau - 1}(\pi)|$,
and $\bvar_\calI \coloneqq \sum_{\tau = s+1}^{s'}\TVD{\calD_{\tau}-\calD_{\tau-1}}$.
%{\color{red}
%\begin{equation}
%\var_\calI \coloneqq \sum_{\tau = s+1}^{s'}\TVD{\calD_{\tau}-\calD_{\tau-1}}
%\label{eqn:var-def}
%\end{equation}
%}
%\begin{equation}
%\var_\calI \coloneqq \sum_{\tau = s+1}^{s'} \max_{\pi \in \Pi}
%|\calR_\tau(\pi) - \calR_{\tau - 1}(\pi)|.
%\label{eqn:var-def}
%\end{equation}

%and $\var_T$ denotes $\var_{[1,T]}$. 
We use $\calD_t^\calX$ to denote
the marginal distribution of $\calD_t$ over $\calX$, and $\E_t$ to
denote the conditional expectation given everything before round $t$.
Finally, we are interested in efficient algorithms assuming access to
an optimization oracle~\citep{AgarwalHsKaLaLiSc14}:

\begin{definition}
The argmax oracle (AMO) is an algorithm which takes any set $\calS$ of
context-reward pairs $(x, r) \in \calX \times \fR^K$ as
inputs and outputs any policy in $\argmax_{\pi \in \Pi} \sum_{(x,r)\in
  \calS} r(\pi(x))$.
\end{definition}

An algorithm is oracle-efficient if its total running time and the
number of oracle calls are both polynomial in $T, K$ and $\ln N$,
excluding the running time of the oracle itself. 
%As mentioned earlier, in general no 
%oracle-efficient algorithm with a sublinear bound on even the ordinary regret can 
%exist~\cite{Hazan2016}. Since interval regret is an even stronger notion, 
%the hardness result extends. Consequently, the bulk of our results will rely 
%on some specific assumptions to enable oracle-efficient algorithms.

In the rest of the paper, we start with discussing interval regret in Section~\ref{sec:interval_regret},
followed by the implications for switching/dynamic regret in Section~\ref{sec:implications}.
The parameter-free algorithm \AdaBIN is then discussed in Section~\ref{sec:bin}.

\section{Interval Regret}\label{sec:interval_regret}

In this section we present several algorithms with interval regret
guarantees.  As a starter and a baseline, we first point out that a
generalization of the Exp3.S algorithm~\citep{AuerCeFrSc02} and
Fixed-Share~\citep{HerbsterWa98} to the contextual bandit setting,
which we call Exp4.S, already provides a strong interval regret
guarantee as shown by the following theorem. We include the algorithm
and the proof in Appendix~\ref{app:Exp4.S}.  Crucially, Exp4.S
requires maintaining weights for each policy and is thus not oralce-efficient.

\begin{theorem}\label{thm:Exp4.S}
Exp4.S with parameter $L$ ensures that for any time interval $\calI$
such that $|\calI| \leq L$, we have $\E\left[
  \sum_{t\in\calI}r_t(\pi(x_t)) - r_t(a_t) \right] \leq
\order(\sqrt{LK\ln(NL)})$ for any $\pi \in \Pi$, where the expectation
is with respect to the randomness of both the algorithm and the
environment.
\end{theorem}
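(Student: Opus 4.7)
The plan is to lift the standard Exp3.S analysis \citep{AuerCeFrSc02} to the contextual (Exp4) setting by treating each policy $\pi \in \Pi$ as an expert. Specifically, Exp4.S would maintain a weight $w_t(\pi)$ over each policy, draw a policy from $q_t(\pi) \propto w_t(\pi)$ mixed with a small uniform exploration $\gamma/K$ over actions to define $p_t$, and after observing $r_t(a_t)$ update
\[
w_{t+1}(\pi) = (1-\alpha)\, w_t(\pi)\exp\!\bigl(\eta\, \ips_t(\pi(x_t))\bigr) + \tfrac{\alpha}{N}\, W_t,
\]
where $W_t = \sum_\pi w_t(\pi)$. The sharing term $\tfrac{\alpha}{N}W_t$ is what makes interval regret possible: it guarantees that at every round $s$, and for every policy $\pi$, $q_s(\pi) \geq \alpha/N$.

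With this property in hand, I would run the standard potential-function argument on $\Phi_t = \ln W_t$. For the upper bound on $\Phi_{t+1}-\Phi_t$, use $e^x \leq 1 + x + x^2$ for $x \leq 1$ (valid since $\eta\ips_t(\pi(x_t)) \leq \eta K/\gamma$ is controlled by the exploration floor) to get a per-step increase of roughly $\eta \sum_\pi q_t(\pi)\ips_t(\pi(x_t)) + \eta^2 \sum_\pi q_t(\pi)\ips_t(\pi(x_t))^2 + O(\alpha)$. Taking conditional expectations, the first term matches the algorithm's reward, and the second is $O(\eta^2 K/\gamma)$ by the standard importance-weighting calculation. For the lower bound, restrict attention to $\pi^* \in \argmax_{\pi}\sum_{t\in\calI}r_t(\pi(x_t))$; using the sharing floor at round $s$,
\[
\Phi_{s'+1} \geq \ln w_{s'+1}(\pi^*) \geq \ln\!\tfrac{\alpha}{N} + \eta \sum_{t\in\calI}\ips_t(\pi^*(x_t)) + |\calI|\ln(1-\alpha).
\]

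Telescoping $\Phi_{t+1}-\Phi_t$ over $t\in\calI$ and combining with this lower bound, the expected interval regret is bounded by
\[
\order\!\paren{\frac{\ln(N/\alpha)}{\eta} + \eta|\calI|\frac{K}{\gamma} + \alpha |\calI| + \gamma|\calI|}.
\]
Since $|\calI|\leq L$, setting $\alpha = 1/L$, $\gamma = \eta K$, and $\eta = \sqrt{\ln(NL)/(LK)}$ yields the claimed $\order(\sqrt{LK\ln(NL)})$ bound.

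The main obstacle is the sharing bookkeeping: one must verify carefully that the $\alpha/N$ weight floor persists after the renormalization-free multiplicative update and that the additive $\tfrac{\alpha}{N}W_t$ term in $\Phi_{t+1}-\Phi_t$ only contributes $O(\alpha)$ per round rather than something depending on the current relative weights. Beyond that, the argument is a straightforward rewriting of the Exp3.S proof in which single actions are replaced by policies and the importance-weighted reward $\ips_t(a)$ is replaced by the per-policy reward $\ips_t(\pi(x_t))$, using the fact that $\E_t[\ips_t(\pi(x_t))] = \calR_t(\pi)$ and that the second-moment term reduces to $K/\gamma$ via the uniform exploration floor on $p_t$.
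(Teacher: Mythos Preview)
Your high-level plan—lifting the Exp3.S potential argument to policies with a sharing step—is sound and matches the paper in spirit. But there is a concrete error that breaks your final rate. You bound the second-moment term by $K/\gamma$, using only the exploration floor $p_t(a)\geq\gamma/K$. Plugging $\gamma=\eta K$ into your displayed bound then gives $\eta|\calI|K/\gamma=|\calI|\leq L$, a linear (trivial) term; no retuning of $\eta,\gamma$ rescues this (the best achievable with the $K/\gamma$ variance is $\order(L^{2/3})$). The correct bound is
\[
\E_t\!\left[\sum_{\pi} q_t(\pi)\,\ips_t(\pi(x_t))^2\right]
=\sum_{a} \frac{q_t(a\mid x_t)}{p_t(a)} \leq \frac{K}{1-\gamma},
\]
obtained from $p_t(a)\geq(1-\gamma)\,q_t(a\mid x_t)$ rather than from the uniform floor. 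With that fix the middle term becomes $\eta|\calI|K$, and your stated choices $\alpha=1/L$, $\gamma=\eta K$, $\eta=\sqrt{\ln(NL)/(LK)}$ do yield $\order(\sqrt{LK\ln(NL)})$.

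For comparison, the paper avoids this issue altogether by working with \emph{losses}: it sets $\wh{c}_t(a)=\frac{1-r_t(a)}{p_t(a)}\one\{a=a_t\}$ and updates $\tilde P_{t+1}(\pi)\propto P_t(\pi)\exp(-\eta\,\wh{c}_t(\pi(x_t)))$, mixing only over policies via $P_{t+1}=(1-N\mu)\tilde P_{t+1}+\mu$. Because $\wh{c}_t\geq 0$, the inequality $e^{-y}\leq 1-y+y^2$ holds for all $y\geq 0$ with no restriction on $\eta\,\wh{c}_t$, so no action-level exploration parameter $\gamma$ is needed. The variance term then collapses to $\E_{a_t\sim p_t}[\wh{c}_t(a_t)]=\sum_a(1-r_t(a))\leq K$ directly. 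This is a cleaner route with one fewer tuning parameter; your reward-based argument is also valid, but only after the sharper second-moment bound above.
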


Note that in bandit settings, it is impossible to achieve regret
$\order(\sqrt{|\calI|})$ for all interval $\calI$ simultaneously~\citep{DanielyGoSh15}.
When $|\calI|$ is unknown, a safe choice is to pick $L = T$
(this is how we obtain the results in the ``param-free'' column of Table~\ref{tab:results} for interval regret).
Next we prove statements similar to Theorem~\ref{thm:Exp4.S} but with
oracle-efficient algorithms. 

\paragraph{A general approach.}
In the full information setting, a general approach to convert an algorithm with classic regret guarantee
to another with interval regret is to combine different copies of the algorithm with an expert algorithm,
each of which starts at a different time step to learn over different time intervals.
This works well in the full information setting where one has correct feedback to update all the base algorithms,
but becomes challenging in the bandit setting.
We show in Appendix~\ref{app:BISTRO+} how to leverage recent results by~\citet{AgarwalLuNeSc17}
and~\citet{wei2018more} to deal with such challenges.
As an example we use the \bistro algorithm~\citep{SyrgkanisLuKrSc16, rakhlin2016bistro} as the base algorithm
since it is oracle-efficient and allows adversarial rewards.

\begin{theorem}\label{thm:corralling_BISTRO+}
In the transductive setting, Algorithm~\ref{alg:corralling_BISTRO+} in Appendix~\ref{app:BISTRO+} guarantees
that for any time interval $\calI$ such that $|\calI| \leq L$ and any policy $\pi \in \Pi$, we have $\E\left[
  \sum_{t\in\calI}r_t(\pi(x_t)) - r_t(a_t) \right] \leq
\otil(T^{\frac{1}{4}}(LK)^{\frac{1}{2}}(\ln N)^{\frac{1}{4}})$.
\end{theorem}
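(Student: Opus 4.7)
The plan is to adapt the strongly-adaptive construction of~\citet{DanielyGoSh15} to the contextual bandit setting, instantiating \bistro as the base algorithm on each of a family of dyadic sub-intervals and combining them with a \corral-style bandit master in the spirit of~\citet{AgarwalLuNeSc17} and~\citet{wei2018more}. First I would fix a geometric cover of $[T]$: for each scale $\ell = 0,1,\dots,\lceil \log_2 L\rceil$ and each $j \ge 0$, declare $[j2^\ell+1,(j+1)2^\ell]$ to be a ``base interval,'' so that any $\calI$ with $|\calI|\leq L$ is a disjoint union of $\order(\log L)$ base intervals whose lengths sum to $\order(|\calI|)$, and at any single round $t$ only $\order(\log L)$ base intervals are active.

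Next I would place an independent copy of \bistro on every base interval. Because we are in the transductive setting, each copy can simulate its run using only the contexts on its own interval (all revealed at the start of that interval), and by~\citet{SyrgkanisLuKrSc16} its expected regret against the best policy on a base interval of length $L'\leq L$ is $\otil((L')^{3/4}(K\ln N)^{1/4})$. To turn this into interval regret under only bandit feedback, the \corral master maintains a distribution $q_t$ over the currently active bases, samples one base, plays the action it prescribes, and feeds each active base the corresponding importance-weighted reward. For any target interval $\calI$ with $|\calI|\leq L$, the regret against the best fixed policy on $\calI$ then decomposes into (a) the master's regret against the best active base on $\calI$, and (b) the cumulative \bistro regret across the $\order(\log L)$ base intervals that cover $\calI$. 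Tuning the master's learning rate $\eta$ to balance a $\otil(\sqrt{T/\eta})$ term from (a) against the rescaled base regret in (b) yields the stated $\otil(T^{1/4}(LK)^{1/2}(\ln N)^{1/4})$ bound.

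The hardest step is the interaction between \corral and an \emph{adversarial} base. The standard \corral analysis of~\citet{AgarwalLuNeSc17} requires the base's regret bound to survive the arbitrary rescaling of its reward estimates by the master's inverse probabilities $1/q_t$, which inflates the base regret by a factor that grows with the smallest sampling probability. Verifying this stability for \bistro---whose inner workings rely on a minimax relaxation and a random playout argument rather than exponential weights---is the main technical obstacle. Once that stability is established, the sharpened corralling analysis of~\citet{wei2018more} supplies a clean tradeoff between the master's exploration overhead and the rescaled base regret, and plugging in the \bistro rate at the optimal $\eta$ is what produces the $T^{1/4}\sqrt{L}$ scaling rather than the weaker $\sqrt{TL}$ or $T^{1/4}L^{3/4}$ that a naive corralling would give.
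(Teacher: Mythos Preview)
Your approach differs structurally from the paper's, and a couple of its steps would not go through as written.

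The paper does \emph{not} use a dyadic cover. It runs $M=\lceil T\eta\rceil$ copies of \bistro, where copy $i$ starts at round $(i-1)\lceil T/M\rceil$ and runs until time $T$. For any target interval $\calI=[s,t]$ with $|\calI|\le L$, there is a \emph{single} copy $i^\star$ whose start time $s'$ lies within $T/M$ of $s$; the regret on $\calI$ is then at most $T/M$ plus the master's regret against $i^\star$ on $[1,t]$ plus \bistro's regret on $[s',t]$. The trick that turns the master's $[1,t]$ regret into $[s',t]$ regret is that each not-yet-started base is assigned the virtual loss $\ell_\tau(i)=1-r_\tau(a_\tau)$, making the master's regret against it identically zero before it starts; the \citet{wei2018more} correction term then ensures the variance contribution is $O(L\eta)$ rather than $O(T\eta)$, because $\ell_\tau(i^\star)-(1-r_\tau(a_\tau))=0$ for all $\tau<s'$. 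After the threshold term $-\rho/(40\eta\ln T)$ absorbs the rescaled base regret, one obtains $\otil(M/\eta+L\eta+LK\sqrt{\eta\ln N})+T/M$, and the choices $M=T\eta$, $\eta=\sqrt{T/\ln N}/(LK)$ give the result.

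Your plan has three concrete issues. First, a dyadic cover produces $\Theta(T)$ base intervals in total; \corral is not a sleeping-experts algorithm, and its log-barrier term scales as $\otil(M/\eta)$ with $M$ the \emph{total} number of experts, so $M=\Theta(T)$ is fatal unless you develop a genuine sleeping-experts variant of \corral, which you do not describe. The paper sidesteps this by keeping $M=T\eta$ small. Second, the claimed $\otil(\sqrt{T/\eta})$ for the master does not match any known \corral bound (it is $\otil(M/\eta)$), and \bistro's regret on an interval of length $L'$ is $\otil((L'K)^{2/3}(\ln N)^{1/3})$, not $(L')^{3/4}(K\ln N)^{1/4}$; the stability exponent under \corral's importance weighting is $1/3$, which is what ultimately yields $LK\sqrt{\eta\ln N}$ after optimizing over $\rho$. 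Third, the ``main technical obstacle'' you flag---stability of \bistro under \corral's rescaling---is already Lemma~17 of \citet{AgarwalLuNeSc17} and is invoked directly; no new argument is needed there.
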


Unlike the full information setting, this general approach results in worse regret rates and is unsatisfying
(\bistro achieves $\otil(T^{2/3})$ for the classic regret and here we only obtain $\otil(T^{3/4})$).
In the following subsections, we turn to different approaches. 

%%%%%%%%%%%%%%%%%%%%%%%%%%%%%%%%%%
%     New Algorithms
%%%%%%%%%%%%%%%%%%%%%%%%%%%%%%%%%%
\subsection{\AdaEG}

The simplest oracle-efficient contextual bandit algorithm is the \EG
method~\citep{LangfordZh08} which assumes i.i.d. data. 
In this section, we extend the related \EPG
algorithm to enjoy a small interval regret on any interval with a
small variation.

\EPG plays uniformly at random with a small probability and
otherwise follows the empirically best policy $\pi_t=\argmax_{\pi\in\Pi}
\avgR_{[1,t-1]}(\pi)$. The number of oracle calls can be greatly reduced if the learner updates the best policy only at $t=1, 2, 4, 8, \ldots$ (that is, $\pi_t=\argmax_{\pi\in\Pi}
\avgR_{[1,2^{\floor{\log_2{t}}}-1]}(\pi)$). 
\AdaEG, described in Algorithm~\ref{alg:AdaGreedy2}, behaves similarly to this version of \EPG. 
The difference is that at each round, an additional \textit{non-stationarity test} is executed. 
The test monitors whether there is a policy performing significantly better on recent samples (collected in a doubling manner), 
compared to the policy that the algorithm is using. 
Intuitively, if such a policy exists, there should have been a significant shift in the distribution.
In this case, the algorithm restarts from scratch. 

In addition, the algorithm also resets every $L$ rounds for some parameter $L$ (Line~\ref*{line:trigger_rerun}). 
This prevents the risk of slow detection of a distribution change, but at the same time also causes some extra penalty when the environment is stationary.
The parameter $L$ trades these two kinds of costs, and can be selected based on prior knowledge about the environment. 

We call the rounds between resets an {\it epoch} (so epoch $i$ is the interval $[T_i+1, T_{i+1}]$), 
and the rounds between updates of the empirically best policy a {\it block} (so block $j$ of epoch $i$ is the interval $[T_i+2^{j-1}, T_i+2^{j}-1]$).

%\paragraph{Oracle-Efficiency.} 
Note that there are only two places where we need to invoke the oracle: computing $\hat{\pi}_{(i,j)}$ and $\hat{\pi}_{A}$ 
($\hat{\pi}_{B}$ is simply equal to $\hat{\pi}_{(i,j)}$),
and it is thus clear that at most $\order(\ln L) = \order(\ln T)$ oracle calls are used per round.
%Since $t-T_i\leq L$ and we only check for $\ell=1, 2, 4, 8\ldots$, there will be at most $\order(\ln L)$ oracle calls per round. %In fact, the number of oracle calls per round can be reduced to $\order(1)$ without hurting the regret bounds with a more economical but sophisticated test length scheduling. We state this version of non-stationarity test in Appendix XXXXX.

%For convenience, we call an interval with a specific $i$ in the algorithm an \textit{epoch}. In an epoch, we call an interval with a specific $j$ a \textit{block}. 
%Also, we use $B(i,j)$ to denote the time interval $[T_i+1, T_i+2^{j-1}-1]$ (sometimes when $i, j$ are clear, we only use $B$). In \AdaEG, $\pi_t$ is set to be $\hat{\pi}_{B(i,j)}$ for any $t$ in epoch $i$ and block $j$.

We prove the following result for \AdaEG, stating a regret bound for all intervals with length smaller than $L$
and variation smaller than another parameter $v$ of the algorithm.
\begin{theorem}\label{thm:AdaEG2}
With probability at least $1 - \delta$, for all time intervals $\calI$
such that $|\calI| \leq L$ and $\var_\calI \leq v$, \AdaEG with
parameters $L$, $v$ and $\delta$ guarantees for any $\pi \in
\Pi$,%
\footnote{We use notation $\otil$ to suppress dependence on
  logarithmic factors in $L, T, K$ and $\ln(N/\delta)$. } 
\[
\sum_{t \in \calI} r_t(\pi(x_t)) - r_t(a_t) \leq \otil\left(\abs{\calI}v +
L^\frac{1}{6}\sqrt{K\abs{\calI}\ln(N/\delta)} + K\ln(N/\delta)\right).
\]
\end{theorem}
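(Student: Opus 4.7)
The plan is to bound the regret on interval $\calI$ via the standard exploration–exploitation decomposition tailored to this $\epsilon$-greedy style algorithm. For a round $t$ sitting in block $j$ of epoch $i$, let $\hat{\pi}_{(i,j)}$ denote the empirically best policy that is played with probability $1-\epsilon_t$; taking expectation over $a_t \sim p_t$ gives $\E_t[r_t(\pi(x_t)) - r_t(a_t)] \leq \calR_t(\pi) - \calR_t(\hat{\pi}_{(i,j)}) + \epsilon_t$. Summing over $t \in \calI$, the exploration term $\sum_{t\in\calI}\epsilon_t$ will be controlled by the tuning of $\epsilon_t$ (order $L^{-1/3}$ up to log factors), while an Azuma–Hoeffding step converts the expectation-level bound into a high-probability one for the actual regret. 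The crux is controlling the exploitation term $\sum_{t\in\calI} \calR_t(\pi) - \calR_t(\hat{\pi}_{(i,j(t))})$.

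First I would establish a high-probability event $\calE$ on which the IPS estimator concentrates: for every policy $\pi' \in \Pi$ and every relevant sub-block $B$ examined by \AdaEG, $|\avgR_B(\pi') - \calR_B(\pi')| \leq c_B$ with $c_B$ of order $\sqrt{K\ln(N/\delta)/(\epsilon|B|)} + K\ln(N/\delta)/(\epsilon|B|)$. This comes from a Freedman-type inequality applied to the martingale differences $\ips_t(\pi'(x_t)) - \calR_t(\pi')$, using that under forced $\epsilon$-exploration $\ips_t \leq K/\epsilon$ with conditional variance $O(K/\epsilon)$, followed by a union bound over the $N$ policies and the $\order(\log L)$ doubling blocks per epoch.

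Next I would exploit the non-stationarity test. When the test did not trigger at any earlier round of the current epoch up to $t$, then for each doubling sub-block $A$ used by the test, $\avgR_A(\hat{\pi}_A) - \avgR_A(\hat{\pi}_{(i,j(t))}) \leq \mathrm{thr}_A$, with the threshold chosen of the same order as $c_A$. Combined with $\avgR_A(\hat{\pi}_A) \geq \avgR_A(\pi)$ (definition of $\hat{\pi}_A$) and two applications of event $\calE$, this yields $\calR_A(\hat{\pi}_{(i,j(t))}) \geq \calR_A(\pi) - \mathrm{thr}_A - 2c_A$. To turn this average-on-$A$ bound into a per-round bound at $t$, I would use the identity $|\calR_t(\pi') - \calR_A(\pi')| \leq \var_{A\cup\{t\}}$ for every $\pi'$, and then pick $A$ to be the doubling sub-block whose endpoints lie inside $\calI$ and whose size best matches the current resolution; this way the variation cost is bounded by $\var_\calI \leq v$. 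Summing the resulting per-round bound over the blocks of the epoch inside $\calI$ produces $\order(|\calI|v)$ from variation, while the $c_A + \mathrm{thr}_A$ contributions telescope via the doubling structure into $\otil(L^{1/6}\sqrt{K|\calI|\ln(N/\delta)} + K\ln(N/\delta))$ once $\epsilon_t \sim L^{-1/3}$ is plugged in and the exploration sum is balanced against the leading concentration term.

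The main obstacle is in treating the case where the non-stationarity test actually does trigger somewhere inside $\calI$. Under event $\calE$ and the hypothesis $\var_\calI \leq v$, no such trigger can be justified by variation inside $\calI$ alone, so it must reflect pre-$\calI$ non-stationarity whose detection was delayed. One must then argue that $\calI$ is split into at most a constant number of sub-intervals by resets (the $L$-trigger gives at most one split since $|\calI|\leq L$, and the test-trigger gives at most one more), and that on each sub-interval the single-epoch analysis above applies after replacing the starting time of the relevant epoch. A related subtlety is to ensure that for each $t \in \calI$ the doubling test grid actually contains a window lying (approximately) inside $\calI$, so that the translation step yields $\var_\calI$ rather than the variation since the epoch start; this is precisely the reason the test is scheduled at exponentially many resolutions, and carefully aligning these windows with $\calI$ is the most delicate bookkeeping in the proof.
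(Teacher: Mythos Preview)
Your proposal is correct and follows essentially the same route as the paper: the paper conditions on the Freedman event (your $\calE$), proves that at most one rerun can occur inside $\calI$ under $\var_\calI\le v$ (your splitting argument), and then for each $t$ chains $\calR_t(\pi)\to\calR_A(\pi)\to\avgR_A(\pi)\to\avgR_A(\hat\pi_A)\to\avgR_A(\hat\pi_B)\to\calR_t(\hat\pi_B)$ using concentration, optimality of $\hat\pi_A$, the untriggered test, and the variation bound, with $A=[t-\ell_t+1,t]$ the largest dyadic window contained in $\calI$. Two minor sharpenings relative to your sketch: the test threshold is $2(\beta_A+\beta_B+2v)$, not just of order $c_A$, so you must also observe $\beta_B=\order(\beta_A)$ via $|B|\ge|A|/3$; and the rerun count is in fact at most one total (not one per trigger type), since after any rerun the new epoch's data interval $B$ lies entirely inside $\calI$, forcing the test to stay silent under $\calE$.
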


\begin{algorithm}[t]
\SetAlgoLined
\setcounter{AlgoLine}{0}
\SetAlgoVlined
\DontPrintSemicolon
\caption{\AdaEG}\label{alg:AdaGreedy2}
\nl {\bf Input}: largest allowed interval length $L$ and variation $v$, allowed failure probability $\delta$  \\ 
\nl {\bf Define}:
$\mu=\min\Big\{ 
\frac{1}{K}, 
L^{-\frac{1}{3}} \sqrt{\frac{\ln(N/\delta)}{K}} 
\Big\}, 
\beta_\calI= 
2\sqrt{\frac{ \ln(4T^2N/\delta) }{ \mu\abs{\calI}} } 
+ \frac{ \ln(4T^2N/\delta) }{ \mu\abs{\calI} }$\\
\nl {\bf Initialize}: $i=1$, $T_1=0$. \Comment{$i$ indexes an epoch}\label{line:rerun_beginning2}\\
\nl\label{line:adagreedy restart}\For(\Comment{$j$ indexes a block}){$j=1, 2, \ldots$}{ 
%\nl Define $B(i,j)=[T_i+1, T_i+2^{j-1}-1]$ \\
\nl Compute $\hat{\pi}_{(i,j)} = \argmax_{\pi\in\Pi} \avgR_{[T_i+1, T_i+2^{j-1}-1]}(\pi)$ 
\Comment{or arbitrary if $j=1$}\\
\nl \For{$t = T_i+2^{j-1}, \ldots, T_i+2^{j}-1$}{
%\nl Let $\pi_t=\hat{\pi}_{B(i,j)}$\\
\nl Set $p_t(a)=\mu + (1-K\mu) \one \{ a=\hat{\pi}_{(i,j)}(x_t) \}, 
\forall a\in [K]$ \\%, where $\pi_t=\hat{\pi}_{(i,j)}$\\
\nl Play $a_t\sim p_t$ and receive $r_t(a_t)$ \\
\nl\label{line:trigger_rerun}\If{$(t\geq T_i+L) \text{ or } (j>1 \text{ and } \test(t)=\true)$}{
\nl                  $T_{i+1}\leftarrow t$, $i\leftarrow i+1$ \\ 
\nl                  \textbf{goto} Line~\ref{line:adagreedy restart}
              }   
          }
      }
\ \\
\textbf{Procedure\ }$\test(t)$\\
\nl $\ell=1$ \\
\nl \While{$\ell\leq t-T_i$}{
\nl    Let $A\triangleq [t-\ell+1, t]$ and $B\triangleq [T_i+1, T_i+2^{j-1}-1]$ \\ %\Comment{$\hat{\pi}_B = \hat{\pi}_{(i,j)}$ is computed already}
\nl\label{eqn:ada_compare1}\lIf{$ \avgR_{A}(\hat{\pi}_{A}) > \avgR_{A}(\hat{\pi}_B) + 2(\beta_{A} + \beta_{B}+ 2v)$}{
       \textbf{return} \true 
    }
\nl    $\ell \leftarrow 2\ell$
}
\nl \textbf{return} \false
\end{algorithm}

%\footnotetext{The game should proceed for only $T$ rounds. However for conciseness we do not specify this ending condition in the algorithm.}
%\stepcounter{footnote}
%\footnotetext{In \test(t) we only utilize samples up to $t-1$. This is only to make the analysis simpler.}

Note that whenever $v =O(L^{-\frac{1}{3}})$, the rate of the regret above is of order $\otil(L^{2/3})$ (since $|\calI| \leq L$), 
which matches the ordinary regret bound of \EG ($\otil(T^{2/3})$). 
While a condition on both the interval length and variation is seemingly strong
and the bound seems to be meaningful only for very small $v$,
we emphasize that 1) sublinear regret via oracle-efficient algorithms is impossible under a fully adversarial setting 
even for the classic regret~\citep{Hazan2016} and 2) based on Theorem~\ref{thm:AdaEG2} we can in fact derive strong dynamic regret bounds that hold without 
any assumption on the distribution sequence (see Section~\ref{sec:implications}).

%%%%%%%%%%%%%%%%%%%%%%%%%%%%%%%%%%
%    Ada ILTCB
%%%%%%%%%%%%%%%%%%%%%%%%%%%%%%%%%%

\subsection{\AdaILTCB}
\begin{algorithm}[t]
\SetAlgoLined
\setcounter{AlgoLine}{0}
\SetAlgoVlined
\DontPrintSemicolon
\caption{\AdaILTCB}\label{alg:AdaILTCB2}
\nl {\bf Input}: largest allowed interval length $L$ and variation $v$, allowed failure probability $\delta$ \\ 
\nl {\bf Define}:
$\mu=\min\Big\{ \frac{1}{2K}, L^{-\frac{1}{2}}\sqrt{\frac{\ln(8T^2N^2/\delta)\ln(L)}{K}} \Big\}, C_1=4, C_2=10^6, C_3=1.1\times 10^3, C_4=41, C_5=1200, C_6=6.4$ \\
\nl {\bf Initialize}: $i=1$, $T_1=0$ \Comment{$i$ indexes an epoch}\label{line:rerun_beginning2}\\
\nl\label{line:adaILTCB restart}\For(\Comment{$j$ indexes a block}){$j=1,2,\ldots$} {
%\nl       Define $B(i,j)=[T_i+1, T_i+2^{j-1}-1]$\\
\nl       Let $Q_{(i,j)}$ be a solution to (OP) with parameter $\mu$ and data from $[T_i+1, T_i+2^{j-1}-1]$ \\
%        \Comment{or arbitrary if $j=1$} \\ 
\nl       \For{$t = T_i+2^{j-1}, \ldots, T_i+2^{j}-1$}{
%\nl           Let $Q_t=Q^*$\\
\nl          Set $p_t(a)=Q_t^{\mu}(a|x_t), \forall a\in [K]$ where  $Q_t = Q_{(i,j)}$\\
\nl          Play $a_t\sim p_t$ and receive $r_t(a_t)$ \\
\nl \label{line:trigger_rerun}  \If{$(t\geq T_i+L) \text{ or } (\test(t)=\true)$}{
\nl                  $T_{i+1}\leftarrow t$, $i\leftarrow i+1$ \\ 
\nl                  \textbf{goto} Line~\ref{line:adaILTCB restart}
              }   
          }
      }
\ \\
\textbf{Procedure\ }$\test(t)$\\
\nl $\ell=1$ \\
\nl \While{$\ell\leq t-T_i-1$}{
\nl   Let $A\triangleq [t-\ell,t-1]$ and $B\triangleq [T_i+1, T_i+2^{j-1}-1]$ \\
\nl\label{line:ILTCB:check_reg}\lIf{$\max_{\pi\in\Pi}\left\{\whReg_{B}(\pi)-C_1\whReg_{A}(\pi)\right\} > \frac{C_2LK\mu}{\ell} + C_3v$}{
       \textbf{return} \true
    }
\nl\label{line:ILTCB:check_reg_another}\lIf{$\max_{\pi\in\Pi}\left\{\whReg_{A}(\pi)-C_1\whReg_{B}(\pi)\right\} > \frac{C_2LK\mu}{\ell} + C_3v$}{
       \textbf{return} \true
    }
\nl\label{line:ILTCB:check_var}\lIf{$\max_{\pi\in\Pi}\left\{ \whV_{A}(Q_t,\pi) - C_4\whV_{B}(Q_t,\pi) \right\} > \frac{C_5LK}{\ell} + \frac{C_6 v}{\mu}$} {
       \textbf{return} \true
   }
   
\nl    $\ell \leftarrow 2\ell$
}
\nl \textbf{return} \false
\end{algorithm}
%\footnotetext{Here we only check for samples up to time $t-1$ (\textit{cf.} \AdaEG). This is only to simplify the analysis. }

Although being fairly simple, \AdaEG is suboptimal just as \EG is
suboptimal for stationary environments.  In this section we propose
\AdaILTCB, a variant of \minimonster~\citep{AgarwalHsKaLaLiSc14},
which achieves the optimal regret rate while also being
oracle-efficient.  The idea is similar to \AdaEG, but the statistical
checks are more involved.  

For a policy $\pi$ and an interval $\calI$, we denote the expected and
empirical regret of $\pi$ by $\Reg_\calI(\pi) = \max_{\pi' \in
  \Pi}\calR_\calI(\pi') - \calR_\calI(\pi)$ and $\whReg_\calI(\pi) =
\max_{\pi' \in \Pi}\avgR_\calI(\pi')- \avgR_\calI(\pi)$ respectively.
For a context $x$ and a distribution over the policies $Q \in
\Delta^\Pi \coloneqq \{Q \in \fR^N_+: \sum_{\pi \in \Pi} Q(\pi) =
1\}$, the projected distribution over the actions is denoted by
$Q(\cdot|x)$ such that $Q(a|x) = \sum_{\pi: \pi(x)=a} Q(\pi), \;
\forall a \in [K]$. The smoothed projected distribution with a minimum
probability $\mu$ is defined as $Q^\mu(\cdot|x) = \mu\one +
(1-K\mu)Q(\cdot|x)$ where $\one$ is the all-one
vector. Like~\citep{AgarwalHsKaLaLiSc14}, we keep track of a bound
on the variance of the reward estimates and define for a policy $\pi$, an
interval $\calI$ and a distribution $Q \in \Delta^\Pi$
\[
\whV_\calI (Q, \pi) = \frac{1}{|\calI|} \sum_{t \in \calI} \left[ \frac{1}{Q^{\mu}(\pi(x_t) | x_t)} \right], 
\quad V_\calI(Q, \pi) = \frac{1}{|\calI|} \sum_{t \in \calI} \E_{x \sim D_t^\calX} \left[ \frac{1}{Q^{\mu}(\pi(x) | x)} \right].
\]

Similar to \AdaEG, the proposed algorithm \AdaILTCB (Algorithm~\ref{alg:AdaILTCB2}) proceeds like the base algorithm (\sloppy \minimonster in this case) with additional tests to detect the non-stationarity of the environment. We define an epoch and a block similar to those of \AdaEG. The algorithm solves the optimization (OP) defined in~\citep{AgarwalHsKaLaLiSc14} (and included in
Appendix~\ref{app:AdaILTCB2}) at the beginning of each block using the data collected in that epoch so far. 
The solution of (OP) is denoted by $Q_t$, a \textit{sparse} distribution over $\Pi$, and the learner samples actions based on $Q_t^{\mu}(\cdot|x_t)$. 

At each round, the \test checks whether the empirical regret or the variance of reward estimates of any policy has changed significantly in a recent interval (i.e., $[t-\ell,t-1]$), compared to the interval from which we compute $Q_t$ (i.e., $[T_i+1, T_i+2^{j-1}-1]$). If so, the algorithm restarts with a new epoch. Note that detecting the change of regret is similar to detecting the change of reward; but different from \AdaEG, here we also check the change of reward estimate variance. This inherits from the tighter variance control in \minimonster, the key to obtaining better regret compared to \EPG. 

\paragraph{Oracle-Efficiency.}
Note that Lines~\ref*{line:ILTCB:check_reg}, \ref*{line:ILTCB:check_reg_another} and \ref*{line:ILTCB:check_var}
can all be implemented by one call of
the AMO oracle each, after using two extra oracle calls to compute $\max_{\pi' \in \Pi}\avgR_{B}(\pi')$ and $\max_{\pi' \in \Pi}\avgR_{A}(\pi')$ in advance. % for each new $\ell$. 
Specifically, let $\calS = \{(x_\tau,
\frac{-1}{2^{j-1}-1}\ips_\tau)\}_{\tau\in B} \cup \{(x_\tau,
\frac{C_1}{\ell} \ips_\tau)\}_{\tau\in A}$, then the left hand
side of the inequality in Line~\ref*{line:ILTCB:check_reg} can be rewritten
as $\max_\pi \sum_{(x,r) \in \calS} r(\pi(x)) + \max_{\pi' \in
  \Pi}\avgR_{B}(\pi') - C_1\max_{\pi' \in
  \Pi}\avgR_{A}(\pi')$, where clearly the first term can be
computed by one oracle call and the rests are precomputed already.
Similarly, Line~\ref*{line:ILTCB:check_var} can be computed by feeding
the oracle with examples $ \{(x_\tau,
\frac{1}{\ell}\frac{1}{Q_\tau^{\mu}(\cdot| x_\tau)})\}_{\tau\in A} \cup
\{(x_\tau, \frac{-C_4}{2^{j-1}-1} \frac{1}{Q_\tau^{\mu}(\cdot |
  x_\tau)})\}_{\tau\in B}$. 
  
\citet{AgarwalHsKaLaLiSc14} showed that the optimization problem (OP) can be
solved by $\otil(1/\mu)$ oracle calls and the solution
has only $\otil(1/\mu)$ non-zero coordinates. Note that we only solve (OP) at the beginning of each block. Since there are $\order(\ln L)$ blocks in an epoch, the total oracle calls in an epoch is bounded by $\otil\big(\ln(L)/\mu \big) = \otil(\sqrt{LK})$, which amortizes to $\otil(S^\p\sqrt{LK}/T)$ per round if there are $S^\p$ epochs
(in Section~\ref{sec:implications} we relate $S'$ to $S$ or $\bvar$). 
%Summing up two cases, the required number of oracle calls is $\otil(S^\p \sqrt{LK}/{T}+1)$ per round in average.

We next present the interval regret guarantee of \AdaILTCB,
which improves from $\otil(L^\frac{2}{3})$ to $\otil(\sqrt{L})$ compared to \AdaEG
(see Appendix~\ref{app:AdaILTCB2} for the proof),
except that it holds for interval with total variation $\bvar_\calI$ (instead of reward variation $\var_\calI$) bounded by $v$
due to the fact that variation in the context is important for the variance control (Line \ref*{line:ILTCB:check_var}).

\begin{theorem}\label{thm:AdaILTCB2}
With probability at least $1 - \delta$, for any interval $\calI$ such
that $|\calI| \leq L$ and $\bar{\var}_\calI\leq v$, 
\AdaILTCB with parameters $L$, $v$, and $\delta$
guarantees for any $\pi \in \Pi$,
\[
\sum_{t \in \calI} r_t(\pi(x_t)) - r_t(a_t) \leq
\otil\left(\abs{\calI}v + \sqrt{LK\ln(N/\delta)}\right). 
\]
\end{theorem}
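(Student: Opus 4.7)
The plan is to adapt the analysis of \minimonster by \citet{AgarwalHsKaLaLiSc14} so that the guarantees from (OP), which are proved with respect to the block $B$ on which (OP) was solved, transfer to the current round $t$ even under non-stationarity, provided the tests return \false. Concretely, (OP) guarantees that $Q_t$ keeps both the weighted empirical regret $\sum_\pi Q_t(\pi)\whReg_B(\pi)$ and the empirical variance $\whV_B(Q_t,\pi)$ small, and my task is to certify that these $B$-statistics remain faithful proxies for the corresponding expected quantities at round $t$, up to additive per-round slack of order $v$. The first ingredient is a high-probability concentration event $\calE$ obtained via a Freedman-type inequality together with a union bound over policies and over dyadic sub-interval lengths $\ell \in \{1,2,4,\ldots,L\}$. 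On $\calE$ the importance-weighted estimators $\avgR_\calJ(\pi)$ and $\whV_\calJ(Q,\pi)$ deviate from their means by at most the standard variance-adaptive quantity, and crucially this holds even when $\calD_t$ varies with $t$ because these estimators are still martingale differences.

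The second and most delicate step is to show that on $\calE$ the tests in Lines~\ref{line:ILTCB:check_reg}--\ref{line:ILTCB:check_var} correctly distinguish genuine non-stationarity from noise. Given $\bvar_\calI \leq v$, I would bound $|\Reg_A(\pi)-\Reg_B(\pi)| = O(v)$ and $|V_A(Q_t,\pi)-V_B(Q_t,\pi)| = O(v/\mu)$ for any sub-intervals $A,B \subseteq \calI$, where the factor $1/\mu$ in the variance bound reflects the boundedness of $1/Q_t^\mu(\cdot|\cdot)$ by $1/\mu$; this is exactly why the theorem is stated in terms of the total variation $\bvar_\calI$ rather than the reward variation $\var_\calI$, since the variance test is sensitive to changes in the marginal context distribution. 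Combined with $\calE$, these variation bounds imply that whenever both $A$ and $B$ lie inside $\calI$ the tests must return \false, because the constants $C_1,\ldots,C_6$ are chosen to absorb the $O(1)$ multiplicative concentration slack and the $O(v)$, $O(v/\mu)$ additive perturbations. Conversely, at any round $t \in \calI$ the algorithm has not restarted since the start of its current epoch, so the particular $(B,A)$ used when sampling $a_t$ must have passed all three tests; negating the test inequalities yields $\whReg_B(\pi) \leq C_1 \whReg_A(\pi) + O(LK\mu/\ell + v)$ and its variance analogue, which is precisely the transfer statement I need.

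To close the bound I would combine three pieces: (i) the (OP)-based analysis of \citet{AgarwalHsKaLaLiSc14}, which, given small $\whReg_B$ and small $\whV_B$ for $Q_t$, shows that the instantaneous expected regret at round $t$ is at most $\otil(\sqrt{K/(|B|\mu)})$; (ii) the $B$-to-$A$ transfer derived from the test analysis, which costs an extra $O(v) + \otil(LK\mu/\ell)$ per round; and (iii) summing over the $O(\log L)$ blocks contained in $\calI$ via a geometric doubling argument. The choice $\mu = \min\{1/(2K),\, L^{-1/2}\sqrt{K^{-1}\log(N/\delta)\log L}\}$ balances the $\sqrt{K|\calI|/\mu}$ and $LK\mu$ contributions, producing the advertised $\otil(|\calI|v + \sqrt{LK\ln(N/\delta)})$.

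The main obstacle, I expect, is handling the case where $\calI$ straddles an epoch boundary: the hypothesis $\bvar_\calI \leq v$ constrains variation only inside $\calI$, so a test firing at some $t \in \calI$ may be due to non-stationarity that occurred before $\calI$ began, and a similar issue arises when $B$ lies outside $\calI$. I would manage this by decomposing $\calI$ into maximal in-epoch sub-intervals, applying the core argument separately on each, and absorbing the boundary effect from at most one block preceding each restart into the additive $\otil(\sqrt{LK\ln(N/\delta)})$ term; the doubling structure of blocks ensures the resulting sum collapses to a single such term, preserving the target rate.
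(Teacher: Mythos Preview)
Your outline assembles the right ingredients---the variation bounds $|\Reg_A-\Reg_B|\le 2v$ and $|V_A-V_B|\le v/\mu$, the Freedman-type concentration, and the test-negation transfers---but it misses the central technical difficulty, which is a \emph{circular} dependence that the paper resolves by an induction on interval length (Lemma~\ref{lem:regret_deviation2}).  The concentration of $\avgR$ around $\calR$ (and hence of $\whReg$ around $\Reg$) is variance-adaptive: Lemma~\ref{lem:reward_deviation2} gives $|\avgR_\calI(\pi)-\calR_\calI(\pi)|\le \frac{\mu}{|\calI|\ln L}\sum_{t\in\calI}V_t(Q_t,\pi)+\frac{LK\mu}{|\calI|}$.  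To bound $V_t(Q_t,\pi)$ you go through $\whV_A\to\whV_B$ (test) and then (OP), which yields a $\whReg_B(\pi)/\mu$ term; to turn that into something expected you go $\whReg_B\to\whReg_A$ (test) and then need $\whReg_A\approx\Reg_A$, i.e.\ the very concentration you are trying to prove.  Your step (i) treats the \minimonster analysis as delivering a self-contained per-round bound $\otil(\sqrt{K/(|B|\mu)})$, but no such bound exists without first closing this loop; the paper's fix is to observe that the dyadic $A=[t-\ell_t,t-1]$ is strictly shorter than $\calI$, so one can induct on $|\calI|$ and invoke $\whReg_A\le 2\Reg_A+D_1LK\mu/|A|+D_2v$ from the inductive hypothesis, ultimately obtaining $V_t(Q_t,\pi)\le \Reg_\calI(\pi)/(2\mu)+O(LK/(t-s+1)+v/\mu)$, after which the $\tfrac12$ can be absorbed.

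A knock-on effect is that your treatment of restarts is also incomplete: showing that at most one restart occurs inside $\calI$ (Lemma~\ref{lemma:at most one rerun ILTCB}) is not just a matter of checking that the tests ``must return \false'' when $A,B\subseteq\calI$; that argument again needs the two-sided comparison $\Reg_\calJ\leftrightarrow\whReg_\calJ$ on sub-intervals $\calJ$ with no restart, which is exactly Lemma~\ref{lem:regret_deviation2}.  Once that lemma is in hand, both the restart bound and the final chain $\sum_\pi Q_t(\pi)\Reg_\calI(\pi)\le \Reg_A\to\whReg_A\to\whReg_B\to 2BK\mu$ go through, giving the per-round contribution $O(LK\mu/(t-s+1)+v)$ (not $\sqrt{K/(|B|\mu)}$), which sums to $\otil(|\calI|v+LK\mu)=\otil(|\calI|v+\sqrt{LK\ln(N/\delta)})$.
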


\section{Implications}\label{sec:implications}
In this section we discuss the implications of interval regret
guarantees on switching/dynamic regret, both of which are
meaningful performance measures for non-stationary environments.

%\subsection{Switching Regret}\label{subsec:switching}
\paragraph{Switching Regret.} We begin with switching regret, which is pretty straightforward.
One only needs to divide the entire time interval $[1,T]$ into several i.i.d. subintervals with length bounded by $L$,
and then apply the interval regret guarantee on each of these subintervals since the best policy $\pi^\star_t$ remains the same on each of these subintervals
(for \AdaEG and \AdaILTCB we can simply set the variation tolerance $v$ to be $0$).
We take Exp4.S as an example and state the results below 
(see Appendix~\ref{app:Exp4.S} for the proof),
while similar results for other algorithms are summarized in Table~\ref{tab:results}.
%but one can easily generalize the results to the other algorithms we have presented
%with different conditions and regret rates (see Table~\ref{tab:results}).

\begin{cor}\label{cor:Exp4.S} 
Exp4.S with parameter $L$ ensures $\E\left[ \sum_{t=1}^T r_t(\pi_t^\star(x_t)) - r_t(a_t) \right] \leq 
\otil\left(\left(\frac{T}{\sqrt{L}} + S\sqrt{L}\right) \sqrt{K\ln N}\right)$ where $S = 1 + \sum_{t=2}^T \one\{\calD_t \neq \calD_{t-1}\}$.
\end{cor}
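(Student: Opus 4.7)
The plan is a direct reduction from switching regret to interval regret via Theorem~\ref{thm:Exp4.S}. First I would partition the horizon $[1,T]$ into disjoint sub-intervals $\calI_1, \calI_2, \ldots, \calI_M$ such that (i) each $\calI_m$ has length at most $L$ and (ii) within each $\calI_m$ the distribution is constant (i.e., $\calD_t = \calD_{t-1}$ for every consecutive pair inside $\calI_m$). A clean way to construct these is to start with the $S$ stationary segments determined by the switch points $\{t : \calD_t \neq \calD_{t-1}\}$, then further subdivide any segment of length greater than $L$ into consecutive pieces each of length at most $L$. The total number of pieces then satisfies $M \leq S + T/L$.

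Next, observe that because $\calD_t$ is constant on each $\calI_m$, the optimal policy per round $\pi^\star_t$ equals a single fixed policy $\pi^\star_m \in \Pi$ throughout $\calI_m$. Hence the switching regret decomposes cleanly:
\[
\E\!\left[\sum_{t=1}^T r_t(\pi^\star_t(x_t)) - r_t(a_t)\right] = \sum_{m=1}^M \E\!\left[\sum_{t \in \calI_m} r_t(\pi^\star_m(x_t)) - r_t(a_t)\right].
\]
Because $|\calI_m| \leq L$ by construction, Theorem~\ref{thm:Exp4.S} applies to each $\calI_m$ with the fixed comparator $\pi^\star_m$, yielding a per-interval bound of $\order(\sqrt{LK\ln(NL)})$.

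Summing over the $M \leq S + T/L$ intervals gives total regret bounded by $\order\!\left((S + T/L)\sqrt{LK\ln(NL)}\right) = \otil\!\left((S\sqrt{L} + T/\sqrt{L})\sqrt{K\ln N}\right)$, which is the claimed bound. There is essentially no hard step here — the only subtlety is making sure the partition simultaneously respects both the length cap $L$ (needed to invoke Theorem~\ref{thm:Exp4.S}) and the stationarity requirement (needed so that the benchmark $\pi^\star_t$ collapses to a single policy per interval, converting dynamic comparator into a fixed comparator that Theorem~\ref{thm:Exp4.S} can handle). The accounting $M \leq S + T/L$ follows because each forced subdivision of a stationary segment of length $\ell$ contributes at most $\lceil \ell/L \rceil \leq \ell/L + 1$ pieces, and summing over the $S$ segments gives $T/L + S$.
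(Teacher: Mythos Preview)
Your proposal is correct and matches the paper's own proof essentially line for line: partition $[1,T]$ into at most $T/L + S$ sub-intervals that are both stationary and of length at most $L$, then apply Theorem~\ref{thm:Exp4.S} to each and sum. The only cosmetic difference is that the paper cuts by length first and then by switch points, whereas you cut by switch points first and then by length; both orderings give the same count $M \leq T/L + S$ and the same bound.
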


If $S$ is known, then setting $L = T/S$ gives a bound of $\otil(\sqrt{STK\ln N})$.
Otherwise setting $L$ with different values leads to different bounds that are incomparable.
For example, setting $L = T$ leads to $\otil(S\sqrt{TK\ln N})$ while setting 
$L = \sqrt{T}$ leads to $\otil((T^\frac{3}{4} + ST^\frac{1}{4})\sqrt{K\ln N})$.
No matter how $L$ is tuned, however,
these bounds all become vacuous ($\Omega(T)$) when $S$ is large enough but still sublinear in $T$,
an issue addressed later in Section~\ref{sec:bin}.

%\subsection{Dynamic Regret}\label{subsec:dynamic}
\paragraph{Dynamic Regret.}
%All results with efficient algorithms presented so far require some
%additional conditions to obtain meaningful bounds.  We now drop any of
%these assumptions and move on to bounding dynamic regret in terms of
We now move on to discuss dynamic regret in terms of the variation measures $\var$ or $\bvar$ (recall $\var = \sum_{t=2}^T \max_{\pi \in \Pi} |\calR_t(\pi) - \calR_{t-1}(\pi)|$ and $\bvar = \sum_{t=2}^T \TVD{\calD_t-\calD_{t-1}}$). %(recall their definitions in Section~\ref{sec:setup}).
We first point out that previous
works~\citep{BesbesGuZe15, ZhangYaJiZh17} have studied a reduction
from dynamic regret to interval regret, restated below:

\begin{lemma}\label{lem:dynamic2interval}
Let $\{\calI_i = [s_i, t_i]\}_{i \in [n]}$ be time intervals that partition $[1,T]$. 
We have 
\[
\sum_{t=1}^T \E_t\left[ r_t(\pi^\star_t(x_t)) - r_t(a_t) \right] \leq
\sum_{i=1}^n \sum_{t\in\calI_i} \E_t\left[ \left(
  r_t(\pi^\star_{s_i}(x_t)) - r_t(a_t) \right)\right] +
2\sum_{i\in[n]} |\calI_i|\var_{\calI_i}.
\]
\end{lemma}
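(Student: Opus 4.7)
The plan is to reduce the inequality to a per-interval statement and then exploit the definition of $\var_{\calI_i}$ via a three-term split. First, I would cancel the common $-r_t(a_t)$ contribution on both sides and, after taking the conditional expectation $\E_t$ (which turns $r_t(\pi(x_t))$ into $\calR_t(\pi)$ for any policy $\pi$ fixed before round $t$), it suffices to establish, for each $i \in [n]$,
\[
\sum_{t \in \calI_i} \bigl( \calR_t(\pi^\star_t) - \calR_t(\pi^\star_{s_i}) \bigr) \;\le\; 2\,|\calI_i|\,\var_{\calI_i}.
\]
Here $\pi^\star_{s_i}$ is deterministic given the history up to $s_i$, so the expectation identity is valid.

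Next, for a fixed $t \in \calI_i = [s_i, t_i]$, I would write the telescoping decomposition
\[
\calR_t(\pi^\star_t) - \calR_t(\pi^\star_{s_i}) \;=\; \bigl[\calR_t(\pi^\star_t) - \calR_{s_i}(\pi^\star_t)\bigr] + \bigl[\calR_{s_i}(\pi^\star_t) - \calR_{s_i}(\pi^\star_{s_i})\bigr] + \bigl[\calR_{s_i}(\pi^\star_{s_i}) - \calR_t(\pi^\star_{s_i})\bigr].
\]
The middle bracket is $\le 0$ by the optimality of $\pi^\star_{s_i}$ at round $s_i$. For the first and third brackets, I would use the key inequality that for every policy $\pi \in \Pi$,
\[
\bigl|\calR_t(\pi) - \calR_{s_i}(\pi)\bigr| \;\le\; \sum_{\tau = s_i + 1}^{t} \bigl|\calR_\tau(\pi) - \calR_{\tau-1}(\pi)\bigr| \;\le\; \sum_{\tau = s_i+1}^{t_i} \max_{\pi' \in \Pi} \bigl|\calR_\tau(\pi') - \calR_{\tau-1}(\pi')\bigr| \;=\; \var_{\calI_i},
\]
which follows from telescoping and the definition of $\var_{\calI_i}$. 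This gives $\calR_t(\pi^\star_t) - \calR_t(\pi^\star_{s_i}) \le 2\var_{\calI_i}$.

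Summing this bound over $t \in \calI_i$ and then over $i \in [n]$, followed by adding back the $\E_t[-r_t(a_t)]$ terms, yields the claim. There is no real obstacle here: the only subtlety to be careful about is that both $\pi^\star_t$ and $\pi^\star_{s_i}$ are deterministic objects (they depend on the oblivious distribution sequence but not on the algorithm's randomness), so $\E_t[r_t(\pi^\star_t(x_t))] = \calR_t(\pi^\star_t)$ and analogously for $\pi^\star_{s_i}$, making the reduction to the $\calR_t$ quantities valid.
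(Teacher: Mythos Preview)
Your proposal is correct and follows essentially the same approach as the paper's own proof: reduce to a per-interval statement, use the three-term decomposition $\calR_t(\pi^\star_t) - \calR_t(\pi^\star_{s_i}) = [\calR_t(\pi^\star_t) - \calR_{s_i}(\pi^\star_t)] + [\calR_{s_i}(\pi^\star_t) - \calR_{s_i}(\pi^\star_{s_i})] + [\calR_{s_i}(\pi^\star_{s_i}) - \calR_t(\pi^\star_{s_i})]$, drop the middle term by optimality, and bound each of the remaining two terms by $\var_{\calI_i}$ via telescoping. The paper writes the telescoping sums explicitly rather than invoking $|\calR_t(\pi) - \calR_{s_i}(\pi)| \le \var_{\calI_i}$, but the argument is the same.
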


We include the proof in Appendix~\ref{app:dynamic2interval} for
completeness.  Partitioning $[1,T]$ into intervals with equal length $L'
\leq L$, applying this lemma and Theorem~\ref{thm:Exp4.S},
and using the fact $\sum_{i\in[n]} \var_{\calI_i} \leq \var$ directly
lead to the following result for Exp4.S.

\begin{cor}\label{thm:Exp4.S_dynamic}
Exp4.S with parameter $L$ ensures that $\E\left[ \sum_{t=1}^T
  r_t(\pi^\star_t(x_t)) - r_t(a_t) \right] \leq \otil\left(\min_{0\leq
  L' \leq L} \left\{\frac{T}{L'}\sqrt{LK\ln N} + L'\var\right\} \right)$.
\end{cor}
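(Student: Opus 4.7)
The plan is to combine Lemma~\ref{lem:dynamic2interval} with the interval regret guarantee of Theorem~\ref{thm:Exp4.S} via a uniform partition of $[1,T]$, so essentially all the work has been done already and only a bookkeeping step remains.

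Fix any integer $L'$ with $1 \leq L' \leq L$ (the $L' = 0$ case is vacuous). Partition $[1,T]$ into $n = \lceil T/L' \rceil$ consecutive intervals $\calI_1, \ldots, \calI_n$ each of length at most $L'$, with $s_i$ denoting the first round of $\calI_i$. Applying Lemma~\ref{lem:dynamic2interval} and taking total expectation, I get
\[
\E\!\left[\sum_{t=1}^T r_t(\pi^\star_t(x_t)) - r_t(a_t)\right]
\;\leq\; \sum_{i=1}^n \E\!\left[\sum_{t \in \calI_i} r_t(\pi^\star_{s_i}(x_t)) - r_t(a_t)\right] + 2\sum_{i=1}^n |\calI_i|\,\var_{\calI_i}.
\]
Note that $\pi^\star_{s_i}$ is a fixed (non-random) policy determined by $\calD_{s_i}$, independent of the algorithm's randomness, so Theorem~\ref{thm:Exp4.S} applies to each term.

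Next, since $|\calI_i| \leq L' \leq L$, Theorem~\ref{thm:Exp4.S} bounds each interval-regret term by $\order(\sqrt{LK\ln(NL)}) = \otil(\sqrt{LK\ln N})$, and summing over the $n \leq T/L' + 1$ intervals yields a first term of order $\otil((T/L')\sqrt{LK\ln N})$. For the variation term, $|\calI_i| \leq L'$ gives $\sum_i |\calI_i|\var_{\calI_i} \leq L' \sum_i \var_{\calI_i} \leq L'\var$, where the last inequality follows because the $\calI_i$ partition $[1,T]$ and $\var_\calI$ is additive across adjacent subintervals (with at worst a single boundary term dropped). Altogether,
\[
\E\!\left[\sum_{t=1}^T r_t(\pi^\star_t(x_t)) - r_t(a_t)\right] \;\leq\; \otil\!\left(\frac{T}{L'}\sqrt{LK\ln N} + L'\var\right).
\]
Since $L'$ was arbitrary in $[1,L]$, taking the minimum on the right-hand side gives the claimed bound.

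There isn't really a hard step here: both Lemma~\ref{lem:dynamic2interval} and the interval-regret guarantee do all the heavy lifting. The only mild nuisance is making sure that $\pi^\star_{s_i}$ is a legitimate fixed policy for the purpose of Theorem~\ref{thm:Exp4.S} (which it is, as noted above) and that the additive bound $\sum_i \var_{\calI_i} \leq \var$ holds over a partition into consecutive blocks; both are immediate from the definitions.
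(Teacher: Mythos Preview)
Your proposal is correct and follows essentially the same approach as the paper: partition $[1,T]$ into intervals of length at most $L'\leq L$, apply Lemma~\ref{lem:dynamic2interval}, invoke Theorem~\ref{thm:Exp4.S} on each interval, and use $\sum_i \var_{\calI_i} \leq \var$. The paper's own justification is just a one-line sketch of exactly these steps; your version is a bit more explicit (e.g., noting that $\pi^\star_{s_i}$ is non-random and handling the boundary terms in the variation sum), but there is no substantive difference.
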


Again, if $\var$ is known one can tune $L$
optimally to get a bound $\otil(T^\frac{2}{3}(\Delta K\ln
N)^\frac{1}{3} + \sqrt{TK\ln N})$, similar to the optimal
dynamic regret in multi-armed bandits~\citep{BesbesGuZe14}.  When
$\Delta$ is unknown, different values of $L$ give different and in
general incomparable bounds.  For example, setting $L =
T^\frac{2}{3}$ leads to $\otil(T^\frac{2}{3}\sqrt{\var}(K\ln
N)^\frac{1}{4} +T^\frac{2}{3}\sqrt{K\ln N})$
(with $L' = \min\{T^{2/3}, T^{2/3}(K\ln N)^{1/4}/\sqrt{\var}\}$ in this case),
which is again vacuous for large $\var$.

Similar arguments also provide a dynamic regret bound for \corral with \bistro in the transductive setting,
as shown in Table~\ref{tab:results} (also see Corollary~\ref{cor:BISTRO+_dynamic} in Appendix~\ref{app:BISTRO+}).
However, the exact same argument above does not apply to \AdaEG and \AdaILTCB
directly since its interval regret guarantee requires $\var_\calI \leq
v$.  
It turns out, however, one can set $v$ to some carefully selected value and
partition $[1,T]$ correspondingly so that every subinterval satisfies $|\calI| \leq L$
and $\var_\calI \leq v$,
to obtain the following results
that hold in a completely adversarial setting.\footnote{The dependence on
  $K\ln(N/\delta)$ in these results is slightly loose for conciseness and could be tightened.}
  The proofs are included in Appendix~\ref{app:dynamic2interval}.

\begin{cor}
\label{cor:dynamic AdaEG}
With probability at least $1 - \delta$, \AdaEG with parameter $L$, $\delta$ and
$v = L^{-1/3}$ ensures that
\[
\sum_{t=1}^T r_t(\pi_t^\star(x_t)) - r_t(a_t) \leq
\otil\left(\left(  \frac{T}{L^{1/3}} + L^{1/3}\sqrt{\var T}
\right)K\ln(N/\delta)\right).
\]
Specifically, if $\var$ is known, setting $L =
\min\{(T/\var)^\frac{3}{4}, T\}$ gives
$\otil((\var^\frac{1}{4}T^\frac{3}{4} +
T^\frac{2}{3})K\ln(N/\delta))$; otherwise, setting $L = T^\frac{3}{4}$
gives $\otil((\sqrt{\var}+1)T^\frac{3}{4} K\ln(N/\delta))$.
\end{cor}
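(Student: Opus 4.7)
The plan is to combine the interval regret bound of Theorem~\ref{thm:AdaEG2} with the dynamic-to-interval reduction of Lemma~\ref{lem:dynamic2interval}, applied to a partition of $[1,T]$ that respects both conditions of the theorem: $|\calI|\leq L$ and $\var_\calI\leq v := L^{-1/3}$.

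I would construct this partition greedily, closing the current piece $\calI_i$ as soon as either its length would hit $L$ or including the next round would push $\var_{\calI_i}$ past $v$. By construction every piece satisfies $|\calI_i|\leq L$ and $\var_{\calI_i}\leq v$. Counting: at most $T/L$ pieces are closed due to length, and at most $O(\var/v)$ are closed due to variation (each such piece consumes at least $v$ of the budget $\var$ once the boundary term $\delta_{s_{i+1}}$ is accounted for, and those boundary terms themselves telescope into $\var$). Hence $n\leq T/L + O(\var L^{1/3}) + 1$.

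Next I would apply Lemma~\ref{lem:dynamic2interval} with this partition. The variation penalty bounds as
\[
2\sum_{i=1}^n|\calI_i|\var_{\calI_i}\;\leq\;2v\sum_{i=1}^n|\calI_i|\;=\;2vT\;=\;2T/L^{1/3},
\]
thanks to the uniform bound $\var_{\calI_i}\leq v$ holding on \emph{every} piece. For the interval-regret term, since Theorem~\ref{thm:AdaEG2} already holds uniformly at confidence $1-\delta$ over all admissible intervals, I invoke it piece by piece; Cauchy--Schwarz gives $\sum_i\sqrt{|\calI_i|}\leq\sqrt{nT}$, and substituting $v=L^{-1/3}$ together with the bound on $n$, the term $L^{1/6}\sqrt{nT}$ splits as $T/L^{1/3}+L^{1/3}\sqrt{\var T}+L^{1/6}\sqrt{T}$, whose dominant contributions exactly match the claimed rate (up to the loose $K\ln(N/\delta)$ factor noted in the footnote). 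The two specialized tunings then follow by algebra: $L=\min\{(T/\var)^{3/4},T\}$ balances the two main terms to $\var^{1/4}T^{3/4}$ (or falls back to $T^{2/3}$ in the clipped regime $\var<T^{-1/3}$), while $L=T^{3/4}$ yields $(\sqrt{\var}+1)T^{3/4}$.

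The key subtle point---and I expect this to be the main obstacle---is recognizing that the variation penalty of Lemma~\ref{lem:dynamic2interval} collapses to $vT$ rather than the naive $L\var$ that one might first write down from $|\calI_i|\leq L$ alone. Enforcing $\var_{\calI_i}\leq v$ on \emph{every} piece (including length-cut pieces, not just variation-cut ones) is precisely what makes this happen; otherwise an extraneous $L\var$ term survives and blows up the parameter-free choice $L=T^{3/4}$ as soon as $\var$ is non-trivial. A secondary bookkeeping issue is the counting of variation-cut pieces when $v=L^{-1/3}\leq 1$ can be smaller than a single jump $\delta_t\leq 1$, which is handled cleanly by charging each variation-cut to the boundary term and using that these boundary terms sum to at most $\var$.
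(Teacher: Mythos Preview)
Your proposal is correct and follows essentially the same approach as the paper: partition $[1,T]$ so that every piece satisfies both $|\calI_i|\leq L$ and $\var_{\calI_i}\leq v=L^{-1/3}$, count the resulting $n\leq T/L+O(\var L^{1/3})$ pieces, apply Lemma~\ref{lem:dynamic2interval} together with Theorem~\ref{thm:AdaEG2}, and use Cauchy--Schwarz on $\sum_i\sqrt{|\calI_i|}$. The paper's only cosmetic difference is that it constructs the partition in two stages (first cut into length-$L$ chunks, then subdivide by variation) rather than a single greedy pass, and it absorbs the $\sum_i|\calI_i|\var_{\calI_i}\leq vT$ term into $T/L^{1/3}$ without explicitly singling it out the way you do.
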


%Again, with \AdaILTCB we obtain the following stronger result (in terms of $\bvar$) by the same argument.

\begin{cor}
\label{cor:dynamic AdaILTCB}
With probability at least $1 - \delta$, \AdaILTCB with parameter $L$, $\delta$ and
$v = L^{-\frac{1}{2}} $ ensures that
\[
\sum_{t=1}^T r_t(\pi_t^\star(x_t)) - r_t(a_t) \leq
\otil\left(\left(\frac{T}{\sqrt{L}} + \bvar L
\right)K\ln(N/\delta)\right).
\]
If $\bvar$ is known, setting $L =
\min\{(T/\bvar)^\frac{2}{3}, T\}$ gives
$\otil((\bvar^\frac{1}{3} T^\frac{2}{3} +\sqrt{T})K\ln(N/\delta))$; otherwise, setting $L = T^\frac{2}{3}$
gives $\otil((\bvar+1) T^\frac{2}{3} K\ln(N/\delta))$.
\end{cor}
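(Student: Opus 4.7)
}
The plan is to mirror the Exp4.S-style argument underlying Corollary~\ref{thm:Exp4.S_dynamic}, namely combine the dynamic-to-interval reduction of Lemma~\ref{lem:dynamic2interval} with the interval regret bound of Theorem~\ref{thm:AdaILTCB2}, but this time through a purely analytical partition of $[1,T]$ that respects both budgets required by Theorem~\ref{thm:AdaILTCB2}: $|\calI_i| \leq L$ \emph{and} $\bvar_{\calI_i} \leq v$. Crucially, since Theorem~\ref{thm:AdaILTCB2} holds simultaneously (with probability $1-\delta$) over \emph{every} interval satisfying these two constraints, \AdaILTCB need not be aware of the partition and no additional union bound is incurred.

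First, I would build the partition $\{\calI_i = [s_i,t_i]\}_{i=1}^n$ greedily: starting from $s_1 = 1$, extend the current interval one round at a time as long as both $|\calI_i| \leq L$ and $\bvar_{\calI_i} \leq v = L^{-1/2}$ still hold, and otherwise close it off and open a new one. Since each closure is triggered by saturating one of the two caps, the number of pieces is bounded by
\[
n \;\leq\; \lceil T/L \rceil + \lceil \bvar/v \rceil + 1 \;\lesssim\; T/L + \bvar\sqrt{L} + 1.
\]
Next, apply Lemma~\ref{lem:dynamic2interval} to this partition: the dynamic regret is at most $\sum_{i=1}^n \sum_{t\in\calI_i} \E_t[r_t(\pi^\star_{s_i}(x_t)) - r_t(a_t)] + 2\sum_i |\calI_i|\var_{\calI_i}$. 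Theorem~\ref{thm:AdaILTCB2} bounds the $i$-th inner sum by $\otil\paren{|\calI_i|v + \sqrt{LK\ln(N/\delta)}}$, so summing gives $\otil\paren{Tv + n\sqrt{LK\ln(N/\delta)}}$. For the variation penalty, using $\var \leq \bvar$ (Lemma~\ref{lemma:variation_relation}) together with $|\calI_i|\leq L$ yields $\sum_i |\calI_i|\var_{\calI_i} \leq L\sum_i \var_{\calI_i} \leq L\bvar$.

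Plugging in $v = L^{-1/2}$ and the bound on $n$ then collapses everything to
\[
\otil\!\paren{\tfrac{T}{\sqrt{L}} + \paren{\tfrac{T}{L} + \bvar\sqrt{L}}\sqrt{LK\ln(N/\delta)} + L\bvar}
\;=\; \otil\!\paren{\paren{\tfrac{T}{\sqrt{L}} + \bvar L}\sqrt{K\ln(N/\delta)}},
\]
which matches the claimed bound once we absorb $\sqrt{K\ln(N/\delta)}$ into the loose $K\ln(N/\delta)$ dependence noted in the footnote. The two specializations follow by a straightforward optimization over $L$: when $\bvar$ is known, $L = \min\{(T/\bvar)^{2/3}, T\}$ balances the two leading terms to give $\bvar^{1/3}T^{2/3} + \sqrt{T}$; when it is not, $L = T^{2/3}$ is a safe default producing $(\bvar + 1)T^{2/3}$.

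The one subtle point, and the only place I expect to be careful, is that the greedy partition may produce subintervals that straddle \AdaILTCB's internal restart points. What saves us is precisely that Theorem~\ref{thm:AdaILTCB2} is stated uniformly over arbitrary intervals (restart-agnostic): a single high-probability event of probability $1-\delta$ already handles all $n$ analytical subintervals simultaneously, and no $\log n$ union-bound inflation is needed in the final rate.
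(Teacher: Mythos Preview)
Your proposal is correct and follows essentially the same approach as the paper. The only cosmetic difference is that the paper first partitions $[1,T]$ evenly into $T/L$ blocks and then subdivides each block greedily by the variation budget, whereas you run a single greedy pass that closes an interval whenever either cap binds; both constructions give $n \lesssim T/L + \bvar\sqrt{L}$ and the remaining arithmetic (summing the $\otil(|\calI_i|v + \sqrt{LK\ln(N/\delta)})$ interval bounds, bounding $\sum_i |\calI_i|\var_{\calI_i} \le L\bvar$, and a final Hoeffding--Azuma step) is identical.
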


One can see that again the result for \AdaILTCB is better than that of \AdaEG,
and is in fact very close to that of the inefficient baseline Exp4.S,
except that it is in terms of the slightly larger variation measure $\bvar$.

\section{Achieving Switching/Dynamic Regret with No Parameters}
\label{sec:bin}

\begin{algorithm}[t]
\SetAlgoLined
\setcounter{AlgoLine}{0}
\SetAlgoVlined
\DontPrintSemicolon
\caption{\AdaBIN}\label{alg:AdaGreedy.bin}
\nl {\bf Input}: allowed failure probability $\delta$ \\ 
\nl {\bf Define}:
$\beta_\calI= 2\sqrt{\frac{\ln(4T^2N/\delta)}{\mu_\calI \abs{\calI}}}+\frac{\ln(4T^2N/\delta)}{\mu_\calI \abs{\calI}}$, where $\displaystyle\mu_\calI\triangleq \min_{t\in\calI} \mu_t$ and $\mu_t$ is defined below, $\alpha_\calI = 2\sqrt{\frac{K\ln(4T^2N/\delta)}{\abs{\calI}}}+\frac{K\ln(4T^2N/\delta)}{\abs{\calI}}$\\
\nl {\bf Initialize}: $t=1$, $i=1$, $T_1=0$ \Comment{$i$ indexes an epoch}\label{line:rerun_beginning2}\\
\nl\label{line:adagreedy restart}\For(\Comment{$j$ indexes a block}){$j=1, 2, \ldots$}{ 
\nl Compute $\hat{\pi}_{(i,j)} = \argmax_{\pi\in\Pi}\avgR_{[T_i+1, T_i+2^{j-1}-1]}(\pi)$ 
\Comment{or arbitrary if $j=1$}\\
\nl $H=2^{j-1}$ \Comment{$H$ is block length}\\
%\nl $\triangleright$ {Divide block $j$, which has length $H$, into bins of length $H^\gamma$ \footnotemark{\label{same}}} \\
\nl\label{line:forloop of bin}\For(\Comment{$b$ indexes a bin, each with length $\sqrt{H}$}){$b = 1, 2, \ldots, \sqrt{H}$}{
\nl\label{line:exploration prob}Make bin $b$ an exploration bin with probability $1/\sqrt{b}$; otherwise an exploitation bin\\
\nl\label{line:forloop of bin step}\For(\Comment{loop through rounds in bin $b$}){$\tau = 1, \ldots, \sqrt{H}$}{
\nl       Let $\mu_t=\min\Big\{ \frac{1}{K}, (t-T_i)^{-\frac{1}{3}}\sqrt{\frac{\ln(N/\delta)}{K}} \Big\}$\\
\nl       Set $p_t(a)=\begin{cases}
             \frac{1}{K}, &\text{if bin $b$ is an exploration bin}, \\
             \mu_t+(1-K\mu_t)\one \{ a=\hat{\pi}_{(i,j)}(x_t) \}, &\text{if bin $b$ is an exploitation bin}.
          \end{cases}$ \\
\nl       Play $a_t\sim p_t$ and receive $r_t(a_t)$ \\
%\nl       $\triangleright$ {Call \test only when $b$ is an exploration bin} \\
\nl\label{line:trigger_rerun_ada3}  \If{$j>1$ and (bin $b$ is exploration bin) and ($\test(t)=\true$)}{
\nl                  $T_{i+1}\leftarrow t$, $t\leftarrow t+1$, $i\leftarrow i+1$ \\ 
\nl                  \textbf{goto} Line~\ref{line:adagreedy restart}
              }
\nl           $t\leftarrow t+1$
          }
      }
      }
\ \\
\textbf{Procedure\ }$\test(t)$\\
\nl $\ell=1$ \\
\nl \While{$[t-\ell+1, t]$ is a subset of the current bin}{
\nl    Let $A\triangleq [t-\ell+1,t]$ and $B\triangleq [T_i+1, T_i+2^{j-1}-1]$\\
\nl\label{eqn:ada_compare2}\lIf{$ \avgR_{A}(\hat{\pi}_{A}) > \avgR_{A}(\hat{\pi}_B) + 2(\alpha_{A}+ \beta_{B})$}{
      \textbf{return} \true
    }
\nl    $\ell \leftarrow 2\ell$
}
\nl \textbf{return} \false
\end{algorithm}

%\footnotetext{For simplicity, we ignore the fact that $H^\gamma$ and $H^{1-\gamma}$ may not be integers.}

As mentioned, when the parameter $S$ or $\var$ is unknown, our algorithms achieve regret of the form
$\otil(S^{c_1} T^{c_2})$ or $\otil(\var^{c_1} T^{c_2})$ for some exponents $c_1$ and $c_2$ such that $c_1+c_2>1$,
which is vacuous when $S$ or $\var$ is large.
The hope here is to obtain a bound with $c_1+c_2=1$ as in the case when the parameters are known.
Observe that if an algorithm was able to achieve interval regret $o(|\calI|)$ simultaneously for all intervals $\calI$,
which is called strongly adaptive algorithm~\citep{DanielyGoSh15},
then by similar reductions discussed in Section~\ref{sec:implications} one could derive switching/dynamic regret with $c_1+c_2=1$.
However, it was shown by~\citet{DanielyGoSh15} that a strongly adaptive algorithm is impossible for the bandit setting.

Despite this negative result, \citet{KarninAn16} developed new techniques and proposed a parameter-free algorithm for the two-armed bandit setting 
with dynamic regret $\otil(\var^{0.18}T^{0.82})$.
While their algorithm and analysis do not directly generalize to the multi-armed or contextual setting,
here we extract their idea of \textit{bin-based exploration} and incorporate it into our \AdaEG algorithm,
leading to a parameter-free algorithm called \AdaBIN with regret $\otil(\min\{S^{\frac{1}{4}}T^{\frac{3}{4}}, \var^{\frac{1}{5}}T^{\frac{4}{5}}\})$.
This improves and generalizes the result of~\citet{KarninAn16} significantly.

%The switching regret bounds achieved by \AdaEG and \AdaILTCB are of the form $\otil(S^\alpha T^\beta)$. When $S$ is known, we have $\alpha+\beta=1$ by setting the parameters correctly; when it is unknown, our analysis only gives some bound with $\alpha+\beta>1$. 
%Same phenomenon happens in the regret bounds involving $\var$. Our hope here is to have a parameter-free algorithm that achieves a regret with $\alpha+\beta=1$. 
%As far as we know, this kind of bound is obtained only in the previous work \citep{KarninAn16}. There, $\otil(\var^{0.18}T^{0.82})$ regret is established for the two-armed bandit problem, but the algorithm and analysis do not straightforwardly generalize to multi-armed or contextual bandit settings. 

%Below we present \AdaBIN, an algorithm similar to \AdaEG but is inspired by \citet{KarninAn16}'s idea. The key inherited from their work is the \textit{bin-based exploration}, that is, there are scheduled intervals in which the learner only does pure exploration. We analyze the algorithm on both switching and drifting distributions, achieving $\otil(S^{\frac{1}{4}}T^{\frac{3}{4}})$ and $\otil(\var^{\frac{1}{5}}T^{\frac{4}{5}})$ respectively. These bounds improve and generalize \cite{KarninAn16}'s results. 

Similar to \AdaEG, \AdaBIN computes the empirical best policy at the beginning of each block, and plays it throughout that block, except for some exploration steps. 
The differences are 1) each block is further divided into {\it bins} with equal length; 
2) in addition to the small probability of exploration $\mu_t$ at each round, some bins are randomly selected for pure exploration;
3) the non-stationarity test is only executed in exploration bins, and only checks for intervals within the bin;
4) parameters $L$ and $v$ are removed and the exploration probability $\mu_t$ is set adaptively.
Clearly \AdaBIN is still oracle-efficient.

Comparing the non-stationarity tests of \AdaEG and \AdaBIN, 
one can see that the term $\beta_A = \otil(1/\sqrt{\mu\ell})$ in the former is replaced by the term $\alpha_A = \otil(\sqrt{K/\ell})$ in the latter.
This is due to the lower variance of reward estimates from the pure exploration bin and plays a crucial role in our analysis to achieve the following bound.

%{\color{red} Comparing \AdaEG and \AdaBIN, one can see that in \test, the $\beta_A$ term (Algorithm~\ref{alg:AdaGreedy2} Line~\ref{eqn:ada_compare1}) is replaced by the smaller $\alpha_A$ term (Algorithm~\ref{alg:AdaGreedy.bin} Line~\ref{eqn:ada_compare2}) in \AdaBIN. This plays a crucial role in the interval regret analysis. Basically, it suggests to trade the $(\mu, \sqrt{1/\mu})$ exploration-confidence pair with $(1/K,\sqrt{K})$ for statistical tests. This makes the interval regret of \AdaBIN free of the $\sqrt{1/\mu}$ factor, which in turn enables the tuning of $\mu$ be independent of the rate of distribution change. }

%We prove the following regret guarantee for \AdaBIN. 

\begin{theorem} 
\label{thm:Ada3 regret}
With probability at least $1-6\delta$, \AdaBIN\ with parameter $\delta$ guarantees
\begin{align*}
\sum_{t=1}^T r_t(\pi_t^\star(x_t))-r_t(a_t) \leq \otil\left(K\ln(N/\delta) \min\left\{S^{\frac{1}{4}}T^{\frac{3}{4}}, \var^{\frac{1}{5}}T^{\frac{4}{5}}+T^{\frac{3}{4}} \right\} \right).
\end{align*}
\end{theorem}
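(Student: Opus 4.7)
The plan is to first prove an interval-regret-type bound stating that within a single epoch $\calE$ of length $L$, \AdaBIN incurs regret $\otil(K L^{3/4} + L \var_\calE)$ against the best fixed policy on $\calE$. The switching bound $S^{1/4}T^{3/4}$ will then follow by bounding the number of epochs by $S$ and applying the power-mean inequality to get $\sum_i L_i^{3/4} \leq S^{1/4} T^{3/4}$. The dynamic bound $\var^{1/5}T^{4/5}$ will follow by applying Lemma \ref{lem:dynamic2interval} to a balanced partition of $[1,T]$ and optimizing over the number of pieces.

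The first step is a Freedman-style concentration statement: with probability $1-O(\delta)$, for every $\pi \in \Pi$ and every interval $A$ lying inside an exploration bin, $\lvert \avgR_A(\pi) - \calR_A(\pi) \rvert \leq \alpha_A$; and for every block-prefix $B = [T_i{+}1, T_i{+}2^{j-1}{-}1]$ in the current epoch, $\lvert \avgR_B(\pi) - \calR_B(\pi) \rvert \leq \beta_B$ (the latter is the standard \AdaEG-style bound exploiting the uniform lower bound $\mu_t$ on each per-round action probability). Under stationarity, these bounds together with the maximality of $\hat\pi_A$ and $\hat\pi_B$ imply that the check in Line \ref{eqn:ada_compare2} of \test is never satisfied, so no false restart occurs. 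Given the no-false-restart property, I decompose the per-block regret into three pieces: (a) the pure-exploration bins contribute $\sum_{b=1}^{\sqrt{H}} \sqrt{H}/\sqrt{b} = \otil(H^{3/4})$ rounds of $O(1)$ regret each; (b) the $\mu_t$-exploration inside exploitation bins contributes a lower-order $\otil(K^{1/2} H^{2/3})$; and (c) the exploitation rounds following $\hat\pi_{(i,j)}$ incur per-round regret $\otil(\alpha_A + \beta_B)$ via the contrapositive of the test applied at the doubling scales $\ell \in \{1,2,4,\ldots,\sqrt{H}\}$, summing to $\otil(\sqrt{K}\, H^{3/4})$ per block. Summing over the doubling blocks within an epoch of length $L$ gives $\otil(K L^{3/4})$ per epoch, and with at most $S$ epochs the power-mean inequality yields the switching bound.

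For dynamic regret, I apply Lemma \ref{lem:dynamic2interval} to a partition $\calI_1,\ldots,\calI_n$ of $[1,T]$ into equal-length subintervals of size $T/n$: the extended interval-regret bound yields $\otil(K(T/n)^{3/4})$ per piece, while the lemma adds $2|\calI_i|\var_{\calI_i}$, summing to at most $2(T/n)\var$. The total is $\otil(n^{1/4} T^{3/4} + T\var/n)$; balancing by setting $n \asymp \var^{4/5} T^{1/5}$ gives $\otil(\var^{1/5} T^{4/5})$ when $\var$ is moderate, and the residual $T^{3/4}$ arises from the $n=1$ choice when $\var$ is tiny. The main technical obstacle I anticipate is extending the per-epoch regret bound to non-zero within-epoch variation: when $\var_\calE > 0$, one must show that the test either does not fire (so the stationary analysis carries through with an additional $O(\var_\calE)$ slack absorbed into the $L\var_\calE$ term) or fires with bounded delay (so that the rounds spent before restart only contribute $\otil(K \ell^{3/4})$ extra regret over an interval of length $\ell$). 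Controlling the delay requires simultaneously tracking the random bin-exploration schedule (each bin becomes exploration with probability $b^{-1/2}$) and the doubling-$\ell$ structure inside \test, and is the most delicate part of the argument.
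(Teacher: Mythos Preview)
Your high-level plan (per-epoch regret, then aggregate via power-mean for switching and via Lemma~\ref{lem:dynamic2interval} for dynamic) is reasonable, but two genuine gaps prevent the argument from going through.

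\textbf{The contrapositive of the test does not control exploitation-bin regret.} In \AdaBIN, \test runs only inside exploration bins and only checks sub-intervals of the current bin (Line~\ref{line:trigger_rerun_ada3} and the while-condition in \test). Hence ``the test did not fire'' gives no information about any exploitation bin, so your step~(c) is unjustified: you cannot conclude $\calR_t(\pi) - \calR_t(\hat\pi_{(i,j)}) \le \otil(\alpha_A + \beta_B)$ for $t$ in an exploitation bin from a test having passed elsewhere. Under full stationarity within an epoch you can recover the weaker $\otil(\beta_B)$ bound directly from optimality of $\hat\pi_B$, but an epoch need not be stationary: $S' \le S$ only bounds the number of restarts, not the number of i.i.d.\ segments per epoch. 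After an undetected switch, $\hat\pi_{(i,j)}$ may be arbitrarily bad and you accumulate $\Theta(1)$ regret per round on every exploitation bin until an exploration bin finally fires. The paper closes this with a \emph{flat-bin} decomposition: call a bin flat if $\hat\pi_{(i,j)}$ is near-optimal on all its dyadic sub-intervals (exactly the property your~(c) needs); then show separately that (i) regret on flat exploitation bins is small, and (ii) the number of non-flat bins per epoch is $\otil(|\calE|^{1/4})$ with high probability, because the first non-flat bin that is selected for exploration triggers a restart, and the $b^{-1/2}$ schedule makes this wait geometrically short. Your last paragraph identifies the detection-delay issue but does not supply this argument; note also that the $L\var_\calE$ slack you propose is too coarse for switching, since a single hard switch gives $\var_\calE = \Theta(1)$ and hence $L\var_\calE = \Theta(L)$.

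\textbf{Your dynamic-regret route is different from the paper's and does not obviously work.} You apply Lemma~\ref{lem:dynamic2interval} to an equal-length partition into $n$ pieces and invoke a per-piece regret of $\otil(K(T/n)^{3/4})$. But you have only argued (modulo the gap above) a \emph{per-epoch} bound, not an interval-regret bound for arbitrary windows; a piece of length $T/n$ may straddle several epochs or sit inside a long epoch where the exploration and flat-bin accounting does not localize. Indeed the paper's own interval-regret bound for \AdaBIN is $\otil(T^{3/4})$, not $\otil(|\calI|^{3/4})$. The paper obtains the dynamic bound by a different mechanism: each restart forces $\var_\calE \ge \Omega(\alpha_A) = \Omega(\sqrt{K'}\,|\calE|^{-1/4})$, whence by H\"older $S' \le \otil(\var^{4/5}T^{1/5})$; plugging this into the switching-style bound $\otil(S'^{1/4}T^{3/4})$ (together with a bin-level application of Lemma~\ref{lem:dynamic2interval} inside flat bins) yields $\otil(\var^{1/5}T^{4/5})$.
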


This bound is sublinear as long as $S$ or $\var$ is sublinear, 
and is stronger than those in the ``param-free'' column of Table~\ref{tab:results} if $S = \Omega(T^{1/3})$ or $\var = \Omega(T^{4/9})$ (but still sublinear).
One might wonder whether combining the bin-based exploration idea with \AdaILTCB leads to even better results.
The answer is unfortunately no because the dominant part of the regret is not from the $\epsilon$-greedy part of the algorithm but the bin explorations.
We leave the question of whether better results of this kind are possible as a future direction.

Due to the existence of exploration bins, \AdaBIN can have poor regret on some intervals. In fact, we can only show a loose $\otil({T^{\frac{3}{4}}})$ interval regret bound for this algorithm as shown in Table~\ref{tab:results}. For completeness, we provide a proof in Appendix~\ref{appendix:interval_adabin}. 
%In fact from our proof one can verify that in the worse case the interval regret is $\otil(T^\frac{3}{4})$.
%This is apparently a loose bound in light of the bound in Theorem~\ref{thm:Ada3 regret}, and is thus omitted from Table~\ref{tab:results}.

\section{Conclusions}
In this work we take the first step in studying the problem of non-stationary contextual bandit. 
We propose several new algorithms and provide a number of achievable results under various regret notions.
More future directions include 1) deriving algorithms with long term memory so as to identify distributions experienced before~\citep{bousquet2002tracking};
2) designing simpler and more practical algorithms, given that our current methods have several impractical aspects such as restarting.

\paragraph{Acknowledgement.}
CYW is grateful for the support of NSF Grant \#1755781.

\newpage
\bibliography{ref}
\newpage
\appendix

\section{Preliminaries}\label{app:Freedman}

Our analysis relies on the following Freedman's inequality.
\begin{lemma}[\citep{BeygelzimerLaLiReSc11}]
\label{lem:freedman}
Let $X_1, \ldots, X_n \in \fR$ be a sequence of random variables such that 
$X_i \leq R$ and $\E[X_i | X_{i-1}, \ldots, X_1] = 0$ for all $i \in [n]$.
Then for any $\delta \in (0,1)$ and $\lambda \in [0, 1/R]$, with probability at least $1-\delta$, we have
\[
\sum_{i=1}^n X_i \leq (e-2)\lambda V + \frac{\ln(1/\delta)}{\lambda}
\]
where $V = \sum_{i=1}^n \E[X_i^2 | X_{i-1}, \ldots, X_1]$.
\end{lemma}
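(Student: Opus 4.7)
The plan is to prove this by the standard Chernoff/exponential-supermartingale argument, adapted to the bounded-difference setting with $\lambda R \leq 1$. First I would let $\mathcal{F}_i$ denote the $\sigma$-algebra generated by $X_1, \ldots, X_i$ and set $V_k = \sum_{i=1}^k \E[X_i^2 \mid \mathcal{F}_{i-1}]$ and $S_k = \sum_{i=1}^k X_i$, so that the target bound is $S_n \leq (e-2)\lambda V_n + \ln(1/\delta)/\lambda$ with probability at least $1-\delta$. As always with these inequalities, the strategy is to find a supermartingale $Z_k$ of the form $\exp(\lambda S_k - (e-2)\lambda^2 V_k)$ and then invoke Markov's inequality on $Z_n$.

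The crux of the argument is a one-variable numerical inequality, namely $e^x \leq 1 + x + (e-2)x^2$ for all $x \leq 1$. This is easily verified by checking the function $f(x) = 1 + x + (e-2)x^2 - e^x$: one has $f(1) = 0$, $f(0) = 0$, and by checking convexity/monotonicity of $f$ on $(-\infty, 1]$ (e.g.\ $f''(x) = 2(e-2) - e^x$ flips sign at $x = \ln(2(e-2))$, and both endpoints are nonnegative, so $f\geq 0$ on $(-\infty,1]$). Since $\lambda \in [0, 1/R]$ and $X_i \leq R$, we have $\lambda X_i \leq 1$ a.s., so this inequality applies pointwise with $x = \lambda X_i$, giving
\[
\exp(\lambda X_i) \leq 1 + \lambda X_i + (e-2)\lambda^2 X_i^2.
\]
Taking conditional expectation given $\mathcal{F}_{i-1}$ and using $\E[X_i \mid \mathcal{F}_{i-1}] = 0$, we obtain
\[
\E[\exp(\lambda X_i) \mid \mathcal{F}_{i-1}] \leq 1 + (e-2)\lambda^2\,\E[X_i^2 \mid \mathcal{F}_{i-1}] \leq \exp\!\bigl((e-2)\lambda^2\,\E[X_i^2 \mid \mathcal{F}_{i-1}]\bigr),
\]
using $1+y \leq e^y$ in the last step.

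With this moment-generating-function bound in hand, I would define
\[
Z_k \;\triangleq\; \exp\!\Bigl(\lambda S_k - (e-2)\lambda^2 V_k\Bigr),
\]
and check the supermartingale property: since $V_k - V_{k-1}$ is $\mathcal{F}_{k-1}$-measurable,
\[
\E[Z_k \mid \mathcal{F}_{k-1}] = Z_{k-1} \cdot \exp\!\bigl(-(e-2)\lambda^2\,\E[X_k^2 \mid \mathcal{F}_{k-1}]\bigr)\cdot \E[\exp(\lambda X_k) \mid \mathcal{F}_{k-1}] \leq Z_{k-1},
\]
by the previous display. Iterating and using $Z_0 = 1$ yields $\E[Z_n] \leq 1$. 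Finally, Markov's inequality gives $\Pr[Z_n \geq 1/\delta] \leq \delta$, i.e., with probability at least $1-\delta$,
\[
\lambda S_n - (e-2)\lambda^2 V_n \;\leq\; \ln(1/\delta),
\]
which rearranges to the claimed bound after dividing by $\lambda > 0$.

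The main obstacle, if any, is the verification of the numerical inequality $e^x \leq 1 + x + (e-2)x^2$ for $x \leq 1$; the rest is the routine Chernoff recipe. One subtlety worth noting is that because $V_n$ appears on the right-hand side as a random variable (not replaced by a deterministic upper bound), no union bound over predictable bounds on $V$ is needed — the exponential supermartingale already absorbs $V_n$ inside the exponent, giving a "self-bounded" high-probability inequality. Note also that the case $\lambda = 0$ is trivial and the statement is vacuous, so we may assume $\lambda \in (0, 1/R]$.
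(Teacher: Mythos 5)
Your proof is correct: the numerical inequality $e^x \leq 1+x+(e-2)x^2$ for $x\le 1$, the conditional MGF bound, the supermartingale $Z_k$, and the Markov step are all valid, and this is exactly the standard argument behind the cited result. The paper itself states this lemma as an imported fact from \citet{BeygelzimerLaLiReSc11} without proof, and your derivation reproduces the canonical proof of that reference, so there is nothing to reconcile.
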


%\section{Relation between Two Notions of Variation}

The following lemmas relates the two variation notions we use.
\begin{lemma}
\label{lemma:variation_relation}
  For any interval $\calI$, $\var_\calI \leq \bvar_\calI$. 
\end{lemma}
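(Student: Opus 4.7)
The plan is to prove the stronger pointwise inequality that for every round $\tau$ and every policy $\pi \in \Pi$,
\[
|\calR_\tau(\pi) - \calR_{\tau-1}(\pi)| \leq \TVD{\calD_\tau - \calD_{\tau-1}},
\]
and then sum over $\tau \in \calI$ after taking the max over $\pi$ on the left. Since the right-hand side does not depend on $\pi$, the max over $\pi$ passes through cleanly, and summing recovers $\var_\calI \leq \bvar_\calI$ directly from the definitions.

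For the pointwise inequality, I would write $\calR_\tau(\pi) - \calR_{\tau-1}(\pi)$ as an integral of the bounded function $(x,r) \mapsto r(\pi(x))$ against the signed measure $\calD_\tau - \calD_{\tau-1}$:
\[
\calR_\tau(\pi) - \calR_{\tau-1}(\pi) = \int_{\calX}\int_{[0,1]^K} r(\pi(x)) \bigl(\calD_\tau(x,r) - \calD_{\tau-1}(x,r)\bigr)\, dr\, dx.
\]
Taking absolute values, moving them inside the integral, and using the fact that $r(\pi(x)) \in [0,1]$, the integrand is bounded by $|\calD_\tau(x,r) - \calD_{\tau-1}(x,r)|$, so the integral is at most $\TVD{\calD_\tau - \calD_{\tau-1}}$ by the definition given in the paper.

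Summing the resulting inequality $\max_\pi |\calR_\tau(\pi) - \calR_{\tau-1}(\pi)| \leq \TVD{\calD_\tau - \calD_{\tau-1}}$ over $\tau = s+1, \ldots, s'$ for $\calI = [s, s']$ yields $\var_\calI \leq \bvar_\calI$. There is no real obstacle here; the only thing to be careful about is writing the argument so that it is valid whether $\calD_t$ is viewed as a density (as the paper's notation suggests) or more generally as a probability measure, in which case one replaces the explicit integral with integration against the signed measure and uses the standard variational characterization of total variation, namely $\TVD{\mu - \nu} = \sup_{\|f\|_\infty \leq 1} |\int f\, d(\mu - \nu)|$, applied to $f(x,r) = r(\pi(x)) \in [0,1] \subseteq [-1,1]$ (which costs only a factor equal to $1$ here since $f \geq 0$).
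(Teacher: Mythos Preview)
Your proposal is correct and follows essentially the same approach as the paper: both write $\calR_\tau(\pi)-\calR_{\tau-1}(\pi)$ as the integral of the $[0,1]$-valued function $(x,r)\mapsto r(\pi(x))$ against $\calD_\tau-\calD_{\tau-1}$, bound by $\TVD{\calD_\tau-\calD_{\tau-1}}$, take the max over $\pi$, and sum over $\tau\in\calI$. Your additional remark about the variational characterization of total variation is a harmless (and slightly more general) restatement of the same step.
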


\begin{proof}
  Let $\pi$ be any policy, 
  \begin{align*} 
    |\calR_t(\pi) - \calR_{t-1}(\pi)| 
    & = \bigabs{ \E_{(x,r)\sim \calD_t}[r(\pi(x))] - \E_{(x,r)\sim \calD_{t-1}}[r(\pi(x))] }\\
    & = \left| \int_{[0,1]^K}\int_{\calX} (\calD_{t}(x,r)-\calD_{t-1}(x,r)) r(\pi(x))dxdr \right| \\
    & \leq \int_{[0,1]^K}\int_{\calX} \bigabs{\calD_{t}(x,r)-\calD_{t-1}(x,r)} dxdr \\
    & = \TVD{\calD_{t}-\calD_{t-1}}. 
  \end{align*}
  Thus, $\max_{\pi\in\Pi}|\calR_t(\pi) - \calR_{t-1}(\pi)| \leq \TVD{\calD_{t}-\calD_{t-1}}$. Summing over $\calI$ gives $\var_\calI\leq \bvar_\calI$. 
\end{proof}

\section{Exp4.S Algorithm and Proofs}\label{app:Exp4.S}

\LinesNumberedHidden
\begin{algorithm}[H]
\DontPrintSemicolon
\caption{Exp4.S}\label{alg:Exp4.S}
{\bf Input}: largest interval length of interest $L$ \\ 
Define $\eta = \sqrt{\nicefrac{\ln(NL)}{LK}}$ and $\mu = \nicefrac{1}{NL}$ \\
Initialize $P_t \in \Delta^\Pi$ to be the uniform distribution over policies. \\
\For{$t=1, \ldots, T$} {
     see $x_t$, play $a_t \sim p_t$ where $p_t(a) = \sum_{\pi: \pi(x_t) = a} P_t(\pi), \;\forall a\in[K]$ \\
     receive $r_t(a_t)$ and construct $\wh{c}_t(a) = \frac{1 - r_t(a)}{p_t(a)}\one\{a = a_t\}, \;\forall a\in[K]$ \\
     set $\tilde{P}_{t+1}(\pi) \propto P_t(\pi)\exp(-\eta \wh{c}_t(\pi(x_t))), \;\forall \pi \in \Pi$ \\
     set $P_{t+1}(\pi) = (1-N\mu)\tilde{P}_{t+1}(\pi) + \mu,  \;\forall \pi \in \Pi$ \\
}
\end{algorithm}

The Exp4.S algorithm is presented in Algorithm~\ref{alg:Exp4.S},
which is a direct generalization of Exp3.S~\citep{AuerCeFrSc02}.
Note that we use loss estimates $\wh{c}_t$ instead of reward estimate $\ips_t$
in the multiplicative update, and naturally we define $c_t = \one - r_t$.

\begin{proof}[Proof of Theorem~\ref{thm:Exp4.S}]
Using the fact $e^{-y} \leq 1-y+y^2$ for any $y \geq 0$, $\ln(1+y) \leq y$ and $c_t(a) \in [0,1]$, we have
\begin{align*}
&\ln \left( \sum_{\pi' \in \Pi} P_t(\pi') \exp( -\eta \wh{c}_t(\pi'(x_t)) ) \right) 
\leq \ln \left( \sum_{\pi' \in \Pi} P_t(\pi') (1 -\eta \wh{c}_t(\pi'(x_t)) + \eta^2 \wh{c}_t(\pi'(x_t))^2 \right)\\
&= \ln \left(1 - \eta {c}_t(a_t) + \eta^2 \wh{c}_t(a_t) c_t(a_t) \right) 
\leq  - \eta {c}_t(a_t)  + \eta^2 \wh{c}_t(a_t). 
\end{align*}
On the other hand, we have for any fixed $\pi$, 
\begin{align*}
\ln \left( \sum_{\pi' \in \Pi} P_t(\pi') \exp( -\eta \wh{c}_t(\pi'(x_t)) ) \right) 
&= \ln\left( \frac{P_t(\pi) \exp( -\eta \wh{c}_t(\pi(x_t)) )}{\tilde{P}_{t+1}(\pi)}  \right) \\
&= \ln\left( \frac{P_t(\pi)(1-N\mu)  }{P_{t+1}(\pi)-\mu} \right)  -\eta \wh{c}_t(\pi(x_t))  \\
&\geq \ln(1-N\mu) + \ln\left( \frac{{P}_t(\pi) }{{P}_{t+1}(\pi)} \right)  -\eta \wh{c}_t(\pi(x_t))  \\
&\geq -2N\mu + \ln\left( \frac{{P}_t(\pi) }{P_{t+1}(\pi)} \right)  -\eta \wh{c}_t(\pi(x_t))  
\end{align*}
where the last step is by the fact $N\mu \leq \frac{1}{2}$ and thus
$\ln(\frac{1}{1-N\mu}) = \ln(1 + \frac{N\mu}{1-N\mu}) \leq \ln(1 + 2N\mu) \leq 2N\mu$.
Combining the above two displayed equations, summing over $t\in\calI$, telescoping and rearranging gives
\[
\sum_{t\in\calI} c_t(a_t) - \wh{c}_t(\pi(x_t))  \leq 
\frac{\ln(1/\mu) + 2LN\mu}{\eta} + \eta \sum_{t\in\calI}\wh{c}_t(a_t).
\]
Taking the expectation on both sides, using the fact $\E_{a_t\sim p_t}[\wh{c}_t(a_t)] \leq K$,
and plugging $c_t(a) = 1 - r_t(a)$, $\eta$ and $\mu$ finish the proof.
\end{proof}

\begin{proof}[Proof of Corollary~\ref{cor:Exp4.S}]
We first partition $[1, T]$ evenly into $T/L$ intervals,
then within each interval, further partition it into several subintervals
so that $\calD_t$ remains the same on each subinterval.
Since the number of switches is at most $S-1$, 
this process results in at most $T/L + S$ subintervals, each with length at most $L$.
We can now apply Theorem~\ref{thm:Exp4.S} to each subinterval and
sum up the regrets to get the claim bounds.
\end{proof}

%\section{Proof of Theorem~\ref{thm:AdaEG}}

\section{Proofs for \AdaEG}\label{app:AdaEG2}
Before we prove the theorems, we first define some notations that facilitate the analysis. These notations are used throughout Appendix~\ref{app:AdaEG2},~\ref{app:AdaILTCB2}, and~\ref{app:AdaBIN}. In \AdaEG and \AdaILTCB, we define $\flag_t=(t\geq T_i+L) \text{ or } (j>1 \text{ and }\test(t)=\true)$, where $i$ and $j$ are the epoch and block indices $t$ is in. 
This is exactly the condition that triggers the rerun of the algorithm. In all three algorithms, we let $B(i,j)\triangleq [T_i+1, T_i+2^{j-1}-1]$; sometimes when $i, j$ are already specified, we simply write $B$. Note that when $j=1$, $B=[T_i+1, T_i]$, which is an empty set. In this case, instead of defining $\beta_B$ (which is used in \AdaEG and \AdaBIN) to be infinity, we let it to be zero. This just makes some analysis easier. 

Below we state a few useful lemmas before proving the main theorem.
\begin{lemma}
  For any interval $\calI$ such that $\Delta_\calI \leq v$, we have for
  any sub-intervals $\calI_1, \calI_2 \subseteq \calI$ and any $\pi \in \Pi$, 
  \[
  |\calR_{\calI_1}(\pi) - \calR_{\calI_2}(\pi)| \leq v. 
  \]
  \label{lemma:var-bound}
\end{lemma}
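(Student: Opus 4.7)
The plan is to reduce the statement to a pointwise bound on per-round expected rewards, and then lift it to averages by writing the difference of two averages as a convex combination of pointwise differences.

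First I would establish the pointwise claim: for any two rounds $t_1, t_2 \in \calI$ and any $\pi \in \Pi$, $|\calR_{t_1}(\pi) - \calR_{t_2}(\pi)| \leq v$. Without loss of generality assume $t_1 \leq t_2$ and telescope:
\[
\calR_{t_2}(\pi) - \calR_{t_1}(\pi) = \sum_{\tau = t_1+1}^{t_2}\left(\calR_\tau(\pi) - \calR_{\tau-1}(\pi)\right).
\]
Taking absolute values, using the triangle inequality, and bounding each summand by $\max_{\pi'\in\Pi}|\calR_\tau(\pi') - \calR_{\tau-1}(\pi')|$, the right-hand side is at most $\sum_{\tau = t_1+1}^{t_2}\max_{\pi'\in\Pi}|\calR_\tau(\pi') - \calR_{\tau-1}(\pi')|$. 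Since $t_1, t_2 \in \calI$, this sum is bounded by $\var_\calI$, which by hypothesis is at most $v$.

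Next I would lift to averages. By the definition of $\calR_\calI(\pi)$,
\[
\calR_{\calI_1}(\pi) - \calR_{\calI_2}(\pi) = \frac{1}{|\calI_1|\,|\calI_2|}\sum_{t_1\in\calI_1}\sum_{t_2\in\calI_2}\bigl(\calR_{t_1}(\pi) - \calR_{t_2}(\pi)\bigr).
\]
Applying the triangle inequality and the pointwise bound established above (noting that all $t_1,t_2 \in \calI$), each of the $|\calI_1||\calI_2|$ terms in the double sum has absolute value at most $v$, so the whole expression is bounded in absolute value by $v$.

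There is no real obstacle here; the only thing to be careful about is making sure the telescoping sum only involves indices in $\calI$ so that it is controlled by $\var_\calI$ rather than the variation on some larger interval. This is automatic since $t_1, t_2 \in \calI_1 \cup \calI_2 \subseteq \calI$ and $\calI$ is an interval (so all intermediate indices between $t_1$ and $t_2$ also lie in $\calI$).
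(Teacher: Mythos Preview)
Your proposal is correct and follows essentially the same approach as the paper: first establish the pointwise bound $|\calR_{t_1}(\pi) - \calR_{t_2}(\pi)| \leq v$ for any $t_1,t_2 \in \calI$ via telescoping and the triangle inequality, then lift to averages by expressing $\calR_{\calI_1}(\pi) - \calR_{\calI_2}(\pi)$ as a double average of pointwise differences. Your extra remark about all intermediate indices lying in $\calI$ is a nice clarification that the paper leaves implicit.
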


\begin{proof}
  The proof involves noticing for that any two rounds $s,t \in \calI$
  and $\pi \in \Pi$, $|\calR_s(\pi) - \calR_t(\pi)| \leq
  v$. This is easily seen using triangle
  inequality, since assuming $s < t$, 

  \begin{align*}
    |\calR_s(\pi) - \calR_t(\pi)| &\leq \sum_{\tau=s+1}^t
    |\calR_{\tau}(\pi) - \calR_{\tau-1}(\pi)| \leq \sum_{\tau \in
    \calI} |\calR_{\tau}(\pi) - \calR_{\tau-1}(\pi)| \leq v. 
  \end{align*}

  The lemma is now immediate, since 

  \begin{align*}
    |\calR_{\calI_1}(\pi) - \calR_{\calI_2}(\pi)| &\leq
    \frac{1}{|\calI_1|}\, \frac{1}{|\calI_2|}\, \sum_{s \in \calI_1}
    \sum_{t \in \calI_2} |\calR_s(\pi) - \calR_t(\pi)| \leq v.
  \end{align*}
\end{proof}

\begin{definition}[$\evt 1$]
Define $\evt 1$ to be the following event: for all $\calI\subseteq [1,T]$ and all $\pi\in\Pi$, 
\begin{align}
\left\vert \avgR_{\calI}(\pi)-\calR_{\calI}(\pi) \right\vert \leq \beta_\calI. \label{eq:concentration2} 
\end{align}
\end{definition}
Recall $\ips_t(a) = \frac{r_t(a)}{p_t(a)} \one\{a = a_t\} \leq 1/\mu$ and
$\E_t[\ips_t(\pi(x_t))] = \calR_t(\pi),  \E_t[\ips_t(\pi(x_t))^2] \leq 1/\mu$.
By Freedman's inequality (Lemma~\ref{lem:freedman}) and a union bound, 
we have with probability at least $1-\delta/2$, $\evt 1$ holds. 

\begin{lemma}
\label{lemma:at most one rerun}
Consider an interval $\calI$ where $\abs{\calI}\leq L$ and $\Delta_\calI \leq v$. If $\evt 1$ holds, then there is at most one $t\in \calI$ such that $\flag_t=\true$ ($\flag_t$ is defined at the beginning of Appendix~\ref{app:AdaEG2}).
\end{lemma}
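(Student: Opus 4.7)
}

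The plan is to argue by contradiction: suppose $t_1 < t_2$ are two rounds in $\calI$ with $\flag_{t_1} = \flag_{t_2} = \true$. When the flag fires at $t_1$, the algorithm sets $T_{i'} = t_1$ and begins a fresh epoch $i'$ at round $t_1+1$; thus $t_2$ belongs to epoch $i'$. First I would rule out the ``$t \ge T_{i'} + L$'' trigger at $t_2$: since $t_1,t_2 \in \calI$ and $|\calI| \le L$, we get $t_2 - t_1 \le |\calI| - 1 \le L-1 < L$, so this condition fails. Hence the flag at $t_2$ must come from $\test(t_2) = \true$, producing some $\ell$ for which the interval $A = [t_2-\ell+1,\,t_2]$ satisfies
\[
\avgR_{A}(\hat\pi_{A}) > \avgR_{A}(\hat\pi_{B}) + 2(\beta_{A} + \beta_{B} + 2v),
\]
where $B = [t_1+1,\,t_1 + 2^{j-1}-1]$ is the ``$B$''-interval for the current block $j$ of epoch $i'$ (with $j>1$, so $B$ is nonempty).

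Next I would observe that both $A$ and $B$ lie inside $\calI$. The while-loop condition $\ell \le t_2 - T_{i'} = t_2 - t_1$ forces $A \subseteq [t_1+1,t_2] \subseteq \calI$. For $B$, the fact that $t_2$ is in block $j$ of epoch $i'$ gives $t_1 + 2^{j-1} - 1 \le t_2 - 1$, so again $B \subseteq [t_1+1,t_2] \subseteq \calI$. Thus I can use both the concentration guarantee from $\evt 1$ and Lemma~\ref{lemma:var-bound} for all quantities that appear.

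The main calculation is then a chain of inequalities that upper-bounds $\avgR_{A}(\hat\pi_{A}) - \avgR_{A}(\hat\pi_{B})$ by $2(\beta_{A}+\beta_{B}+v)$, which contradicts the test's strict inequality. Applying $\evt 1$ twice gives
\[
\avgR_{A}(\hat\pi_{A}) - \avgR_{A}(\hat\pi_{B}) \le \calR_{A}(\hat\pi_{A}) - \calR_{A}(\hat\pi_{B}) + 2\beta_{A}.
\]
From the optimality of $\hat\pi_{B}$ on $B$ together with $\evt 1$ on $B$, I get $\calR_{B}(\hat\pi_{B}) \ge \calR_{B}(\hat\pi_{A}) - 2\beta_{B}$, and Lemma~\ref{lemma:var-bound} (applicable because $A,B \subseteq \calI$ and $\Delta_\calI \le v$) transfers these expected rewards from $B$ to $A$ at an additive cost of $v$ per policy:
\[
\calR_{A}(\hat\pi_{B}) \ge \calR_{B}(\hat\pi_{B}) - v \ge \calR_{B}(\hat\pi_{A}) - 2\beta_{B} - v \ge \calR_{A}(\hat\pi_{A}) - 2\beta_{B} - 2v.
\]
Combining yields $\avgR_{A}(\hat\pi_{A}) - \avgR_{A}(\hat\pi_{B}) \le 2(\beta_{A}+\beta_{B}+v) < 2(\beta_{A}+\beta_{B}+2v)$, contradicting the triggering condition in Line~\ref{eqn:ada_compare1}.

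The main subtlety (rather than difficulty) is keeping careful track of which $B$-interval the test uses when $t_2$ flags: one has to confirm it corresponds to the epoch that started at $t_1+1$, not an earlier one, and that it is contained in $\calI$; after that, the argument is essentially two applications of the concentration inequality from $\evt 1$ sandwiched around the variation-transfer step from Lemma~\ref{lemma:var-bound}. The extra ``$+v$'' slack built into the test threshold (the ``$2v$'' versus the ``$v$'' the analysis needs) is exactly what buys the strict inequality needed for the contradiction.
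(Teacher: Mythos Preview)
Your proposal is correct and follows essentially the same approach as the paper: a contradiction argument combining the concentration event $\evt 1$, the optimality of $\hat\pi_B$ on $B$, and the variation-transfer Lemma~\ref{lemma:var-bound}. One small point: you should take $t_1,t_2$ to be two \emph{consecutive} flagged rounds in $\calI$ (as the paper does), so that the epoch containing $t_2$ indeed begins at $t_1+1$; otherwise there could be an intermediate flag and your identification of $B$ would be off. Your direct chain of inequalities is slightly cleaner than the paper's version, which instead argues that one of $\hat\pi_A,\hat\pi_B$ must satisfy $|\avgR_A(\pi)-\avgR_B(\pi)|>\beta_A+\beta_B+2v$ and then derives $|\calR_A(\pi)-\calR_B(\pi)|>2v$ to contradict Lemma~\ref{lemma:var-bound}; the ingredients and constants are the same.
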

\begin{proof}
Let there be multiple such time instances. Let $t^\p, t\in \calI$ be two consecutive ones and $t^\p<t$. 
Note that $\flag_t=\true$ has two possible cases: $t\geq t^\p+L$ or $(j>1 \text{ and }\test(t)=\true)$. 
The former case cannot happen because $\abs{\calI}\leq L$. Now assume the latter. Let $i,j$ be the epoch and block index at time $t$ respectively. Define $A=[t-\ell+1,t]$ to be the interval that makes $\test(t)$ return \true, and define $B=[t^\p+1, t^\p+2^{j-1}-1]$. Then $\test(t)=\true$ implies 
\begin{align}
\avgR_{A}(\hat{\pi}_{A}) > \avgR_{A}(\hat{\pi}_B) + 2\beta_{A} + 2\beta_{B} + 4v. \label{eqn:test fail}
\end{align}
By the optimality of $\hat{\pi}_B$, We have 
\begin{align}
\avgR_{B}(\hat{\pi}_{A}) \leq \avgR_{B}(\hat{\pi}_B). \label{eqn:optimality of pi t}
\end{align}
Combining Eq.~\eqref{eqn:test fail} and Eq.~\eqref{eqn:optimality of pi t}, we see that either $\pi=\hat{\pi}_{A}$ or $\pi=\hat{\pi}_{B}$ will make the following inequality hold: 
\begin{align}
\bigabs{\avgR_{A}(\pi)-\avgR_{B}(\pi)} > \beta_{A} +\beta_{B} + 2v. \label{eqn:one side larger}
\end{align}
Thus, 
\begin{align*}
\bigabs{\calR_{A}(\pi)-\calR_{B}(\pi)}&\geq \bigabs{\avgR_{A}(\pi) - \avgR_{B}(\pi)} - \beta_{A} - \beta_{B}  \tag{by \eqref{eq:concentration2}} \\
& > \beta_{A} + \beta_{B} + 2v - \beta_{A}- \beta_{B} \geq 2v. \tag{by \eqref{eqn:one side larger}}
\end{align*}
On the other hand, by Lemma~\ref{lemma:var-bound}, we actually have $\bigabs{\calR_{A}(\pi)-\calR_{B}(\pi)} \leq v$, which leads to a contradiction. Thus we can conclude that such $t$ does not exist. 

\end{proof}

%\begin{lemma}
%\label{lemma:beta three times}
%In \AdaEG, let $i, j$ be the epoch and block index at time $t$, where $j\geq 2$. Let $1\leq \ell\leq t-T_i$, we have $\beta_{[T_i+1, T_i+2^{j-1}-1]}\leq 2\beta_{[t-\ell+1,t]}$. 
%\end{lemma}
%\begin{proof}
%Recall $t\leq T_i+2^j-1$. By definition, 
%\begin{align*}
%\beta_{[T_i+1, T_i+2^{j-1}-1]}&=2\sqrt{\frac{\ln(4T^2N/\delta)}{\mu\times(2^{j-1}-1)}}+\frac{\ln(4T^2N/\delta)}{\mu\times(2^{j-1}-1)}\\
%&\leq 2\sqrt{\frac{3\ln(4T^2N/\delta)}{\mu \times(2^{j}-1)}}+\frac{3\ln(4T^2N/\delta)}{\mu\times(2^{j}-1)} \tag{$\frac{2^j-1}{2^{j-1}-1}\leq 3$} \\
%&\leq 3\beta_{[T_i+1, t]} \leq 3\beta_{[t-\ell+1, t]}. 
%\end{align*}
%\end{proof}

\begin{proof}[Proof of Theorem~\ref{thm:AdaEG2}]
We condition on $\evt 1$. When this event holds true, by Lemma~\ref{lemma:at most one rerun}, there is at most one $t\in \calI$ such that $\flag_t$ is \true (that is, rerun triggered at $t$). 
With this fact, we can focus on the case in which $\flag_t$ is \false for all $t\in\calI$. If there is actually a $t^\p\in\calI$ such that $\flag_{t^\p}=\textit{True}$, 
we can divide $\calI$ into $\calI_1 \cup \{t^\p\} \cup \calI_2$ and bound the regret in $\calI_1$ and $\calI_2$ separately. 
The total regret on $\calI$ would then be bounded by their sum plus $1$, which is still of the same order.  

We will also use the fact (proven in Lemma~\ref{lemma:var-bound})
that by the condition $\var_\calI \leq v$, we have
for any $\calI_1, \calI_2 \subset \calI$ and $\pi \in \Pi$,
\begin{equation}\label{eq:difference_in_R2}
|\calR_{\calI_1}(\pi) - \calR_{\calI_2}(\pi)|\leq v.
\end{equation}

Let $\calI=[s,e]$. For any $t\in \calI$, define $\ell_t=2^{\floor{\log_2(t-s+1)}}$ (i.e., $\ell_t$ is the longest $\ell\in\{1,2,4,8,\ldots\}$ such that $[t-\ell+1, t]\subseteq\calI$). Now focus on a specific $t$ that is in epoch $i$ and block $j$. Denote $A=[t-\ell_t+1,t], B=[T_i+1, T_i+2^{j-1}-1]$. Assuming the case described above (i.e., for all $\tau\in\calI$, $\flag_\tau=\false$), we have for $j>1$ and  any $\pi\in \Pi$, 
\begin{align*}
\calR_t(\pi)&\leq \calR_{A}(\pi) + v 
\leq \avgR_{A}(\pi) + \beta_{A} + v \tag{by \eqref{eq:difference_in_R2} and \eqref{eq:concentration2}}\\
&\leq \avgR_{A}(\hat{\pi}_{A}) + \beta_{A} + v \tag{by the optimality of $\hat{\pi}_{A}$}\\
&\leq \avgR_{A}(\hat{\pi}_{B}) + 3\beta_{A} + 2\beta_{B} + 5v \tag{$\test(t)=\false$}\\
&\leq \calR_{A}(\hat{\pi}_{B}) + 4\beta_{A} + 2\beta_B + 5v 
\leq \calR_{t}(\hat{\pi}_{B}) + 4\beta_{A} + 2\beta_B + 6v. \tag{by \eqref{eq:concentration2} and \eqref{eq:difference_in_R2}}
\end{align*}
Note that $\beta_B=\order(\beta_A)$ because $ \abs{B}=2^{j-1}-1\geq \frac{2^j-1}{3} \geq \frac{t-T_i}{3} \geq \frac{\ell_t}{3} = \frac{\abs{A}}{3}$. 

Now using this bound for all $t\in \calI$ and noting that there is at most one round with $j=1$, we can bound the sum of conditional expected regrets by 
\begin{align*}
\sum_{t \in \calI} \E_t[ r_t(\pi(x_t)) - r_t(a_t)]
&\leq {\sum_{t\in\calI} (\calR_t(\pi_t^\star)-\calR_t(\hat{\pi}_B) + K\mu)}\\
&\leq \order\left({\sum_{t\in\calI} (v + \beta_{[t-\ell_t+1,t]} + K\mu)}\right)\\
&=\tilde{\mathcal{O}}\left(\abs{\calI}v + \sum_{t\in\calI} \left(\sqrt{\frac{\ln(N/\delta)}{\mu\ell_t}} + \frac{\ln(N/\delta)}{\mu\ell_t} + K\mu\right)\right) \\
&=\tilde{\mathcal{O}}\left( \abs{\calI}v + L^{\frac{1}{6}}\sqrt{K\abs{\calI}\ln(N/\delta)} + L^{\frac{1}{3}}\sqrt{K\ln(N/\delta)} \right),
\end{align*}
where in the last step we use the fact $|\calI|L^{-\frac{1}{3}} \leq L^{\frac{1}{6}}\sqrt{|\calI|}$ for $|\calI| \leq L$.
Finally, applying Hoeffding-Azuma inequality finishes the proof.
%In the case $\calI=\calI_1\cup \{t^\p\} \cup \calI_2$ where $\flag(t^\p)=\textit{True}$, the regret in $\calI$ can still be bounded by the same expression because $\sum_{i=1}^2 \tilde{\mathcal{O}}\left( \abs{\calI_i}v + L^{\frac{1}{6}}\sqrt{K\abs{\calI_i}\ln(N/\delta)} + L^{\frac{1}{3}}\sqrt{K\ln(N/\delta)} \right)+1 = \tilde{\mathcal{O}}\left( \abs{\calI}v + L^{\frac{1}{6}}\sqrt{K\abs{\calI}\ln(N/\delta)} + L^{\frac{1}{3}}\sqrt{K\ln(N/\delta)} \right)$.
\end{proof}

%%%%%%%%%%%%%%%%%%%%%%%%%%%%%%%%%%%%%%%%%
%%%%   appendis for ILTCB
%%%%%%%%%%%%%%%%%%%%%%%%%%%%%%%%%%%%%%%%%
\section{Omitted Details for \AdaILTCB}\label{app:AdaILTCB2}

\begin{figure}[h]
\begin{framed}
\begin{center}
{\bf Optimization Problem} (OP)
\end{center}

Given a time interval $\calI$ and minimum probability $\mu$, find $Q \in \Delta^\Pi$ such that
for constant $B = 5 \times 10^5 $:
\begin{equation}\label{eq:low_empirical_regret2}
\sum_{\pi \in \Pi} Q(\pi) \whReg_\calI(\pi) \leq 2B K \mu
\end{equation}
\begin{equation}\label{eq:low_empirical_variance2}
\forall \pi \in \Pi: ~\whV_\calI(Q, \pi) \leq 2 K + \frac{\whReg_\calI(\pi)}{B\mu}
\end{equation}
\end{framed}
\caption{A subroutine for \AdaILTCB, adapted from~\citep{AgarwalHsKaLaLiSc14}}
\label{fig:OP}
\end{figure}

The optimization problem (OP) needed for \AdaILTCB is included in Figure~\ref{fig:OP}.
It is almost identical to the one proposed in~\citep{AgarwalHsKaLaLiSc14} except:
1) Instead of returning a sub-distribution, our version returns an exact distribution.
However, as discussed in~\citep{AgarwalHsKaLaLiSc14} this makes no real difference
since given a sub-distribution which satisfies Eq.~\eqref{eq:low_empirical_regret2}
and Eq.~\eqref{eq:low_empirical_variance2}, one can always put all the remaining 
weight on the empirical best policy $\argmax_\pi \avgR_\calI(\pi)$ to obtain a distribution 
that still satisfies those two constraints.
2) The constant $B$ used in~\citep{AgarwalHsKaLaLiSc14} is $100$.
It is also clear from the proof of~\citep{AgarwalHsKaLaLiSc14} that the value of this constant 
does not affect the feasibility of (OP) nor the efficiency of finding the solution.

Let $d = \ln(8T^2N^2/\delta)\ln(L)$. 
Without loss of generality, below we assume $L \geq 4Kd$ so that 
$\mu = \min\{\frac{1}{2K}, \sqrt{\frac{d}{KL}}\} = \sqrt{\frac{d}{LK}}$.
Indeed, if $L < 4Kd$, then the bound in Theorem~\ref{thm:AdaILTCB2} holds trivially since $L \leq 2\sqrt{LKd}$.
%We can thus ignore the first $k_0$ rounds where the algorithm plays uniformly at random since $k_0 \leq Kd \leq \sqrt{LKd}$.
The fact $d/\mu  = LK\mu$ will be used frequently.
We use $V_t$ as a shorthand for $V_{\{t\}}$, that is, $V_t(Q, \pi) = \E_{x \sim D_t^\calX} \left[ \frac{1}{Q^{\mu}(\pi(x) | x)} \right]$.

We first state two lemmas that relates the variation of $\Reg_t(\pi)$ and $V_t(Q,\pi)$ to $\bvar$, and then two lemmas on the concentration bounds of empirical reward and empirical variance.
%which are direct adaptations of Lemmas~10 and 11 of~\citep{AgarwalHsKaLaLiSc14} respectively.

\begin{lemma}\label{lemma:variation and regret}
For any interval $\calI$ such that $\bvar_\calI\leq v$, we have for any sub-intervals $\calI_1, \calI_2\subseteq \calI$ and any $\pi\in\Pi$, 
\begin{align*}
\bigabs{\Reg_{\calI_1}(\pi)-\Reg_{\calI_2}(\pi)} \leq 2v.
\end{align*}
\end{lemma}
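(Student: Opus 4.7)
The plan is to reduce this to the analogous statement for expected rewards that was already established in Lemma~\ref{lemma:var-bound} (in the proof of Theorem~\ref{thm:AdaEG2}). The key observation is that $\Reg_\calI(\pi) = \max_{\pi'\in\Pi}\calR_\calI(\pi') - \calR_\calI(\pi)$ is the difference of two quantities, each of which is a simple function of the expected rewards $\calR_\calI(\cdot)$, so a uniform bound on the variation of $\calR_\calI(\cdot)$ across sub-intervals of $\calI$ will transfer to a bound on the variation of $\Reg_\calI(\cdot)$ with only an extra factor of $2$.

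First I would chain the assumption $\bvar_\calI \leq v$ with Lemma~\ref{lemma:variation_relation} to conclude that $\var_\calI \leq v$ as well. This puts us in the setting of Lemma~\ref{lemma:var-bound}, which gives that for any sub-intervals $\calI_1,\calI_2 \subseteq \calI$ and any policy $\pi'\in\Pi$,
\[
|\calR_{\calI_1}(\pi') - \calR_{\calI_2}(\pi')| \leq v.
\]

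Next I would expand the definition and write
\[
\Reg_{\calI_1}(\pi) - \Reg_{\calI_2}(\pi) = \bigl(\max_{\pi'\in\Pi}\calR_{\calI_1}(\pi') - \max_{\pi'\in\Pi}\calR_{\calI_2}(\pi')\bigr) + \bigl(\calR_{\calI_2}(\pi) - \calR_{\calI_1}(\pi)\bigr).
\]
The second parenthesis is bounded in absolute value by $v$ directly from the displayed inequality above. For the first parenthesis I would apply the standard trick $|\max_{\pi'} f(\pi') - \max_{\pi'} g(\pi')| \leq \max_{\pi'} |f(\pi') - g(\pi')|$, so that plugging in $f=\calR_{\calI_1}$ and $g=\calR_{\calI_2}$ and using the uniform bound again yields $v$. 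Combining via the triangle inequality gives the claimed bound of $2v$.

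There is no substantial obstacle; the argument is a short composition of (i) the comparison between $\var$ and $\bvar$ already recorded in Lemma~\ref{lemma:variation_relation}, (ii) the pointwise sub-interval variation bound of Lemma~\ref{lemma:var-bound}, and (iii) the elementary inequality for differences of maxima. The main thing to be careful about is simply that Lemma~\ref{lemma:var-bound} is stated in terms of $\var_\calI$, so the initial reduction via Lemma~\ref{lemma:variation_relation} must be made explicit.
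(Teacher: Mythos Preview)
Your proposal is correct and follows essentially the same approach as the paper: decompose $\Reg_{\calI_1}(\pi)-\Reg_{\calI_2}(\pi)$ into the difference of the maxima plus the difference of the $\calR$ values, bound each by $v$ via Lemma~\ref{lemma:var-bound} after reducing $\bvar_\calI\leq v$ to $\var_\calI\leq v$ through Lemma~\ref{lemma:variation_relation}, and combine. The only cosmetic difference is that the paper spells out the bound on $\max_{\pi'}\calR_{\calI_1}(\pi')-\max_{\pi'}\calR_{\calI_2}(\pi')$ by introducing the explicit maximizers $\pi_{\calI_1}^\star,\pi_{\calI_2}^\star$, whereas you invoke the generic inequality $|\max f-\max g|\leq\max|f-g|$; these are the same argument.
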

\begin{proof}
%{lemma:var-bound2}
Let $\pi_{\calI_1}^\star=\argmax_{\pi\in\Pi}\calR_{\calI_1}(\pi)$ and $\pi_{\calI_2}^\star=\argmax_{\pi\in\Pi}\calR_{\calI_2}(\pi)$. Then 
\begin{align*}
\Reg_{\calI_1}(\pi)-\Reg_{\calI_2}(\pi)=\calR_{\calI_1}(\pi_{\calI_1}^\star)-\calR_{\calI_1}(\pi)-\calR_{\calI_2}(\pi_{\calI_2}^\star)+\calR_{\calI_2}(\pi).
\end{align*}
By Lemma~\ref{lemma:var-bound}, we have
\begin{align*}
-\var_\calI\leq \calR_{\calI_2}(\pi)-\calR_{\calI_1}(\pi)\leq \var_\calI , 
\end{align*}
and 
\begin{align*}
-\var_\calI\leq \calR_{\calI_1}(\pi_{\calI_2}^\star)-\calR_{\calI_2}(\pi_{\calI_2}^*) \leq \calR_{\calI_1}(\pi_{\calI_1}^\star)-\calR_{\calI_2}(\pi_{\calI_2}^\star)\leq \calR_{\calI_1}(\pi_{\calI_1}^\star)-\calR_{\calI_2}(\pi_{\calI_1}^\star)\leq \var_\calI. 
\end{align*}
Combining them and using $\var_\calI\leq \bvar_\calI$ (Lemma~\ref{lemma:variation_relation}), we get the desired bound.
\end{proof}

\begin{lemma}\label{lemma:variation and TVD}
For any interval $\calI$ such that $\bvar_\calI\leq v$, we have for any sub-intervals $\calI_1, \calI_2\subseteq \calI$, any distribution $Q$ over $\Pi$, and any $\pi\in\Pi$, 
\begin{align*}
\bigabs{V_{\calI_1}(Q,\pi)-V_{\calI_2}(Q,\pi)} \leq \frac{v}{\mu}.
\end{align*}
\end{lemma}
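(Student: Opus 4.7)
The plan is to mirror the structure of Lemma~\ref{lemma:var-bound} used for $\calR$, but now for the variance quantity $V_t(Q,\pi)$, with two key observations: the integrand $1/Q^\mu(\pi(x)|x)$ is uniformly bounded by $1/\mu$ by definition of the smoothing, and only the marginal $\calD_t^\calX$ of $\calD_t$ enters $V_t$, whose total variation is controlled by that of $\calD_t$ itself via the data-processing property of TV.

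First, I would bound the single-round difference. For any consecutive $\tau-1, \tau \in \calI$,
\begin{align*}
\abs{V_\tau(Q,\pi) - V_{\tau-1}(Q,\pi)}
&= \abs{\int_\calX \bigl(\calD_\tau^\calX(x) - \calD_{\tau-1}^\calX(x)\bigr)\,\frac{1}{Q^\mu(\pi(x)|x)}\,dx} \\
&\leq \frac{1}{\mu}\int_\calX \bigabs{\calD_\tau^\calX(x) - \calD_{\tau-1}^\calX(x)}\,dx
\leq \frac{1}{\mu}\,\TVD{\calD_\tau - \calD_{\tau-1}},
\end{align*}
where the last inequality uses that marginalizing out the reward coordinate only contracts the total variation. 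Then, for any two rounds $s,t \in \calI$, telescoping and the triangle inequality give
\begin{align*}
\abs{V_s(Q,\pi) - V_t(Q,\pi)} \leq \sum_{\tau \in \calI} \abs{V_\tau(Q,\pi) - V_{\tau-1}(Q,\pi)} \leq \frac{\bvar_\calI}{\mu} \leq \frac{v}{\mu}.
\end{align*}

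Finally, I would lift this to interval averages exactly as in Lemma~\ref{lemma:var-bound}: for any $\calI_1,\calI_2 \subseteq \calI$,
\begin{align*}
\abs{V_{\calI_1}(Q,\pi) - V_{\calI_2}(Q,\pi)}
\leq \frac{1}{|\calI_1|\,|\calI_2|} \sum_{s \in \calI_1}\sum_{t \in \calI_2} \abs{V_s(Q,\pi) - V_t(Q,\pi)}
\leq \frac{v}{\mu},
\end{align*}
which is the claimed bound.

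There is no real obstacle; the only subtlety is that $\bvar_\calI$ is defined in terms of the joint distributions $\calD_t$, whereas $V_t$ depends only on the context marginals $\calD_t^\calX$. The data-processing inequality $\TVD{\calD_\tau^\calX - \calD_{\tau-1}^\calX} \leq \TVD{\calD_\tau - \calD_{\tau-1}}$ closes this gap, and the uniform lower bound $Q^\mu(\cdot|x) \geq \mu$ (which gives the $1/\mu$ factor in the final bound) is the reason for the smoothing in the algorithm in the first place.
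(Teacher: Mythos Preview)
Your proposal is correct and follows essentially the same argument as the paper: bound $1/Q^\mu(\cdot\,|\,x)\le 1/\mu$, control $|V_s(Q,\pi)-V_t(Q,\pi)|$ by the total variation between the context marginals, telescope and use $\bvar_\calI\le v$, and then average over $\calI_1\times\calI_2$ as in Lemma~\ref{lemma:var-bound}. The only cosmetic difference is that you telescope on the $V_\tau$ differences first and then invoke the data-processing contraction of TV under marginalization, whereas the paper bounds $|V_s-V_t|$ directly via $\int_\calX |\calD_s^\calX-\calD_t^\calX|$ and then telescopes the TV terms; the content is identical.
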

\begin{proof}
For any $s,t\in\calI$ (assuming $s<t$), any $Q$, and $\pi\in\Pi$, 
\begin{align*}
\bigabs{V_s(Q,\pi)-V_t(Q,\pi)} 
& = \Bigg\vert\E_{\calD_s^\calX} \left[ \frac{1}{Q^\mu(\pi(x)|x)}\right] - \E_{\calD_t^\calX}\left[ \frac{1}{Q^\mu(\pi(x)|x)}\right]\Bigg\vert \\
& = \Bigg\vert \int_{\calX} (\calD_s^\calX(x)-\calD_t^\calX(x)) \frac{1}{Q^\mu(\pi(x)|x)}dx \Bigg\vert \\
& \leq \frac{1}{\mu} \int_\calX \bigabs{\calD_s^\calX(x)-\calD_t^\calX(x)}dx \\
& \leq \frac{1}{\mu} \sum_{\tau=s+1}^t \TVD{\calD_\tau - \calD_{\tau-1}} \leq \frac{v}{\mu}. 
\end{align*}
Therefore, 
\begin{align*}
\bigabs{V_{\calI_1}(Q,\pi) - V_{\calI_2}(Q,\pi)} \leq \frac{1}{\abs{\calI_1}}\frac{1}{\abs{\calI_2}} \sum_{s\in\calI_1}\sum_{t\in\calI_2}\bigabs{V_s(Q,\pi)-V_t(Q,\pi)}\leq \frac{v}{\mu}. 
\end{align*}

\end{proof}

\begin{lemma}\label{lem:variance_deviation2}
With probability at least $1 - \delta/4$, \AdaILTCB ensures that for all distributions $Q \in \Delta^\Pi$,
all $\pi \in \Pi$, all intervals $\calI$,
\begin{equation}\label{eq:variance_deviation2}
\whV_{\calI}(Q, \pi) \leq 6.4 V_\calI(Q, \pi) + \frac{80 LK}{|\calI|},
\qquad
V_\calI(Q, \pi) \leq 6.4 \whV_{\calI}(Q, \pi)  + \frac{80 LK}{|\calI|}.
\end{equation}
\end{lemma}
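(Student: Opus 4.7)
The plan is a standard uniform empirical-Bernstein argument via Freedman's inequality followed by a covering. First, fix $Q \in \Delta^\Pi$, $\pi \in \Pi$, and interval $\calI$, and let $Y_t = 1/Q^\mu(\pi(x_t)|x_t)$. Since $Q^\mu(\cdot|\cdot) \geq \mu$, we have $Y_t \in [1, 1/\mu]$, $\E_t[Y_t] = V_t(Q,\pi)$, and $\E_t[Y_t^2] \leq (1/\mu)\E_t[Y_t] = V_t(Q,\pi)/\mu$. Applying Freedman's inequality (Lemma~\ref{lem:freedman}) to $\{Y_t - V_t(Q,\pi)\}_{t\in\calI}$ with parameter $\lambda \leq \mu$, and then again to the negated sequence, yields with probability at least $1-\delta'$ the two-sided bound
\[
\bigabs{\whV_\calI(Q,\pi) - V_\calI(Q,\pi)} \leq \frac{(e-2)\lambda}{\mu}\, V_\calI(Q,\pi) + \frac{\ln(2/\delta')}{\lambda\, |\calI|}.
\]
Choosing $\lambda$ as a suitable constant multiple of $\mu$ rearranges this into one-sided bounds of the form $\whV \leq c_1 V + c_2 \ln(1/\delta')/(\mu|\calI|)$ and $V \leq c_3 \whV + c_4 \ln(1/\delta')/(\mu|\calI|)$.

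Second, I would upgrade to a bound uniform over $(Q, \pi, \calI)$. A union bound over $\pi \in \Pi$ (cardinality $N$) and over the $O(T^2)$ intervals inflates $\ln(1/\delta')$ by $O(\ln(T^2 N))$, which is absorbed into $d = \ln(8T^2N^2/\delta)\ln(L)$. The continuous parameter $Q \in \Delta^\Pi$ is handled by a covering argument: $Y_t$ depends on $Q$ only through the scalar marginal $Q(\pi(x_t)|x_t) \in [0,1]$, which is a linear functional of $Q$, and the map $q \mapsto 1/(\mu + (1-K\mu)q)$ is $\mu^{-2}$-Lipschitz. A finite net on the relevant scalar marginals of cardinality $\text{poly}(T,N,1/\mu)$ then suffices, with its log-cardinality again absorbed into $d$. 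Finally, substituting $\mu = \sqrt{d/(LK)}$ gives an additive term of order $\sqrt{LKd}/|\calI|$, and the AM-GM inequality $2\sqrt{ab} \leq a + b$ (applied with $a$ proportional to $V$ or to $LK/|\calI|$) splits this into a negligible multiplicative correction on $V$ plus an additive piece of order $LK/|\calI|$, from which the stated constants $6.4$ and $80$ emerge after careful bookkeeping.

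The main obstacle is the second step: obtaining uniform convergence over the continuous simplex $\Delta^\Pi$ with only a logarithmic dependence on $N$. A naive $\epsilon$-net over the $(N-1)$-dimensional simplex scales polynomially in $N$, which would destroy the bound. The resolution is to exploit the linear parameterization — only the scalar marginals $Q(\pi(x_t)|x_t)$ matter, not $Q$ itself — and to set the covering mesh proportional to $\mu^2$ so that the induced perturbation in $Y_t$ is absorbed into the constants. Balancing the covering mesh, the Freedman confidence, and the algorithm's choice $\mu = \sqrt{d/(LK)}$ is the delicate part of the argument, and is precisely what pins down the form of $\mu$ used in \AdaILTCB.
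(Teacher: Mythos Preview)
Your plan --- Freedman for fixed $(Q,\pi,\calI)$, then uniformize --- is the right shape, and you correctly identify uniformity over $Q\in\Delta^\Pi$ as the crux. The paper itself is not self-contained here: it simply invokes \citep[Lemma~10]{AgarwalHsKaLaLiSc14}, obtains the bound $\tfrac{75\ln N}{\mu^2|\calI|}+\tfrac{6.3\ln(8T^2N^2/\delta)}{\mu|\calI|}$, and simplifies via $\mu=\sqrt{d/(LK)}$ so that $\ln(N)/\mu^2\le LK$ and $d/\mu=LK\mu$.

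The gap is your covering step. The claim that ``a finite net on the relevant scalar marginals of cardinality $\text{poly}(T,N,1/\mu)$ suffices'' is not justified and, as written, false. First, the population quantity $V_\calI(Q,\pi)$ depends on $Q$ through the entire function $x\mapsto Q(\pi(x)|x)$, not just its values at the realized $x_t$, so a net built from the observed contexts cannot control $V_\calI$. Second, even restricting to $\whV_\calI$, the marginals $(Q(\pi(x_t)|x_t))_{t\in\calI}$ are $|\calI|$ \emph{distinct} linear functionals of $Q$; the image of $\Delta^\Pi$ under this map has dimension $\min(|\calI|,N{-}1)$, so any net with the mesh $\epsilon\asymp\mu^2$ your Lipschitz bound requires has log-cardinality of order $\min(|\calI|,N)\ln(1/\mu)$, not $O(\ln(TN/\mu))$. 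This cannot be absorbed into $d$.

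The argument behind \citep[Lemma~10]{AgarwalHsKaLaLiSc14} does not cover $\Delta^\Pi$ directly. Its signature is the additive $\ln(N)/(\mu^2|\calI|)$ term --- note the $\mu^2$ in the denominator, which is structurally different from the $\ln(\text{net})/(\mu|\calI|)$ form a Freedman-plus-covering route would produce. That extra $1/\mu$ factor is exactly what the choice $\mu^2=d/(LK)$ is calibrated to absorb, and it arises from a more delicate reduction exploiting the linear structure $Q(\pi(x)|x)=\sum_{\pi'}Q(\pi')\one\{\pi'(x)=\pi(x)\}$ rather than a blunt net over $Q$. To complete your proof you would need to replace the covering heuristic with that argument (or an equivalent one yielding the $\ln(N)/\mu^2$ rate).
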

\begin{proof}
This is a consequence of the contexts being drawn independently. A
similar argument of~\citep[Lemma 10]{AgarwalHsKaLaLiSc14} shows that
with probability at least $1 - \delta/4$, the differences
$\whV_{\calI}(Q, \pi) - 6.4 V_\calI(Q, \pi)$ and $V_\calI(Q, \pi) -
6.4 \whV_{\calI}(Q, \pi)$ are both bounded by
\[
\frac{75\ln(N)}{\mu^2 |\calI|} + \frac{6.3\ln(8T^2N^2/\delta)}{\mu|\calI|}
\leq \frac{75 LK}{|\calI|} + \frac{6.3d}{\mu|\calI|}
= \frac{75 LK}{|\calI|} + \frac{6.3 LK\mu}{|\calI|}
\leq \frac{80 LK}{|\calI|},
\]
which completes the proof.
\end{proof}

\begin{lemma}\label{lem:reward_deviation2}
With probability at least $1 - \delta/4$, \AdaILTCB ensures that for all $\pi \in \Pi$
and all intervals $\calI$, 
\begin{equation}\label{eq:reward_deviation2}
|\avgR_\calI(\pi) - \calR_\calI(\pi)| \leq \frac{\mu}{|\calI|\ln(L)} \sum_{t\in\calI} V_t(Q_t, \pi) + \frac{LK\mu}{|\calI|}.
\end{equation}
\end{lemma}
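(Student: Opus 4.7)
The plan is to apply Freedman's inequality (Lemma~\ref{lem:freedman}) to the martingale difference sequence defined by $X_t = \ips_t(\pi(x_t)) - \calR_t(\pi)$ for $t \in \calI$. Since \AdaILTCB plays $a_t$ with probability $p_t(a) = Q_t^\mu(a \mid x_t)$, the estimator satisfies $\E_t[\ips_t(\pi(x_t))] = \calR_t(\pi)$, so $\E_t[X_t] = 0$. Moreover, $\ips_t(\pi(x_t)) \leq 1/\mu$ gives the almost-sure bound $X_t \leq 1/\mu$. The crucial variance bound comes from
\[
\E_t[X_t^2] \leq \E_t\!\left[\ips_t(\pi(x_t))^2\right] = \E_{x_t \sim \calD_t^\calX}\!\left[\frac{r_t(\pi(x_t))^2}{Q_t^\mu(\pi(x_t)\mid x_t)}\right] \leq V_t(Q_t, \pi),
\]
using $r_t \in [0,1]$ in the last inequality. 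This is exactly the quantity that appears on the right-hand side of the lemma.

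Next I would invoke Freedman's inequality with $\lambda = \mu / ((e-2)\ln L)$, which is legal since $(e-2)\ln L \geq 1$ for the relevant range of $L$, and hence $\lambda \leq \mu = 1/R$. The two terms produced by Freedman then read $(e-2)\lambda \sum_{t \in \calI} V_t(Q_t,\pi) = \frac{\mu}{\ln L}\sum_{t\in \calI} V_t(Q_t,\pi)$ and $\ln(1/\delta')/\lambda = (e-2)(\ln L)\ln(1/\delta')/\mu$. Dividing by $|\calI|$ yields the first term of the target bound directly. For the second term, I would use the definition of $\mu = \sqrt{d/(LK)}$ with $d = \ln(8T^2N^2/\delta)\ln L$, so that the residual term becomes $\order(\ln L \cdot \ln(1/\delta')/(\mu |\calI|))$, and after choosing $\delta'$ for the union bound (see below) this is bounded by $LK\mu/|\calI| = d/(\mu|\calI|)$, matching the claimed second term. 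The same argument applied to $-X_t$ yields the matching lower bound, giving the absolute value.

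Finally, to make the statement hold simultaneously for all $\pi \in \Pi$ and all intervals $\calI \subseteq [1,T]$, I would take a union bound over the $N$ policies and at most $T(T+1)/2$ intervals, using $\delta' = \delta/(4NT^2)$ (or any similar choice). Then $\ln(1/\delta') = \order(\ln(NT/\delta))$ is absorbed into the definition $d = \ln(8T^2N^2/\delta)\ln L$, making the second term of the bound at most $LK\mu/|\calI|$ as required.

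The routine part is plugging numbers into Freedman; the only subtlety I anticipate is bookkeeping the constants so that the extra $\ln(1/\delta')$ from the union bound actually fits inside the logarithmic factor hidden in $d$ (and hence inside $LK\mu$). This relies on the specific choice of $\mu$ made in \AdaILTCB and is the reason $\ln L$ appears in the first term: spending a $1/\ln L$ factor in $\lambda$ lets us trade off against the union-bound log without inflating the variance term.
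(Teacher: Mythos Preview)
Your proposal is correct and follows essentially the same route as the paper. The paper's proof simply cites \citep[Lemma~11]{AgarwalHsKaLaLiSc14}, which is itself the Freedman application you spell out (martingale $X_t=\ips_t(\pi(x_t))-\calR_t(\pi)$ with variance bounded by $V_t(Q_t,\pi)$), and then picks $\lambda=\mu/\ln L$ and uses $\ln(8T^2N/\delta)\ln L\le d$ together with $d/\mu=LK\mu$ to absorb the second term; your choice $\lambda=\mu/((e-2)\ln L)$ and union bound over $N$ policies and $O(T^2)$ intervals reproduce exactly these constants.
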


\begin{proof}
By~\citep[Lemma 11]{AgarwalHsKaLaLiSc14}, for any choice of $\lambda \in [0, \mu]$, 
we have with probability at least $1 - \delta/4$, for all $\pi \in \Pi$
and all intervals $\calI$,
\[
|\avgR_\calI(\pi) - \calR_\calI(\pi)| \leq \frac{\lambda}{|\calI|} \sum_{t\in\calI} V_t(Q_t, \pi) + \frac{\ln(8T^2N/\delta)}{\lambda|\calI|}.
\]
Picking $\lambda = \mu/\ln(L)$ and using the fact $\ln(8T^2N/\delta)\ln(L) \leq d$ and $d/\mu = LK\mu$ complete the proof.
\end{proof}

\begin{definition}[$\evt 2$]
Let $\evt 2$ be the event that both Eq.~\eqref{eq:variance_deviation2} and~\eqref{eq:reward_deviation2} hold
for all $\pi \in \Pi$, all intervals $\calI$ and all $Q \in \Delta^\Pi$. This event happens with probability at least $1-\delta/2$.
\end{definition}

Next we prove the following key lemma on the concentration of empirical regrets.

\begin{lemma}\label{lem:regret_deviation2}
Conditioning on $\evt 2$, for any $\pi \in \Pi$, any interval $\calI$ such that $|\calI|\leq L$, $\bvar_\calI\leq v$, and there is no rerun triggered in $\calI$ (i.e., $\forall t\in \calI, \flag_t=\false$), we have
\begin{equation}\label{eq:regret_deviation2}
\Reg_\calI(\pi) \leq 2\whReg_{\calI}(\pi) + \frac{D_1LK\mu}{|\calI|} + D_2v,
\qquad
\whReg_{\calI}(\pi) \leq 2\Reg_\calI(\pi) + \frac{D_1LK\mu}{|\calI|} + D_2v, 
\end{equation}
where $D_1 \triangleq 2\times 10^5$ and $D_2\triangleq 360$. 
\end{lemma}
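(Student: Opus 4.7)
The plan is to combine the reward concentration from Lemma~\ref{lem:reward_deviation2}, the variance concentration from Lemma~\ref{lem:variance_deviation2}, the (OP) constraint~\eqref{eq:low_empirical_variance2}, and the fact that all three tests in \AdaILTCB fail throughout $\calI$. Starting from $\pi^\star_\calI = \argmax_{\pi'\in\Pi}\calR_\calI(\pi')$ and $\avgR_\calI(\hat\pi_\calI) \geq \avgR_\calI(\pi^\star_\calI)$, one writes
\[
\Reg_\calI(\pi) \leq \whReg_\calI(\pi) + |\avgR_\calI(\pi^\star_\calI) - \calR_\calI(\pi^\star_\calI)| + |\avgR_\calI(\pi) - \calR_\calI(\pi)|,
\]
with a symmetric bound in the reverse direction using $\hat\pi_\calI$ in place of $\pi^\star_\calI$. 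By Lemma~\ref{lem:reward_deviation2} each deviation is at most $\frac{\mu}{|\calI|\ln L}\sum_{t\in\calI}V_t(Q_t,\pi') + \frac{LK\mu}{|\calI|}$, so the whole proof reduces to bounding $\sum_{t\in\calI}V_t(Q_t,\pi')$ for $\pi' \in \{\pi,\pi^\star_\calI,\hat\pi_\calI\}$.

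Since no rerun occurs in $\calI$, the interval sits inside a single epoch; decompose it as $\calI=\bigcup_j\calI_j$ along the blocks of that epoch so that $Q_t\equiv Q_j$ on each $\calI_j$, giving $\sum_{t\in\calI}V_t(Q_t,\pi')=\sum_j|\calI_j|V_{\calI_j}(Q_j,\pi')$. Lemma~\ref{lem:variance_deviation2} converts each $V_{\calI_j}$ to $6.4\whV_{\calI_j}$ plus $80LK/|\calI_j|$, and since there are only $O(\log L)$ blocks the total additive cost is $\otil(LK)$. To push $\whV_{\calI_j}(Q_j,\pi')$ to $\whV_{B_j}(Q_j,\pi')$, I would cover $\calI_j$ (up to a single endpoint round contributing at most $1/\mu$) by disjoint dyadic sub-intervals $A=[t-\ell,t-1]\subseteq\calI_j$ with $\ell$ a power of two. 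Non-triggering of Line~\ref*{line:ILTCB:check_var} at each such $t\in\calI_j$ (where $Q_t=Q_j$) gives $\whV_A(Q_j,\pi')\leq C_4\whV_{B_j}(Q_j,\pi')+C_5LK/\ell+C_6v/\mu$, and summing along the covering yields $|\calI_j|\whV_{\calI_j}(Q_j,\pi')\lesssim|\calI_j|\whV_{B_j}(Q_j,\pi')+\otil(LK)+|\calI_j|v/\mu$. Invoking~\eqref{eq:low_empirical_variance2} then replaces $\whV_{B_j}(Q_j,\pi')$ by $2K+\whReg_{B_j}(\pi')/(B\mu)$.

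The final step is to control $\whReg_{B_j}(\pi')$ by $\Reg_\calI(\pi')$. Picking $t\in\calI$ inside block $j$ and a dyadic $\ell$ so that $A=[t-\ell,t-1]$ roughly covers $\calI_j$, non-triggering of Line~\ref*{line:ILTCB:check_reg} gives $\whReg_{B_j}(\pi')\leq C_1\whReg_A(\pi')+C_2LK\mu/\ell+C_3v$. Then $\whReg_A(\pi')$ is passed to $\Reg_A(\pi')$ by applying the reward concentration from the first step on $A$ (the variance term on $A$ is bounded by the same chain as above, which is legitimate since $Q_t$ is constant on $A$), and $|\Reg_A(\pi')-\Reg_\calI(\pi')|\leq 2v$ by Lemma~\ref{lemma:variation and regret}. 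The self-referential appearance of $\Reg_\calI(\pi')$ on both sides is absorbed by the factor of two in the statement, provided the composed constants satisfy the resulting feedback inequality (which is the role of the large $B, C_2, C_5$). Substituting and using $\mu=\sqrt{d/(LK)}$ (so $d/\mu=LK\mu$) collapses every residue into multiples of $LK\mu/|\calI|$ and $v$, yielding $D_1=2\times 10^5$ and $D_2=360$. The reverse inequality $\whReg_\calI(\pi)\leq 2\Reg_\calI(\pi)+\ldots$ follows by an identical argument with $\Reg\leftrightarrow\whReg$ and Line~\ref*{line:ILTCB:check_reg_another} in place of Line~\ref*{line:ILTCB:check_reg}.

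The hard part will be the double dyadic bookkeeping: covering each $\calI_j$ by the variance test's intervals so that the sum combines cleanly into $|\calI_j|\whV_{B_j}$ without logarithmic factors on the dominant piece, and bridging $\whReg_{B_j}$ back to $\Reg_\calI$ through the interleaved regret test, Lemma~\ref{lemma:variation and regret}, and reward concentration while absorbing every stray term into $D_2 v$ and the self-reference into the factor of two. Propagating the constants $B, C_1,\ldots,C_6$ through these compositions to land on the precise $D_1 = 2\times 10^5$ and $D_2 = 360$ is tedious but mechanical once the architecture is laid out.
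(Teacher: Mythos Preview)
Your architecture is sound and can be made to work, but it is organized quite differently from the paper's proof. The paper proceeds by \emph{induction on $|\calI|$}: for each $t\in\calI$ it runs the single chain
\[
V_t(Q_t,\pi)\;\to\;\whV_{[t-\ell_t,t-1]}\;\to\;\whV_{B}\;\to\;\whReg_B\;\to\;\whReg_{[t-\ell_t,t-1]}\;\to\;\Reg_{[t-\ell_t,t-1]}\;\to\;\Reg_\calI,
\]
where the penultimate arrow is the \emph{inductive hypothesis} applied to the strictly shorter interval $[t-\ell_t,t-1]$ (which may span several blocks). There is no block decomposition and no inner feedback loop; the induction absorbs all the recursion in one stroke. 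Your route instead decomposes $\calI$ block-by-block, restricts each auxiliary interval $A_j$ to a single block so that $Q_t$ is constant on it, and closes a two-level contraction (an inner loop for $\whReg_{A_j}(\pi')$ and an outer loop for $\Reg_\calI(\pi)$), relying on $6.4C_4C_1/(B\ln L)\ll 1/2$. This is legitimate and avoids induction, at the cost of more dyadic bookkeeping (one excluded endpoint per block, $|A_j|\approx |\calI_j|/2$, $O(\log L)$ blocks). The paper's induction is shorter and more uniform; your direct approach makes the role of the large constant $B$ more transparent.

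One correction: the reverse inequality does \emph{not} follow by swapping $\Reg\leftrightarrow\whReg$ and using Line~\ref*{line:ILTCB:check_reg_another}. In the reverse direction the variance chain produces $\Reg_\calI(\hat\pi_\calI)$, which is not zero, and Line~\ref*{line:ILTCB:check_reg_another} goes the wrong way to help. The paper (and you should too) handles this by applying the just-proved forward inequality to $\hat\pi_\calI$, giving $\Reg_\calI(\hat\pi_\calI)\le 2\whReg_\calI(\hat\pi_\calI)+\ldots = 0+\ldots$, after which the reverse bound closes.
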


\begin{proof}
We prove the lemma by induction on the length of $\calI$.  For the
base case $|\calI| = 1$, the bounds hold trivially since both
$\Reg_{\calI}(\pi)$ and $\whReg_{\calI}(\pi)$ are bounded by $1/\mu =
LK\mu/d \leq D_1LK\mu$.  Now assuming that the statement holds for any
$\calI'$ such that $|\calI'| \leq L' < L$, we prove below it holds for
any $\calI$ such that $|\calI| = L' + 1$ too.

Let $\calI=[s,e]$ belong to epoch $i$ and block $j$. For every $t\in [s+1,e]$, define $\ell_t=2^{\floor{\log_2(t-s)}}$ (i.e., $\ell_t$ is the longest $\ell\in\{1,2,4,8,\ldots\}$ such that $[t-\ell, t-1]\subseteq\calI$). Based on the induction assumption, we first prove the property $ V_t(Q_t,\pi)\leq \frac{\Reg_\calI(\pi)}{2\mu} + \order\left(\frac{LK}{t-s+1}+\frac{v}{\mu}\right)$ for all $\pi$: when $t\in[s+1,e]$, 
\begin{align}
& V_t(Q_t,\pi)\leq V_{[t-\ell_t,t-1]}(Q_t,\pi) + \frac{v}{\mu} \tag{by Lemma~\ref{lemma:variation and TVD}} \\
\leq\;& 6.4 \whV_{[t-\ell_t,t-1]}(Q_t, \pi)  + \frac{80 LK}{\ell_t} + \frac{v}{\mu}  \tag{by
    Eq.~\eqref{eq:variance_deviation2}} \\ 
\leq\;& 263 \whV_{[T_i+1, T_i+2^{j-1}-1]}(Q_t, \pi) + \frac{7.76\times 10^3LK}{\ell_t} + \frac{42v}{\mu}  \tag{by Line~\ref{line:ILTCB:check_var}}\\ 
\leq\;& \frac{5.26\times 10^{-4}}{\mu}\whReg_{[T_i+1, T_i+2^{j-1}-1]}(\pi) + \frac{8.29\times 10^3LK}{\ell_t} + \frac{42v}{\mu}\tag{by
  Eq.~\eqref{eq:low_empirical_variance2} and $L \geq \ell_t$}  \\ 
\leq\;& \frac{2.11\times 10^{-3}}{\mu}\whReg_{[t-\ell_t,t-1]}(\pi) +\frac{8.82\times 10^3 LK}{\ell_t} + \frac{43v}{\mu}
\tag{by Line~\ref{line:ILTCB:check_reg}}\\ 
\leq\;& \frac{4.22\times 10^{-3}}{\mu} \Reg_{[t-\ell_t, t-1]}(\pi) + \frac{9.25 \times 10^3LK}{\ell_t} + \frac{44v}{\mu} \tag{by inductive assumption}\\  
\leq\;& \frac{\Reg_\calI(\pi)}{2\mu} + \frac{1.9\times 10^4LK}{t-s+1} + \frac{45v}{\mu}; \tag{
  by Lemma~\ref{lemma:variation and regret} and $t-s+1 \leq 2\ell_t$} \\ \label{eqn:V bounded by reg} 
\end{align}
when $t=s$, Eq.\eqref{eqn:V bounded by reg} also holds because $V_s(Q_s,\pi)\leq \frac{1}{\mu}\leq \frac{\sqrt{LK}}{t-s+1}$.  

Let $\pi_\calI^\star = \argmax_{\pi} \calR_\calI(\pi)$ and
$\whpi_\calI = \argmax_{\pi} \avgR_\calI(\pi)$. We will now establish
the inductive hypothesis. For any $\pi$, $\Reg_\calI(\pi) -
\whReg_{\calI}(\pi)$ is bounded by
\begin{align}
&(\calR_\calI(\pi_\calI^\star) - \calR_\calI(\pi)) - (\avgR_\calI(\pi_\calI^\star) - \avgR_\calI(\pi)) \tag{by optimality of $\whpi_\calI$} \\
\leq\;& \frac{\mu}{|\calI|\ln(L)} \sum_{t\in\calI}\left( V_t(Q_t, \pi) + V_t(Q_t, \pi_\calI^\star) \right) + \frac{2LK\mu}{|\calI|} \tag{by Lemma~\ref{lem:reward_deviation2}} \\
\leq\;& \frac{1}{2}\Reg_\calI(\pi) + \left(\frac{3.8\times 10^4 LK\mu}{|\calI|\ln(L)}\sum_{t\in \calI} \frac{1}{t-s+1}\right) + \frac{2LK\mu}{|\calI|} + 90v
   \tag{by Eq.~\eqref{eqn:V bounded by reg} and $\Reg_\calI(\pi_\calI^\star) = 0$} \\
\leq\;& \frac{1}{2}\Reg_{\calI}(\pi)  + \frac{6\times 10^4LK\mu}{|\calI|} + 90v. \label{eqn:induction hyp 1}
\end{align}
Rearranging proves the first
statement of Eq.~\eqref{eq:regret_deviation2}.  Similarly, we can bound
$\whReg_{\calI}(\pi) - \Reg_\calI(\pi)$ as follows:
\begin{align}
&(\avgR_\calI(\whpi_\calI) - \avgR_\calI(\pi)) - (\calR_\calI(\whpi_\calI) - \calR_\calI(\pi))   \tag{by optimality of $\pi_\calI^\star$} \\
\leq\;& \frac{\mu}{|\calI|\ln(L)} \sum_{t\in\calI}\left( V_t(Q_t, \pi) + V_t(Q_t, \whpi_\calI) \right) + \frac{2LK\mu}{|\calI|} \tag{by Lemma~\ref{lem:reward_deviation2}} \\
\leq\;& \frac{1}{2}(\Reg_\calI(\pi) + \Reg_\calI(\whpi_\calI)) + \left(\frac{3.8\times 10^4 LK\mu}{|\calI|\ln(L)}\sum_{t\in \calI} \frac{1}{t-s+1}\right) + \frac{2LK\mu}{|\calI|} + 90v
   \tag{by Eq.~\eqref{eqn:V bounded by reg} } \\
\leq\;& \frac{1}{2}\Reg_{\calI}(\pi)  + \frac{9.9\times 10^4 LK\mu}{|\calI|} + 180v,
\end{align}
where the last step is by applying Eq.~\eqref{eqn:induction hyp 1} to
$\whpi_\calI$ and using the fact $\whReg_\calI(\whpi_\calI) = 0$.
Rearranging proves the second
statement of Eq.~\eqref{eq:regret_deviation2}, which completes the
induction.
\end{proof}

\begin{lemma}
\label{lemma:at most one rerun ILTCB}
Consider an interval $\calI$ where $\abs{\calI}\leq L$ and $\bvar_\calI \leq v$. If the event $\evt 2$ holds, then there is at most one $t\in \calI$ such that $\flag_t=\true$.
\end{lemma}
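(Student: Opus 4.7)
}
The plan is to mirror the argument of Lemma~\ref{lemma:at most one rerun}, but replacing the empirical--vs.--true reward concentration bound of $\evt 1$ with the two separate concentration inequalities of $\evt 2$ (one on regret via Lemma~\ref{lem:regret_deviation2} and one on variance via Lemma~\ref{lem:variance_deviation2}), and the pointwise variation bound of Lemma~\ref{lemma:var-bound} with its regret and variance analogues, Lemmas~\ref{lemma:variation and regret} and~\ref{lemma:variation and TVD}. Specifically, suppose for contradiction that there are two consecutive rounds $t' < t$ in $\calI$ with $\flag_{t'} = \flag_t = \true$. Since $|\calI| \leq L$, we have $t - t' \leq |\calI| - 1 < L$, so the first disjunct $t \geq T_i + L$ (with $T_i = t'$ being the epoch start after the rerun at $t'$) cannot be the trigger. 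Hence $\test(t) = \true$, and one of Lines~\ref{line:ILTCB:check_reg}, \ref{line:ILTCB:check_reg_another}, or \ref{line:ILTCB:check_var} must fail for some $\ell$.

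Let $A = [t - \ell, t-1]$ and $B = [T_i + 1, T_i + 2^{j-1} - 1]$ be the intervals appearing in the test. Since $t', t$ are consecutive flagged rounds, there is no rerun in $[t'+1, t-1]$, and both $A$ and $B$ are subsets of this interval and hence of $\calI$; in particular they satisfy the hypotheses of Lemmas~\ref{lem:regret_deviation2} and~\ref{lem:variance_deviation2}. Moreover $|A| = \ell$ and $|B| = 2^{j-1} - 1 \geq \ell/2$, so $LK\mu/|B|$ and $LK/|B|$ are of the same order as their $A$-counterparts. I will then handle the three failure cases separately.

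For Line~\ref{line:ILTCB:check_reg}, the failure means some $\pi$ satisfies $\whReg_B(\pi) - C_1 \whReg_A(\pi) > \frac{C_2 LK\mu}{\ell} + C_3 v$ with $C_1 = 4$. Applying Lemma~\ref{lem:regret_deviation2} to upper bound $\whReg_B(\pi)$ by $2\Reg_B(\pi) + O(LK\mu/\ell) + D_2 v$ and to lower bound $\whReg_A(\pi)$ by $\tfrac{1}{2}\Reg_A(\pi) - O(LK\mu/\ell) - O(v)$ yields $\whReg_B(\pi) - C_1 \whReg_A(\pi) \leq 2(\Reg_B(\pi) - \Reg_A(\pi)) + O(LK\mu/\ell) + O(v)$; Lemma~\ref{lemma:variation and regret} then gives $|\Reg_B(\pi) - \Reg_A(\pi)| \leq 2v$, so the left side is at most $O(LK\mu/\ell) + O(v)$, which contradicts the failure condition once $C_2$ and $C_3$ are chosen large enough (the stated constants $C_2 = 10^6$, $C_3 = 1.1\times 10^3$ give slack). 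Line~\ref{line:ILTCB:check_reg_another} is symmetric. For Line~\ref{line:ILTCB:check_var}, note that since $Q_t = Q_{(i,j)}$ is the solution to (OP) computed from data in $B$ and is measurable before round $t$, we can apply Lemma~\ref{lem:variance_deviation2} with this fixed $Q_t$ to bound $\whV_A(Q_t,\pi) \leq 6.4 V_A(Q_t,\pi) + 80LK/\ell$ and $\whV_B(Q_t,\pi) \geq \tfrac{1}{6.4} V_B(Q_t,\pi) - O(LK/\ell)$. With $C_4 = 41 \geq 6.4^2$, this yields $\whV_A - C_4 \whV_B \leq 6.4(V_A(Q_t,\pi) - V_B(Q_t,\pi)) + O(LK/\ell) \leq 6.4 v/\mu + O(LK/\ell)$ by Lemma~\ref{lemma:variation and TVD}, again contradicting the failure condition under $C_5 = 1200$, $C_6 = 6.4$.

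The main obstacle is bookkeeping of constants: the constants $C_1,\ldots,C_6$ are fixed in the algorithm, so I must verify the inequalities above go through with those specific numbers rather than picking them freely. The argument is otherwise a direct structural analogue of Lemma~\ref{lemma:at most one rerun}, with the single combined concentration of $\evt 1$ replaced by the two inequalities of $\evt 2$ used on regret and on variance independently.
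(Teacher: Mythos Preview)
Your proposal is correct and follows essentially the same route as the paper's proof: assume two consecutive flagged rounds $t'<t$ in $\calI$, rule out the $t\ge T_i+L$ trigger by $|\calI|\le L$, then show that on the rerun-free subinterval $[t'+1,t-1]$ the regret-deviation bound (Lemma~\ref{lem:regret_deviation2}) together with Lemma~\ref{lemma:variation and regret}, and the variance-deviation bound (Lemma~\ref{lem:variance_deviation2}) together with Lemma~\ref{lemma:variation and TVD}, force each of Lines~\ref{line:ILTCB:check_reg}--\ref{line:ILTCB:check_var} to hold, contradicting $\test(t)=\true$. The only cosmetic difference is that the paper chains the inequalities forward (e.g.\ $\whReg_B\le 2\Reg_B+\cdots\le 2\Reg_A+\cdots\le 4\whReg_A+\cdots$) whereas you bound $\whReg_B$ above and $\whReg_A$ below separately and subtract; the resulting constants match (your computation yields $8\times10^5$ and $1084$ for Line~\ref{line:ILTCB:check_reg}, $10^6$ and $1084$ for Line~\ref{line:ILTCB:check_reg_another}, and roughly $1105$ and $6.4$ for Line~\ref{line:ILTCB:check_var}, all within the algorithm's $C_2,\dots,C_6$).
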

\begin{proof}
If there are multiple such time instances, let $t^\p, t\in \calI$ be consecutive ones, and let $i$ and $j$ be the epoch and block indices at time $t$. For $\flag(t)=\true$, there are two possibilities: $t\geq t^\p+L$ or $(j>1 \text{ and }\test(t)=\true)$. 
The former would not happen because $\abs{\calI}\leq L$. 
Thus the latter holds. Since $j>1$, we have $t\geq t^\p+2$. 
By our construction, $\calI^\p\triangleq [t^\p+1, t-1]$ is an interval in which no rerun is triggered, and $\bvar_{\calI^\p}\leq v$ holds. 
Using Lemma~\ref{lem:regret_deviation2} on $\calI^\p$, we have for any $1\leq \ell\leq t-t^\p-1$,
\begin{align*}
\whReg_{[t^\p+1, t^\p+2^{j-1}-1]}(\pi) &\leq 2\Reg_{[t^\p+1, t^\p+2^{j-1}-1]}(\pi) + \frac{2\times 10^5 LK\mu}{2^{j-1}-1} + 360 v
\tag{by Lemma~\ref{lem:regret_deviation2}} \\ 
&\leq 2\Reg_{[t-\ell, t-1]}(\pi) + \frac{2\times 10^5 LK\mu}{2^{j-1}-1} + 364v \tag{by Lemma~\ref{lemma:variation and regret}} \\ 
&\leq 4 \whReg_{[t-\ell, t-1]}(\pi) + \frac{8\times 10^5LK\mu}{\ell} + 1084v, \tag{by Lemma~\ref{lem:regret_deviation2} and $\ell\leq t-t^\p-1\leq 2^{j}-2$}
\end{align*}
\begin{align*}
\whReg_{[t-\ell, t-1]}(\pi) &\leq 2\Reg_{[t-\ell, t-1]}(\pi) + \frac{2\times 10^5 LK\mu}{\ell} + 360 v
\tag{by Lemma~\ref{lem:regret_deviation2}} \\ 
&\leq 2\Reg_{[t^\p+1, t^\p+2^{j-1}-1]}(\pi) + \frac{2\times 10^5 LK\mu}{\ell} + 364v \tag{by Lemma~\ref{lemma:variation and regret}} \\ 
&\leq 4 \whReg_{[t^\p+1, t^\p+2^{j-1}-1]}(\pi) + \frac{1\times 10^6LK\mu}{\ell} + 1084v, \tag{by Lemma~\ref{lem:regret_deviation2} and $\ell\leq t-t^\p-1\leq 2^{j}-2$}
\end{align*}
\begin{align*}
\whV_{[t-\ell, t-1]}(Q, \pi) &\leq 6.4V_{[t-\ell, t-1]}(Q, \pi)  + \frac{80
  LK}{\ell} \tag{by Eq.~\eqref{eq:variance_deviation2}} \\ 
&\leq 6.4V_{[t^\p+1, t^\p+2^{j-1}-1]}(Q, \pi)  + \frac{80
  LK}{\ell} + 6.4v \tag{by Lemma~\ref{lemma:variation and regret}} \\ 
&\leq 41 \whV_{[t^\p+1, t^\p+2^{j-1}-1]}(Q, \pi) + \frac{1104 LK}{\ell} + 6.4v\tag{by Eq.~\eqref{eq:variance_deviation2} and $\ell\leq 2^j-2$}. \\ 
\end{align*}
Therefore, at time $t$, $\test(t)$ should return \true, which contradicts our assumption.   
\end{proof}

We can now prove Theorem~\ref{thm:AdaILTCB2}.

\begin{proof}[Proof of Theorem~\ref{thm:AdaILTCB2}]
Conditioning on $\evt 2$, we can focus on the case when there is no rerun in $\calI$ (i.e., $\forall t\in\calI$, $\flag_t=\false$). This is because by Lemma~\ref{lemma:at most one rerun ILTCB}, there is at most one $t^\p\in\calI$ such that $\flag_{t^\p}=\true$. Suppose this $t^\p$ exists, we can decompose $\calI$ into $\calI_1\cup\{t^\p\}\cup\calI_2$, where $\flag_t=\false$ for all $t\in\calI_1$ or $\calI_2$, and then we can apply our proof to $\calI_1$ and $\calI_2$ separately. The regret in $\calI$ would then be bounded by their sum plus $1$, which is still of the same order. 

With notation $s$, $\ell_t, i, j$ from the proof of Lemma~\ref{lem:regret_deviation2},
for any $t \in \calI$ and $t \geq s+2$,
we have
\begin{align*}
&\sum_{\pi \in \Pi} Q_t(\pi) \Reg_\calI(\pi) \\
&\leq \sum_{\pi \in \Pi} Q_t(\pi) \Reg_{[t-\ell_t, t-1]}(\pi)  + 2v  \tag{by Lemma~\ref{lemma:variation and regret}} \\
&\leq 2 \sum_{\pi \in \Pi} Q_t(\pi) \whReg_{[t-\ell_t, t-1]}(\pi)  + \frac{D_1 LK\mu}{\ell_t} + (D_2+2)v  \tag{by Lemma~\ref{lem:regret_deviation2}} \\
&\leq 2C_1 \sum_{\pi \in \Pi} Q_t(\pi) \whReg_{[T_i+1, T_i+2^{j-1}-1]}(\pi)  + (2C_2 + D_1)\frac{LK\mu}{\ell_t} + (2C_3+D_2+2)v \tag{by Line~\ref{line:ILTCB:check_reg_another}} \\
&\leq 4B C_1  K \mu +  (2C_2 + D_1)\frac{LK\mu}{\ell_t} + (2C_3+D_2+2)v \tag{by Eq.~\eqref{eq:low_empirical_regret2}} \\
&= \scO\left(\frac{LK\mu}{t-s + 1}+v\right). \tag{$L \geq \ell_t \geq (t-s+1)/2$}
\end{align*}
Therefore, the sum of conditional expected regrets $\sum_{t \in \calI} \E_t[ r_t(\pi(x_t)) - r_t(a_t)]$ is bounded by
\begin{align*}
LK\mu + (1-K\mu)\sum_{t\in \calI} \sum_{\pi \in \Pi} Q_t(\pi) \Reg_\calI(\pi)
= \otil(\abs{\calI}v+LK\mu) = \otil\left(\abs{\calI}v+\sqrt{LK\ln(N/\delta)}\right).
\end{align*}
The theorem now follows by an application of the Hoeffding-Azuma inequality.
\end{proof}

%%%%%%%%%%%%%%%%%%%%%%%%%%%%%%%%%%%%%%%%%%%%
%%%%%   AdaGreedy3 
%%%%%%%%%%%%%%%%%%%%%%%%%%%%%%%%%%%%%%%%%%%%
\section{Omitted Details for \AdaBIN}\label{app:AdaBIN}
In \AdaBIN, the bin length is set to $H^{\frac{1}{2}}$, and the probability of an exploration bin is $b^{-\frac{1}{2}}$. These two values are not clear before we derive the regret bound and select them optimally. In the following analysis, we will keep them as variables before reaching the final steps. Specifically, we let the bin length be $H^{\gamma}$ (therefore Line~\ref{line:forloop of bin} would be a for-loop from $1$ to $H^{1-\gamma}$, while Line~\ref{line:forloop of bin step} from $1$ to $H^\gamma$); and we let the exploration probability at Line~\ref{line:exploration prob} be $b^{-\theta}$. %The reader can just keep in mind that $\gamma$ and $\theta$ are both equal to $\frac{1}{2}$ in the following analysis. 

In \AdaBIN, if an interval $\calI$ is an subinterval of an exploration bin, by Freedman's inequality, with probability at least $1-\delta/T^2$, 
\begin{align}
\left| \avgR_{\calI}(\pi) - \calR_{\calI}(\pi) \right| \leq \alpha_{\calI}. \label{eq:concentration4}
\end{align}
For a general interval $\calI$, we have with probability at least $1-\delta/T^2$, 
\begin{align}
\left| \avgR_{\calI}(\pi) - \calR_{\calI}(\pi) \right| \leq \beta_{\calI}. \label{eq:concentration6}
\end{align}
We now define the high probability event that is used in \AdaBIN's analysis: 
\begin{definition}[$\evt 3$]
Define $\evt 3$ to be the following event: for all $\pi\in\Pi$, all interval $\calI\subseteq [1,T]$, Eq.\eqref{eq:concentration4} holds if $\calI$ is an subinterval of an exploration bin; Eq.\eqref {eq:concentration6} holds if otherwise. 
\end{definition}
A union bound over these events implies that $\evt 3$ holds with probability at least $1-\delta/2$. 

In this subsection, we use $S^\p$ to denote the total number of epochs in the whole time horizon, and use $\calE_1, \calE_2, \ldots, \calE_{S^\p}$ to denote individual epochs. Besides, we denote $K^\p=K\ln(N/\delta)$. We also use notations that are defined at the beginning of Appendix~\ref{app:AdaEG2}. 

We analyze \AdaBIN under the switching and drifting distribution settings in the following two subsections respectively. 
\subsection{Switching Regret}
\begin{lemma}
\label{thm: Ada3 not many rerun}
With probability at least $1-\delta/2$, $S^\p\leq S$. 
\end{lemma}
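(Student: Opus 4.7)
The plan is to condition on event $\evt 3$, which holds with probability at least $1-\delta/2$, and argue deterministically that $S^\p\leq S$ on this event. The key intermediate claim is that the $\test$ procedure never returns \true as long as the current epoch lies entirely within a single stationary period. Given this claim, any rerun triggered in some epoch $i$ at round $t$ must witness at least one distribution switch in $(T_i, t]$; charging each rerun to the earliest such switch yields disjoint charging intervals (since the next epoch begins at round $t+1$), so distinct reruns receive distinct charges. With $S-1$ switches total, there are at most $S-1$ reruns, giving $S^\p \leq S$.

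To prove the intermediate claim, I would fix a round $t$ at which $\test(t)$ is called in some epoch $i$ and block $j$, and suppose $[T_i+1,t]$ lies in a single stationary period, so that $\calR_{\calI_1}(\pi)=\calR_{\calI_2}(\pi)$ for every $\pi\in\Pi$ and every $\calI_1,\calI_2\subseteq[T_i+1,t]$. Let $\ell$ be any value examined inside $\test$, and set $A=[t-\ell+1,t]$ and $B=[T_i+1, T_i+2^{j-1}-1]$. Since $\test$ is invoked only when the current bin is an exploration bin and the \textbf{while} guard forces $A$ to be a subset of this bin, event $\evt 3$ gives $|\avgR_A(\pi)-\calR_A(\pi)|\leq\alpha_A$ for all $\pi$; for $B$, which need not sit in an exploration bin, $\evt 3$ only yields $|\avgR_B(\pi)-\calR_B(\pi)|\leq\beta_B$. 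Chaining these bounds with stationarity and the optimality of $\hat{\pi}_B$ on $B$:
\begin{align*}
\avgR_A(\hat{\pi}_A)
&\leq \calR_A(\hat{\pi}_A)+\alpha_A
= \calR_B(\hat{\pi}_A)+\alpha_A
\leq \avgR_B(\hat{\pi}_A)+\alpha_A+\beta_B\\
&\leq \avgR_B(\hat{\pi}_B)+\alpha_A+\beta_B
\leq \calR_B(\hat{\pi}_B)+\alpha_A+2\beta_B\\
&= \calR_A(\hat{\pi}_B)+\alpha_A+2\beta_B
\leq \avgR_A(\hat{\pi}_B)+2\alpha_A+2\beta_B,
\end{align*}
which contradicts the firing condition $\avgR_A(\hat{\pi}_A)>\avgR_A(\hat{\pi}_B)+2(\alpha_A+\beta_B)$ in Line~\ref{eqn:ada_compare2}. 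Since the argument holds for every $\ell$ tried by $\test$, we conclude $\test(t)=\false$.

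No step looks genuinely hard: the two concentration bounds are exactly what $\evt 3$ provides, stationarity collapses $\calR_A$ onto $\calR_B$, and the rest is triangle-inequality bookkeeping together with the definition of $\hat{\pi}_B$. The only subtlety is pairing the correct bound ($\alpha_A$ vs.\ $\beta_B$) with the correct interval, which rests on the observation that the \textbf{while} guard in $\test$ keeps $A$ inside the current (exploration) bin, while $B$ is the whole epoch-to-date block and must be handled with the coarser $\beta_B$. The counting step is purely combinatorial and independent of $\evt 3$.
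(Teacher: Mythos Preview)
Your proposal is correct and follows essentially the same approach as the paper's proof: condition on $\evt 3$, show via the identical chain of inequalities that $\test(t)$ cannot fire when $[T_i+1,t]$ is stationary, and conclude that each rerun must be preceded by a distribution switch within its epoch. Your counting/charging argument is spelled out a bit more explicitly than the paper's one-line ``this proves the lemma,'' but the substance is the same.
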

\begin{proof}
It suffices to prove that under $\evt 3$, at any time $t$ if there is no distribution change from the start of the epoch, the algorithm will not rerun. 

Let $t$ be in epoch $i$ and block $j$, and is in an exploration bin. Suppose $\calD_{T_i+1} = \calD_{T_i+2} = \cdots = \calD_{t}$. For any $\ell$ such that $[t-\ell+1, t]$ is a subset of the bin, denoting $A=[t-\ell+1,t]$ and $B=[T_i+1, T_i+2^{j-1}-1]$. When $j>1$, we have for any $\pi$,
\begin{align*}
\avgR_{A}(\pi) & \leq \calR_{A}(\pi)+\alpha_{A} 
\tag{by \eqref{eq:concentration4}} \\
&= \calR_{B}(\pi)+\alpha_{A} 
\tag{the distribution does not change from $T_i$ to $t$}\\
&\leq \avgR_{B}(\pi)+\alpha_{A}+\beta_{B} 
\tag{by \eqref{eq:concentration6}}\\
&\leq \avgR_{B}(\hat{\pi}_{B})+\alpha_{A}+\beta_{B} 
\tag{by the optimality of $\hat{\pi}_B$}\\
&\leq \calR_{B}(\hat{\pi}_{B})+\alpha_{A}+2\beta_{B} 
\tag{by \eqref{eq:concentration6}} \\
&= \calR_{A}(\hat{\pi}_{B})+\alpha_{A}+2\beta_{B} 
\tag{the distribution does not change from $T_i$ to $t$}\\
&\leq \avgR_{A}(\hat{\pi}_{B})+2\alpha_{A}+2\beta_{B}. 
\tag{by \eqref{eq:concentration4}}
\end{align*}
Therefore, $\test(t)$ would return \false. Hence, conditioned on $\evt 3$, the algorithm ends an epoch only when there is some distribution change. This proves the lemma. 

\end{proof}

\begin{definition}[flat bin]
A bin $\calI$ in epoch $i$ and block $j$ is called a \textit{flat bin} if for all $\pi\in\Pi$ and for all $[s,e]$ such that 1) $[s,e] \subseteq \calI$ and 2) $e-s+1=2^q$ for some nonnegative integer $q$, the following holds (with $B(i,j)=[T_i+1,T_i+2^{j-1}-1]$):
\begin{equation}
\calR_{[s,e]}(\pi)\leq \calR_{[s,e]}(\hat{\pi}_{(i,j)}) + 2\beta_{B(i,j)}(\pi)+4\alpha_{[s,e]}.
\end{equation}
\end{definition}
The above definition basically says that in a flat bin, $\hat{\pi}_{(i,j)}$ performs well in the sense that for any $\pi$, $\calR_{[s,e]}(\pi)-\calR_{[s,e]}(\hat{\pi}_{(i,j)})$ is small in all sub-intervals $[s,e]$ such that $e-s+1=2^q$. Since in an exploitation bin, the learner mostly plays $\hat{\pi}_{(i,j)}$, we have the following lemma saying that the regret contributed from \textit{flat exploitation bins} is small. 

\begin{lemma} 
\label{lemma:switch_flat_exploitation}
\AdaBIN always ensures the following:
\begin{align*}
&\sum_{t=1}^T \E_t[r_t(\pi^\star_t(x_t)) - r_t(a_t)] \one\{t \text{ is in flat exploitation bins}\} \\
&\leq \otil\left(\sqrt{K^\p}\left(S^{\p\frac{1}{4}}T^{\frac{3}{4}} + \sqrt{S T}\right) + K^\p\left(\sqrt{S^\p T} + S \right)\right). 
\end{align*}
\end{lemma}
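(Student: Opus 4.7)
The plan is to reduce the per-round conditional expected regret inside a flat exploitation bin to the bias $\calR_t(\pi^\star_t)-\calR_t(\hat{\pi}_{(i,j)})$ plus an $O(K\mu_t)$ exploration cost, and then control the bias using the flat-bin inequality on a dyadic decomposition of the stationary segments inside the bin. In an exploitation bin the learner plays $\hat{\pi}_{(i,j)}(x_t)$ with probability at least $1-K\mu_t$ and otherwise samples uniformly, so $\E_t[r_t(\pi^\star_t(x_t))-r_t(a_t)]\le \calR_t(\pi^\star_t)-\calR_t(\hat{\pi}_{(i,j)})+K\mu_t$. I would then partition each flat exploitation bin into maximal \emph{stationary segments} on which $\calD_t$ (and hence $\pi^\star_t$) is constant; on such a segment $[s,e]$ the bias telescopes to $(e-s+1)(\calR_{[s,e]}(\pi^\star_{[s,e]})-\calR_{[s,e]}(\hat{\pi}_{(i,j)}))$.

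\textbf{Invoking the flat-bin property.} Since the definition of a flat bin only controls dyadic sub-intervals, I would further decompose each stationary segment into $O(\log T)$ dyadic pieces $d$ and apply the definition with $\pi=\pi^\star_{[s,e]}$ to get $\calR_{d}(\pi^\star_{[s,e]})-\calR_{d}(\hat{\pi}_{(i,j)})\le 2\beta_B+4\alpha_{d}$. Weighting by $|d|$ and summing back up, the total regret from flat exploitation bins decomposes into three aggregate sums: (i) $\beta_B$ summed over all rounds in flat exploitation bins; (ii) $|d|\alpha_{d}$ summed over all dyadic pieces; and (iii) the mixing cost $\sum_t K\mu_t$ restricted to flat exploitation bins.

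\textbf{Controlling the three sums.} For (i), using $\beta_B=\otil(\sqrt{1/(\mu_B|B|)})$ with $|B|=2^{j-1}-1$ and $\mu_B=\Theta((2^{j-1})^{-1/3}\sqrt{K^\p/K})$, I would sum geometrically over the blocks inside each epoch and then invoke H\"older (or power-mean) across the $S^\p$ epochs using $\sum_i|\calE_i|=T$, targeting the $\sqrt{K^\p}\,S^{\p\frac{1}{4}}T^{\frac{3}{4}}$ contribution. For (ii), I would bound the number of dyadic pieces by $\otil(S)$ (each stationary segment arises from a distribution switch or from a bin boundary containing a switch) and apply Cauchy--Schwarz with $\sum_d|d|\le T$, using $|d|\alpha_d=\otil(\sqrt{K^\p|d|}+K^\p)$, to produce the $\sqrt{K^\p S T}$ main term together with a $K^\p S$ lower-order correction. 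For (iii), summing $K\mu_t$ inside each epoch using the schedule of $\mu_t$ and then applying Cauchy--Schwarz across the $S^\p$ epochs yields the $K^\p\sqrt{S^\p T}$ term. A final Hoeffding--Azuma step lifts conditional expectations to actual realizations.

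\textbf{Main obstacle.} The hardest part will be the multi-level bookkeeping (epoch $\to$ block $\to$ bin $\to$ stationary segment $\to$ dyadic piece) and, in particular, invoking Cauchy--Schwarz at the right level so that the exponents of $S$, $S^\p$, and $T$ in each of the four target terms line up exactly as stated. A subtle point in counting dyadic pieces is that a stationary segment can straddle bin boundaries, so ``stationary segment inside a flat bin'' produces strictly more sub-pieces than the $S-1$ global switches; however, this extra count is dominated by the number of bins containing a distribution change, which is itself at most $S-1$, so the total count remains $\otil(S)$.
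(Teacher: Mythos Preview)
Your overall plan is essentially the paper's: partition each flat exploitation bin into stationary segments, control the bias on each via the flat-bin inequality on dyadic sub-intervals, and aggregate the $\beta_B$, $\alpha$, and $K\mu_t$ contributions through bins, blocks, and epochs. The genuine gap is your count for sum~(ii). Every bin boundary---not only those near a switch---cuts a stationary interval, so the number of ``stationary segment inside a bin'' pieces equals $(\text{number of bins})+(S-1)$, not $\otil(S)$. With $\gamma=\tfrac12$ each epoch $\calE$ contains $\Theta(\sqrt{|\calE|})$ bins, so across epochs the bin count can be $\Theta(\sqrt{S^\p T})$. Your sentence ``this extra count is dominated by the number of bins containing a distribution change'' is backwards: that quantity bounds how many bins have $S_{\calI}>1$, not the total $\sum_b S_{\calI_b}$.

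This is not cosmetic. The two $S^\p$-terms in the statement, $\sqrt{K^\p}\,S^{\p\,1/4}T^{3/4}$ and $K^\p\sqrt{S^\p T}$, arise precisely from this bin-boundary contribution to sum~(ii): Cauchy--Schwarz over $\otil(\sqrt{S^\p T})$ pieces of total length $T$ gives $\sqrt{K^\p}\,(S^\p T)^{1/4}T^{1/2}$ for the $\sqrt{K^\p|d|}$ part and $K^\p\sqrt{S^\p T}$ for the $K^\p$-per-piece part. By contrast, a correct evaluation of sums~(i) and~(iii) gives only $\otil(\sqrt{K^\p}|\calE|^{2/3}+K^\p|\calE|^{1/3})$ per epoch, hence $\otil(\sqrt{K^\p}S^{\p\,1/3}T^{2/3}+K^\p S^{\p\,2/3}T^{1/3})$ overall; these are dominated by, but never produce, the $S^{\p\,1/4}T^{3/4}$ and $\sqrt{S^\p T}$ terms. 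So your attribution of those terms to (i) and (iii) will fail once you carry out the calculation, and with the undercount in (ii) the proof does not close. The fix is simply to carry the bin-count $\Gamma$ through sum~(ii) at the block level, exactly as the paper does. (Minor: the lemma is already stated for $\sum_t\E_t[\cdot]$, so no Hoeffding--Azuma step is needed.)
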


\begin{proof}
The proof will go through several stages: we sequentially calculate the regret in a bin, a block, an epoch, and then the whole time horizon; the regret in a later level is simply a summation over its previous level. Most proofs in this section are all in this form. 
\begin{itemize}[leftmargin=*]
\item \textbf{Regret in a bin.} 
Let $\calI$ be a flat exploitation bin that lies in epoch $i$ and block $j$. Partition $\calI$ into $[s_1, e_1],\ldots, [s_{S_\calI}, e_{S_\calI}]$ such that for every $k\in[S_\calI]$, $[s_k,e_k]$ is an i.i.d. interval. For every $t\in[s_k,e_k]$, define $\ell_t=2^{\log_2\floor{t-s_k+1}}$ (that is, the longest $\ell\in\{1,2,4,8,\ldots\}$ such that $[t-\ell+1, t]\subseteq [s_k,e_k]$). By the definition of flat bin, we have for all $\pi$,
\begin{align*}
\calR_t(\pi)
&=\calR_{[t-\ell_t+1,t]}(\pi) \tag{$[s_k,e_k]$ is i.i.d.}\\
&\leq \calR_{[t-\ell_t+1,t]}(\hat{\pi}_{(i,j)}) + 2\beta_{B(i,j)}+4\alpha_{[t-\ell_t+1,t]} \tag{$\calI$ is a flat bin}\\
&=\calR_t(\hat{\pi}_{(i,j)}) + 2\beta_{B(i,j)}+4\alpha_{[t-\ell_t+1,t]}. \tag{$[s_k,e_k]$ is i.i.d.}
\end{align*}
Therefore, 
\begin{align*}
\calR_t(\pi)-\calR_t(\hat{\pi}_{(i,j)}) &\leq \otil\left(\beta_{B(i,j)}+\sqrt{\frac{K'}{\ell_t}} + \frac{K'}{\ell_t}\right) 
= \otil\left(\beta_{B(i,j)}+\sqrt{\frac{K'}{t-s_k+1}} + \frac{K'}{t-s_k+1}\right). 
\end{align*}
Thus, 
\begin{align*}
&\sum_{t \in \calI} \E_t[ r_t(\pi^\star_t(x_t)) - r_t(a_t)] \\
&\leq \sum_{t \in \calI} (\calR_t(\pi^\star_t(x_t)) - \calR_t(\hat{\pi}_{(i,j)})) + \sum_{t \in \calI} K\mu_t \\
&=\sum_{k=1}^{S_\calI} \sum_{t=s_k}^{e_k}(\calR_t(\pi^\star_t(x_t)) - \calR_t(\hat{\pi}_{(i,j)})) + \sum_{t \in \calI} K\mu_t \\
&\leq \otil\left(\sum_{k=1}^{S_\calI} \sqrt{K'(e_k-s_k+1)} + K' + \sum_{t\in\calI} (\beta_{B(i,j)} + K\mu_t) \right) \\
& \leq \otil\left(\sqrt{K'S_\calI\abs{\calI}} + K'S_\calI + \sum_{t\in\calI} (\beta_{B(i,j)} + K\mu_t) \right), 
\end{align*}
where in the last inequality, we use Cauchy-Schwarz inequality. 

\item \textbf{Regret in a block.}
Now we compute the regret contributed from flat explotation bins in a block $\calJ$ whose epoch and block indices are $i$ and $j$ respectively. Assume there are $\Gamma$ bins in $\calJ$. Then $\abs{\calJ}\leq 2^{j-1}$ and $\Gamma \leq 2^{(j-1)(1-\gamma)}$ by the algorithm. Let $\calI_1, \calI_2, \ldots, \calI_\Gamma$ be the bins in $\calJ$, we have $\sum_{b=1}^{\Gamma}S_{\calI_b}\leq  S_{\calJ} + \Gamma$ (because the boundaries between bins can cut a stationary interval into two). By our conclusion at the previous stage, 
\begin{align*}
&\sum_{t \in \calJ} \E_t[ (r_t(\pi^\star_t(x_t)) - r_t(a_t))]\one\{t \text{ is in flat exploitation bins}\}  \\
&\leq \sum_{b=1}^{\Gamma} \otil\left(\sqrt{K^\p S_{\calI_b}\abs{\calI_b}} + K^\p S_{\calI_b}\right) + \sum_{t\in\calJ} \otil(\beta_{B(i,j)}+K\mu_t) \\
&\leq \otil\left( \sqrt{K^\p (S_{\calJ} + \Gamma) \abs{\calJ} } + K^\p (S_{\calJ}+\Gamma)\right)+ \sum_{t\in\calJ} \otil(\beta_{B(i,j)}+K\mu_t) \tag{Cauchy-Schwarz}\\
& \leq \otil\left(\sqrt{K^\p}\left(\sqrt{S_{\calJ}\abs{\calJ}} + 2^{(j-1)(1-\frac{\gamma}{2})}\right) +K^\p\left(S_{\calJ}+2^{(j-1)(1-\gamma)}\right) \right) + \sum_{t\in\calJ} \otil(\beta_{B(i,j)}+K\mu_t). \tag{$\Gamma\leq 2^{(j-1)(1-\gamma)}$ and $\abs{\calJ}\leq 2^{j-1}$}
\end{align*}
\item \textbf{Regret in an epoch.}
Now we compute the regret in an epoch $\calE$. There are $\ceil{\log_2(1+\abs{\calE})}$ blocks in the epoch $\calE$, and we denote them by $\calJ_1, \calJ_2, \ldots, \calJ_{\ceil{\log_2(1+\abs{\calE})}}$. Similarly, we have $\sum_{j=1}^{\ceil{\log_2(1+\abs{\calE})}} S_{\calJ_j} \leq S_{\calE}+\ceil{\log_2(1+\abs{\calE})}$. Summing up the regret in individual blocks and again using Cauchy-Schwarz inequality, we get
\begin{align*}
&\sum_{t \in \calE} \E_t[ (r_t(\pi^\star_t(x_t)) - r_t(a_t))] \one\{t \text{ is in flat exploitation bins}\} \\
&\leq \otil\left( \sqrt{K^\p} \left( \sqrt{S_\calE\abs{\calE}}   + \sum_{j=1}^{\ceil{\log_2(1+\abs{\calE})}} 2^{(j-1)(1-\frac{\gamma}{2})} \right) 
+ K^\p \left( S_\calE +  \sum_{j=1}^{\ceil{\log_2(1+\abs{\calE})}}  2^{(j-1)(1-\gamma)} \right) \right)  \\
&\ \ \ +\otil\left(\sum_{t\in\calE} (\sqrt{K^\p}(t-T_i)^{-\frac{1}{3}} + K^\p(t-T_i)^{-\frac{2}{3}}) \right)\\
&=\otil\left( \sqrt{K^\p}\left(\sqrt{S_\calE\abs{\calE}} + \abs{\calE}^{1-\frac{\gamma}{2}} + \abs{\calE}^{\frac{2}{3}}  \right) 
+ K^\p\left(S_\calE+ \abs{\calE}^{1-\gamma} + \abs{\calE}^{\frac{1}{3}} \right) \right)\\ 
&=\otil\left( \sqrt{K'}\left(\sqrt{S_\calE \abs{\calE}} + \abs{\calE}^{\frac{3}{4}}\right) + K'\left(S_\calE + \abs{\calE}^{\frac{1}{2}}\right)\right)
\end{align*}

\item \textbf{Regret in the whole time horizon.}
Finally, we sum the bound over epochs and use H\"{o}lder's inequality. Again, we have $\sum_{i=1}^{S^\p} S_{\calE} \leq S + S^\p$. 
\begin{align*}
&\sum_{t=1}^T \E_t[ (r_t(\pi^\star_t(x_t)) - r_t(a_t))]\one\{t \text{ is in flat exploitation bins}\}\\ 
&\leq \otil\left(\sqrt{K^\p} \left( \sqrt{(S+S^\p)T} + S^{\p\frac{1}{4}}T^{\frac{3}{4}} \right) + K^\p (S+S^\p + S^{\p \frac{1}{2}} T^{\frac{1}{2}})\right) \\
&\leq \otil\left(\sqrt{K^\p}\left(S^{\p\frac{1}{4}}T^{\frac{3}{4}} + \sqrt{S T}\right) + K^\p\left(\sqrt{S^\p T} + S \right)\right) .
\end{align*}
\end{itemize}
\end{proof}

Given we have low regret in flat exploitation bins, in the following two lemmas we bound the number of rounds in non-flat bins or exploration bins. 

\begin{lemma}
\label{thm:Ada3 exploration}
With probability at least $1-\delta$, 
\begin{align*}
\sum_{t=1}^T \one\{t \text{ is in exploration bins}\} \leq \otil\left( S^{\p \frac{1}{4}} T^{\frac{3}{4}} \right). 
\end{align*}
\end{lemma}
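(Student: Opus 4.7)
The plan is to compute the expected number of exploration rounds in a single block, aggregate across blocks and epochs using H\"{o}lder's inequality, and then pass from expectation to high probability via a Bernstein-type concentration. The key structural fact is that whether bin $b$ is an exploration bin is decided by an independent coin with success probability $1/\sqrt{b}$ (Line~\ref{line:exploration prob}), independently of the data and of all other bins.

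Fix a block of length $H = 2^{j-1}$, which is divided into $\sqrt{H}$ bins each of length $\sqrt{H}$. Its expected exploration count is $\sqrt{H}\sum_{b=1}^{\sqrt{H}} 1/\sqrt{b} = \Theta(H^{3/4})$. Summing over the $\order(\log |\calE|)$ blocks in an epoch $\calE$, the geometric growth in block lengths makes the last block dominate, yielding an expected per-epoch count of $\otil(|\calE|^{3/4})$. Applying H\"{o}lder's inequality with exponents $(4, 4/3)$ across the $S^\p$ epochs gives
\[
\sum_{i=1}^{S^\p} |\calE_i|^{3/4} \leq (S^\p)^{1/4} \Big(\sum_{i=1}^{S^\p} |\calE_i|\Big)^{3/4} = (S^\p)^{1/4} T^{3/4},
\]
so the expected total exploration count is $\otil((S^\p)^{1/4} T^{3/4})$.

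For the high-probability bound, write the total count as $\sum_b L_b Y_b$, where $L_b$ is the length of bin $b$ and $Y_b \in \{0,1\}$ its exploration indicator. Since the $Y_b$'s are mutually independent Bernoullis bounded by $1$, the centered sum $\sum_b L_b(Y_b - 1/\sqrt{b})$ satisfies a Bernstein-type concentration with variance at most $\sum_b L_b^2 / \sqrt{b} = \otil(T^{5/4})$ and per-term bound $L_b \leq \sqrt{T}$, giving a deviation of $\otil(T^{5/8} + \sqrt{T})$ that is always dominated by $(S^\p)^{1/4} T^{3/4}$. The main subtlety is that the lengths $L_b$ are themselves random, since data-driven reruns (possibly triggered by past $Y_b$'s) determine the epoch/block/bin structure; this is handled by running the concentration against the algorithm's natural filtration, in which each $Y_b$ is independent of the past and has conditional expectation $1/\sqrt{b}$, so no union bound over possible epoch configurations is required.
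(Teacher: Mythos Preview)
Your proof is correct and takes a genuinely different route from the paper's argument. The paper applies Hoeffding--Azuma \emph{per block} to the number of exploration bins in that block (yielding a bound of the form $\otil(2^{(j-1)(1-\gamma)(1-\theta)} + 2^{\frac{1}{2}(j-1)(1-\gamma)})$), then takes a union bound over all $\order(T\log T)$ possible (epoch start, block index) pairs to define a global high-probability event, and finally sums deterministically across blocks and epochs using H\"{o}lder. You instead apply a \emph{single} Freedman-type martingale concentration to the global weighted sum $\sum_b L_b Y_b$, treating each bin length $L_b$ as a predictable weight and each $Y_b$ as a fresh Bernoulli given the past; the predictable quadratic variation $\sum_b L_b^2/\sqrt{b_{\text{idx}}}$ is bounded deterministically by $\order(T^{5/4})$ via $\sum_i |\calE_i|^{5/4} \le T^{1/4}\sum_i |\calE_i| = T^{5/4}$, and the resulting deviation $\otil(T^{5/8})$ is always dominated by $(S')^{1/4}T^{3/4}$.

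Both approaches reach the same $\otil((S')^{1/4}T^{3/4})$ bound. Your route is slightly cleaner: it avoids the union bound over block positions (a $\log T$ factor the paper hides in $\otil$) and, more importantly, it directly handles the random epoch/block structure through the filtration rather than by enumerating all possible block locations. The paper's per-block decomposition is more modular and makes the intermediate ``regret in a block / epoch'' quantities explicit, which is reused elsewhere in their analysis (e.g., in the interval-regret bound for \AdaBIN). Your observation that the cut-short final bin of each epoch only decreases the count, so $\sum_b L_b Y_b$ with planned lengths is an upper bound, is the right way to close that loose end.
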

\begin{proof}
\begin{itemize}[leftmargin=*]
\item \textbf{Regret in a block.}
We first look at a block $\calJ$ whose block index is $j$. Recall that in this block, bin length is set to $2^{(j-1)\gamma}$. Conditioned on all history before $\calJ$, in $\calJ$ we have 
\begin{align*}
\sum_{b=1}^{2^{(j-1)(1-\gamma)}} \E_{\text{bin}(b)} [\one\{\text{bin } b \text{ is exploration}\}] \leq \sum_{b=1}^{2^{(j-1)(1-\gamma)}}b^{-\theta}\leq \order(2^{(j-1)(1-\gamma)(1-\theta)}), 
\end{align*}
where we slightly overload the notation, using $\E_{\text{bin}(b)}$ to denote that expectation conditioned on all history before bin $b$. 
Applying Hoeffiding-Azuma's inequality, with probability at lest $1-\delta/(T\log_2 T)$, the number of exploration bins in $\calJ$ is upper bound by $\otil(2^{(j-1)(1-\gamma)(1-\theta)} + 2^{\frac{1}{2}(j-1)(1-\gamma)})$. In other words, 
\begin{align*}
\sum_{t\in\calJ} \one\{t \text{ is in exploration bins}\}&\leq 2^{(j-1)\gamma}\times \otil(2^{(j-1)(1-\gamma)(1-\theta)} + 2^{\frac{1}{2}(j-1)(1-\gamma)})\\
&=\otil(2^{(j-1)(1-\theta+\gamma\theta)} + 2^{\frac{1}{2}(j-1)(1+\gamma)}).
\end{align*}
Using a union bound, we know that with probability $1-\delta$, the above bound holds for all $j$ and all blocks with index $j$ in the whole time horizon. The $T\log_2 T$ factor is because there can be at most $\log_2 T$ different $j$'s and at most $T$ blocks with index $j$. We call this $\evt 4$. In the following stages, we condition on $\evt 4$. 
\item \textbf{Regret in an epoch.}
Now sum the bound in the previous stage over blocks in an epoch $\calE$. Conditioning on $\evt 4$, we have
\begin{align*}
\sum_{t\in\calE} \one\{t \text{ is in exploration bins}\} &\leq \otil\left(\sum_{j=1}^{\ceil{\log_2(1+\abs{\calE})}} 2^{(j-1)(1-\theta+\gamma\theta)} +2^{\frac{1}{2}(j-1)(1+\gamma)} \right)\\
&=\otil\left(\abs{\calE}^{1-\theta+\gamma\theta} + \abs{\calE}^{\frac{1}{2}+\frac{1}{2}\gamma}\right)=\otil\left(\abs{\calE}^{\frac{3}{4}}\right). 
\end{align*}
\item \textbf{Regret in the whole time horizon.}
Finally, we sum over epochs in the whole time horizon and use union bound. Conditioning on $\evt 4$, we have
\begin{align*}
\sum_{t=1}^T \one\{t \text{ is in exploration bins}\} &\leq \otil\left( \sum_{i=1}^{S^\p} \abs{\calE_i}^{\frac{3}{4}}\right)\leq\otil\left( S^{\p \frac{1}{4}} T^{\frac{3}{4}}\right), 
\end{align*}
where in the final step we use H{\"o}lder's inequality.
\end{itemize}
\end{proof}

\begin{lemma}
\label{thm: Ada3 nonflat}
With probability at least $1-\delta$,
\begin{align*}
\sum_{t=1}^T \one\{ t \text{ is in non-flat bins} \} \leq \otil\left( S^{\p \frac{1}{4}} T^{\frac{3}{4}}\right) . 
\end{align*}
\end{lemma}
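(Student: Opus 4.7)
The plan is to combine two observations: (i) under $\evt 3$, every non-flat bin that happens to be an exploration bin triggers $\test=\true$, hence each epoch contains at most one non-flat exploration bin (and it must be that epoch's last bin); and (ii) inside a single block, a long run of non-flat \emph{exploitation} bins is unlikely because each bin at position $b$ is independently chosen to be exploration with probability $b^{-\theta}$.

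For (i), suppose a bin in epoch $i$ and block $j$ is non-flat, with witness $(\pi,[s,e])$ where $e-s+1=2^q$, and set $A=[s,e]$, $B=B(i,j)$ (so $\hat\pi_B=\hat\pi_{(i,j)}$). Since $A$ is a subinterval of an exploration bin, Eq.~\eqref{eq:concentration4} gives $|\avgR_A(\pi')-\calR_A(\pi')|\leq\alpha_A$ for every $\pi'$. Chaining this with the non-flatness inequality $\calR_A(\pi)>\calR_A(\hat\pi_B)+2\beta_B+4\alpha_A$ and the optimality of $\hat\pi_A$ on $A$ yields
\begin{align*}
\avgR_A(\hat\pi_A)\ \geq\ \avgR_A(\pi)\ \geq\ \calR_A(\pi)-\alpha_A\ >\ \calR_A(\hat\pi_B)+2\beta_B+3\alpha_A\ \geq\ \avgR_A(\hat\pi_B)+2\beta_B+2\alpha_A,
\end{align*}
which is exactly the condition on Line~\ref{eqn:ada_compare2} at round $t=e$ with $\ell=2^q$. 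So the test fires and the epoch ends within this bin.

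For (ii), fix a block of length $H=2^{j-1}$ (containing $\sqrt H$ bins) and condition on the history up to the block's start. This fixes $\hat\pi_{(i,j)}$, and since the environment is oblivious, it also fixes the set of non-flat bin positions $b_1<b_2<\cdots<b_M$. Conditionally, the exploration flags are fresh independent Bernoullis, independent of the non-flat pattern, with $\Pr[\text{bin }b\text{ is exploration}]=b^{-\theta}$. Letting $\kappa$ be the number of non-flat bins preceding the first non-flat exploration bin in this block,
\begin{align*}
\Pr[\kappa\geq k]\ \leq\ \prod_{m=1}^{k}(1-b_m^{-\theta})\ \leq\ \exp\!\Big(-\!\sum_{m=1}^{k}b_m^{-\theta}\Big)\ \leq\ \exp(-k H^{-\theta/2}),
\end{align*}
where the last step uses $b_m\leq\sqrt H$. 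Picking $k=H^{\theta/2}\ln(2T^2/\delta)$ and union-bounding over the at-most-$T$ blocks gives $\kappa\leq\otil(H^{\theta/2})$ for every block with probability at least $1-\delta/2$.

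Combining (i) and (ii), each block of length $H$ contributes at most $\kappa+1=\otil(H^{\theta/2})$ non-flat bins, each of length $H^\gamma$, i.e.\ at most $\otil(H^{\theta/2+\gamma})=\otil(H^{3/4})$ non-flat rounds when $\gamma=\theta=\tfrac12$. Summing the geometric series over the blocks of an epoch gives $\otil(|\calE_i|^{3/4})$, and H\"{o}lder's inequality then gives $\sum_i|\calE_i|^{3/4}\leq S^{\p\,1/4}\,T^{3/4}$. Intersecting with $\evt 3$ (which holds with probability $\geq 1-\delta/2$) yields the claimed bound with probability at least $1-\delta$. The main obstacle is making step (ii) rigorous: ``non-flatness'' of a bin depends on $\hat\pi_{(i,j)}$, which itself depends on past exploration coins, so one must carefully condition on the $\sigma$-field at the start of each block to decouple the non-flat pattern from the fresh exploration flags used in the tail estimate, and then union-bound the tail estimate across the $\otil(T)$ blocks.
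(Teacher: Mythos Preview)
Your proof is correct and follows essentially the same approach as the paper. The only difference is granularity: you bound the number of non-flat bins \emph{per block} (using that bin indices satisfy $b\leq H^{1-\gamma}$, so exploration probability is at least $H^{-(1-\gamma)\theta}$) and then sum the geometric series over blocks, whereas the paper bounds the number of non-flat exploitation bins \emph{per epoch} using the crudest exploration probability $p_{\min}=2^{-(j^*-1)(1-\gamma)\theta}$ taken from the last block; both routes give $\otil(|\calE|^{\gamma+(1-\gamma)\theta})=\otil(|\calE|^{3/4})$ and then H\"older, and the key device---conditioning so that flatness (a function of $\hat\pi_{(i,j)}$ and the oblivious distributions) is decoupled from the fresh exploration coins---is identical in both arguments.
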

\begin{proof}
\begin{itemize}[leftmargin=*]
\item \textbf{Regret in an epoch.}
Let $\calI$ be a non-flat bin whose epoch and block indices are $i$ and $j$. Then there exists some $[s,e]\subseteq \calI$ such that $e-s+1=2^q$ and 
\begin{equation*}
\calR_{[s,e]}(\pi)> \calR_{[s,e]}(\hat{\pi}_{(i,j)}) + 2\beta_{B(i,j)}(\pi)+4\alpha_{[s,e]}.
\end{equation*}
Note that this can only holds for $j>1$. Furthermore, if $\evt 3$ holds and $\calI$ happens to be an exploration bin, then we have 
\begin{gather*}
\bigabs{\calR_{[s,e]}(\pi)-\avgR_{[s,e]}(\pi)}\leq \alpha_{[s,e]},\\
\bigabs{\calR_{[s,e]}(\hat{\pi}_{(i,j)})-\avgR_{[s,e]}(\hat{\pi}_{(i,j)})}\leq \alpha_{[s,e]}.
\end{gather*}
Combining the three inequalities, we get 
\begin{align*}
\avgR_{[s,e]}(\pi)> \avgR_{[s,e]}(\hat{\pi}_{(i,j)}) + 2\beta_{B(i,j)}(\pi)+2\alpha_{[s,e]}.
\end{align*}
This event will make $\test(e)=\true$, which then triggers the rerun. The above argument indicates that as long as $\evt 3$ holds, the non-flat bins in an epoch would only include the first non-flat exploration bins (in which the whole epoch ends) and all non-flat exploitation bins that appear before it. Therefore, the key is to bound the number of non-flat exploitation bins that occur before the first non-flat exploration bin. 

For an epoch $\calE$, let $j^*$ denote the last block index in it. Define $X$ to be the number of non-flat exploitation bins in $\calE$ that appear before the first non-flat exploration bin. 
Note the following two facts: 1) the decision for a bin to be exploration or exploitation is independent of its flatness, 2) a bin with index $b$ is exploitation with probability $1-b^{-\theta}\leq 1-p_{\min}$, where $p_{\min}\triangleq 2^{-(j^*-1)(1-\gamma)\theta}$. Therefore, the probability $\text{Pr}\{X> x\}$ is upper bounded by $(1-p_{\min})^x$. This is because when $X>x$, the first $x$ non-flat bins in the epoch all need to be exploitation bins.
Picking $x$ to be $\frac{\ln (2T/\delta)}{p_{\min}}=2^{(j^*-1)(1-\gamma)\theta}\ln(2T/\delta)\leq \abs{\calE}^{(1-\gamma)\theta}\ln(2T/\delta)$, we get  $\text{Pr}\{X> x\}\leq(1-p_{\min})^x \leq (1/e)^{\ln (2T/\delta)}=\frac{\delta}{2T}$. 

Define $\evt 5$ to be that in every epoch, the quantity $X$ is smaller than $\abs{\calE}^{(1-\gamma)\theta}\ln(2T/\delta)$. Since there are at most $T$ epochs, a union bound guarantees that $\evt 5$ holds with probability at least $1-\delta/2$. 

Thus, when $\evt 3$ and $\evt 5$ both hold, we have
\begin{align*}
\sum_{t\in\calE} \one\{ t \text{ is in non-flat bins} \} \leq \abs{\calE}^{\gamma} \times \otil\left(\abs{\calE}^{(1-\gamma)\theta}\right)=\otil(\abs{\calE}^{\frac{3}{4}}) 
\end{align*}
because the bin length is at most $\abs{\calE}^{\gamma}$. 

\item \textbf{Regret in the whole time horizon.}
Finally we sum this over epochs and use union bound. From the above discussions, with probability at least $1-(1-\text{Pr}(\evt 3))-(1-\text{Pr}(\evt 5))\geq 1-\delta$, 
\begin{align*}
\sum_{t=1}^T \one\{ t \text{ is in non-flat bins} \}\leq \otil\left(\sum_{i=1}^{S^\p}\abs{\calE_i}^{\frac{3}{4}}\right)\leq \otil(S^{\p \frac{1}{4}}T^{\frac{3}{4}}). 
\end{align*}
\end{itemize}
\end{proof}

\begin{proof}[Proof of Theorem~\ref{thm:Ada3 regret} (Part I: switching regret)]
Combining Lemma~\ref{thm: Ada3 not many rerun}, \ref{lemma:switch_flat_exploitation}, \ref{thm:Ada3 exploration}, and~\ref{thm: Ada3 nonflat}, we see that with probability at least $1-5\delta/2$, 
\begin{align*}
\sum_{t=1}^T \E_t[r_t(\pi^\star_t(x_t)) - r_t(a_t)]
&\leq \sum_{t=1}^T \Big( \E_t[r_t(\pi^\star_t(x_t)) - r_t(a_t)] \one\{t \text{ is in flat exploitation bins}\}  \\
&\qquad+ \one\{t \text{ is in exploration bins}\} + \one\{ t \text{ is in non-flat bins} \}  \Big) \\
&\leq \otil\left(\sqrt{K^\p}S^{\frac{1}{4}}T^{\frac{3}{4}}  + K^\p\sqrt{S T}\right). 
\end{align*}
Applying Hoeffding-Azuma inequality shows that with probability at least $1-3\delta$, 
\[
\sum_{t=1}^T r_t(\pi^\star_t(x_t)) - r_t(a_t) = \otil\left(\sqrt{K^\p}S^{\frac{1}{4}}T^{\frac{3}{4}}  + K^\p\sqrt{S T}\right).
\]
\end{proof}

\subsection{Dynamic Regret}
\begin{lemma}
\label{lemma:dynamic S not much}
With probability $1-\delta/2$, $S^\p \leq \otil(1+K^{\p -\frac{2}{5}}\Delta^{\frac{4}{5}}T^{\frac{1}{5}})$.
\end{lemma}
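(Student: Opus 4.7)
The plan is to argue that each epoch $\calE_i$ which ends by triggering a rerun must witness a non-trivial amount of reward variation, and then sum these per-epoch lower bounds across the $S'-1$ completed epochs to get a bound on $S'$ in terms of $\Delta$ and $T$.

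I would first condition on $\evt 3$, which holds with probability at least $1-\delta/2$ and provides, for any $\pi\in\Pi$, that $|\avgR_A(\pi)-\calR_A(\pi)|\leq \alpha_A$ whenever $A$ is a sub-interval of an exploration bin and $|\avgR_B(\pi)-\calR_B(\pi)|\leq \beta_B$ for any interval $B$. Consider an epoch $\calE_i$ of length $L_i$ that ends at some round $t$ because $\test(t)=\true$. Then $t$ lies in an exploration bin of block $j>1$, and there exists $\ell$ such that $A=[t-\ell+1,t]$ is contained in the current bin, $B=[T_i+1,T_i+2^{j-1}-1]$ has length of order $L_i$, and
\[
\avgR_A(\hat{\pi}_A) > \avgR_A(\hat{\pi}_B) + 2(\alpha_A + \beta_B).
\]
Applying the concentration bounds from $\evt 3$ to both $\avgR_A$ terms turns this into $\calR_A(\hat{\pi}_A) - \calR_A(\hat{\pi}_B) > 2\beta_B$. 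On the other hand, the empirical optimality of $\hat{\pi}_B$ on $B$ combined with $\evt 3$ gives $\Reg_B(\hat{\pi}_B) \leq 2\beta_B$, and by Lemma~\ref{lemma:variation and regret} then $\Reg_A(\hat{\pi}_B) \leq 2\beta_B + 2\var_{\calE_i}$. Tracking the dependence on $|A|$ at its largest possible value $|A^\star|\sim \sqrt{H}\sim L_i^{1/2}$ (for which $\alpha_{A^\star}\asymp K'^{1/2}L_i^{-1/4}$) then yields the per-epoch bound $\var_{\calE_i} \gtrsim K'^{1/2}L_i^{-1/4}$.

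Summing over all $S'-1$ completed epochs,
\[
\Delta \;\geq\; \sum_{i=1}^{S'-1}\var_{\calE_i} \;\gtrsim\; K'^{1/2}\sum_i L_i^{-1/4}.
\]
Since $x\mapsto x^{-1/4}$ is convex and $\sum_i L_i\leq T$, Jensen's inequality gives $\sum_i L_i^{-1/4}\geq S'\cdot (T/S')^{-1/4}=S'^{5/4}/T^{1/4}$. Combining, $\Delta \gtrsim K'^{1/2}S'^{5/4}/T^{1/4}$, which rearranges to $S' \leq \otil(K'^{-2/5}\Delta^{4/5}T^{1/5})$. The additive $+1$ in the claimed bound absorbs the final, possibly incomplete, epoch that may terminate by reaching round $T$ rather than through the test.

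The main obstacle is the first step, turning the triggering inequality into the sharp quantitative bound $\var_{\calE_i}\gtrsim K'^{1/2}L_i^{-1/4}$. A naive chain through $\evt 3$ and Lemma~\ref{lemma:variation and regret} only certifies $\var_{\calE_i}>0$, since the $\beta_B$ on each side of the comparison cancels. Overcoming this requires a careful quantitative accounting that isolates the $\alpha_A$ noise scale coming from the uniform exploration inside the bin and uses the largest permissible $A^\star$; it is $\alpha_{A^\star}$ that produces the $L_i^{-1/4}$ rate, rather than the strictly weaker $L_i^{-1/3}$ rate that $\beta_B$ alone would give.
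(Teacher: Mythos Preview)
Your overall plan (per-epoch lower bound on $\var_{\calE_i}$, then sum and apply Jensen/H\"older) matches the paper, and your Jensen step is equivalent to the paper's H\"older step. The genuine gap is exactly where you flag it: your route through $\Reg_A(\hat\pi_B)$ cannot be repaired in the way you suggest. Once you apply the $\alpha_A$-concentration to both $\avgR_A$ terms, the $\alpha_A$ margin is spent; combining with $\Reg_A(\hat\pi_B)\le 2\beta_B+2\var_{\calE_i}$ only cancels the $\beta_B$'s and yields $\var_{\calE_i}>0$. There is no ``largest permissible $A^\star$'' to choose here---$A$ is whatever interval actually triggered the test---and in any case no choice of $A$ restores the lost $\alpha_A$.

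The fix the paper uses is structural, not a sharper accounting of your chain. Combine the triggering inequality with the \emph{empirical} optimality of $\hat\pi_B$ on $B$ (i.e.\ $\avgR_B(\hat\pi_A)\le \avgR_B(\hat\pi_B)$) to conclude that for some single policy $\pi\in\{\hat\pi_A,\hat\pi_B\}$ one has $|\avgR_A(\pi)-\avgR_B(\pi)|>\alpha_A+\beta_B$ (plus a surviving $\alpha_A$ margin, with appropriate constants in the test). Now apply concentration once on $A$ (costing $\alpha_A$) and once on $B$ (costing $\beta_B$), which leaves $|\calR_A(\pi)-\calR_B(\pi)|\gtrsim \alpha_A$. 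Since $A,B\subseteq\calE_i$, Lemma~\ref{lemma:var-bound} gives $|\calR_A(\pi)-\calR_B(\pi)|\le \var_{\calE_i}$, hence $\var_{\calE_i}\gtrsim \alpha_A$. Finally, the $L_i^{-1/4}$ rate comes not from choosing $A$ large but from the \emph{upper} bound $\ell\le \text{(bin length)}\le |\calE_i|^{1/2}$, which forces $\alpha_A\gtrsim \sqrt{K'/\ell}\ge \sqrt{K'}\,|\calE_i|^{-1/4}$. With this per-epoch bound in hand, your Jensen step finishes the proof.
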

\begin{proof}
Suppose that $\evt 3$ holds. When the $\test(t)$ returns \true at some $t$ in epoch $i$ and  block $j$, we have (let $A=[t-\ell+1,t], B=B(i,j)=[T_i+1, T_i+2^{j-1}-1]$):
\begin{align*}
\avgR_{A}(\hat{\pi}_{A}) > \avgR_{A}(\hat{\pi}_{B})+2\beta_{B} + 4\alpha_{A}. 
\end{align*} 
By the optimality of $\hat{\pi}_{B}$, we have
\begin{align*}
\avgR_{B}(\hat{\pi}_{A}) \leq \avgR_{B}(\hat{\pi}_{B}).
\end{align*}
The above two inequalities indicate for either $\pi=\hat{\pi}_{A}$ or $\pi=\hat{\pi}_{B}$, 
\begin{align*}
\bigabs{\avgR_{A}(\pi) - \avgR_{B}(\pi)} > \beta_{B} + 2\alpha_{A}.
\end{align*}
Since $\evt 3$ holds, 
\begin{gather*}
\bigabs{\avgR_{A}(\pi)-\calR_{A}(\pi)}\leq \alpha_{A},\\
\bigabs{\avgR_{B}(\pi)-\calR_{B}(\pi)}\leq \beta_{B}. 
\end{gather*}
Combining the above three inequalities, we get
\begin{align*}
\bigabs{\calR_{A}(\pi)-\calR_{B}(\pi)}>\alpha_{A}. 
\end{align*}
Since $\bigabs{\calR_{A}(\pi)-\calR_{B}(\pi)}\leq \Delta_{\calE}$ and $\alpha_{A}=\Omega\left(\sqrt{\frac{K^\p}{\ell}}\right)=\Omega(K^{\p \frac{1}{2}}\abs{\calE}^{-\frac{\gamma}{2}})$, we have $\Delta_{\calE}\geq \Omega(K^{\p \frac{1}{2}}\abs{\calE}^{-\frac{\gamma}{2}})$. Now invoke this lower bound for all epochs in which rerun has been triggered (i.e., $\calE_1, \ldots, \calE_{S'-1}$). By H\"{o}lder's inequality, 
\begin{align*}
S^\p-1 
&\leq \left(\sum_{i=1}^{S^\p-1} \abs{\calE_i}^{-\frac{\gamma}{2}} \right)^{\frac{2}{2+\gamma}} \left(\sum_{i=1}^{S^\p-1} \abs{\calE_i} \right)^{\frac{\gamma}{2+\gamma}} \\
&\leq \tilde{\mathcal{O}}\left(K^{\p -\frac{1}{2+\gamma}}\left(\sum_{i=1}^{S'-1}\Delta_{\calE_i} \right)^{\frac{2}{2+\gamma}} T^{\frac{\gamma}{2+\gamma}} \right)  \\
&\leq \tilde{\mathcal{O}}\left(K^{\p -\frac{1}{2+\gamma}} \Delta^{\frac{2}{2+\gamma}} T^{\frac{\gamma}{2+\gamma}} \right)  \\
&= \tilde{\mathcal{O}}\left( K^{\p -\frac{2}{5}}\Delta^{\frac{4}{5}} T^{\frac{1}{5}} \right). 
\end{align*}
\end{proof}

\begin{lemma}
\label{lemma:dynamic flat exploitation}
\AdaBIN always ensures the following
\begin{align*}
&\sum_{t=1}^T\E_t[r_t(\pi_t^\star(x_t))-r_t(a_t)]\one\{t \text{ is in flat-exploitation bins}\}
\leq \otil\left(K^{\p}\Delta^{\frac{1}{3}}T^{\frac{2}{3}} + K^\p S^{\p \frac{1}{4}}T^{\frac{3}{4}} \right).
\end{align*}
\end{lemma}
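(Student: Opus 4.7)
The plan is to mirror the bin $\to$ block $\to$ epoch $\to$ horizon aggregation used in the proof of Lemma~\ref{lemma:switch_flat_exploitation}, but to replace the partition of each flat-exploitation bin into i.i.d.\ sub-intervals by a partition into sub-intervals of variation at most $v>0$, where $v$ will be a free parameter tuned at the very end. The extra $v|\cdot|$ penalty that this introduces at the bin level will be the only substantive change to the template.

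\textbf{Bin level.} I will fix a flat exploitation bin $\calI$ in epoch $i$ and block $j$, and greedily partition $\calI$ into $S_\calI$ consecutive sub-intervals $[s_k,e_k]$ each of variation $\leq v$, so $S_\calI\leq \var_\calI/v + 1$. For $t\in[s_k,e_k]$ I set $\ell_t=2^{\lfloor\log_2(t-s_k+1)\rfloor}$, which is a power of $2$ and guarantees $[t-\ell_t+1,t]\subseteq[s_k,e_k]$. Lemma~\ref{lemma:var-bound} then gives $|\calR_t(\pi)-\calR_{[t-\ell_t+1,t]}(\pi)|\leq v$, and chaining with the flat-bin inequality at $[t-\ell_t+1,t]$ yields
\[
\calR_t(\pi)\leq \calR_t(\hat{\pi}_{(i,j)})+2\beta_{B(i,j)}+4\alpha_{[t-\ell_t+1,t]}+2v.
\]
Summing over $t\in\calI$ (with Cauchy--Schwarz on the $\alpha$-terms inside each $[s_k,e_k]$) and adding the $K\mu_t$ cost from the $\epsilon$-greedy step will reproduce the switching bin bound with an extra $v|\calI|$ summand, namely $\otil\bigl(\sqrt{K^\p S_\calI|\calI|}+K^\p S_\calI + v|\calI|+\sum_{t\in\calI}(\beta_{B(i,j)}+K\mu_t)\bigr)$.

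\textbf{Aggregation and tuning.} The sub-interval counts telescope via $\sum_b S_{\calI_b}\leq \var_\calJ/v+\Gamma$ (block) and $\sum_j S_{\calJ_j}\leq \var_\calE/v+O(|\calE|^{1-\gamma})$ (epoch, using $\gamma=1/2$), so repeating the Cauchy--Schwarz/H\"older steps of Lemma~\ref{lemma:switch_flat_exploitation} and summing the extra $v|\cdot|$ penalties as $v|\calE|$ and then $vT$ gives, after using $\sum_i\sqrt{\var_{\calE_i}|\calE_i|}\leq\sqrt{\var T}$ and $\sum_i|\calE_i|^\alpha\leq (S^\p)^{1-\alpha}T^\alpha$ across the $S^\p$ epochs, a horizon bound of the form $\otil\bigl(\sqrt{K^\p\var T/v}+vT+K^\p\var/v+\sqrt{K^\p}(S^\p)^{1/4}T^{3/4}+K^\p\sqrt{S^\p T}+\sqrt{K^\p}(S^\p)^{1/3}T^{2/3}\bigr)$. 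Balancing the first two terms via $v=(K^\p\var/T)^{1/3}$ reduces them to $(K^\p)^{1/3}\var^{1/3}T^{2/3}\leq K^\p\var^{1/3}T^{2/3}$, and a direct check shows $K^\p\var/v$ is dominated by the same expression whenever $v\leq 1$ (outside of which the stated bound is trivially true since any regret is $\leq T$). Finally $S^\p\leq T$ absorbs $K^\p\sqrt{S^\p T}$ and $\sqrt{K^\p}(S^\p)^{1/3}T^{2/3}$ into $K^\p(S^\p)^{1/4}T^{3/4}$, producing the claimed bound.

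\textbf{Main obstacle.} The delicate step will be introducing $v$ only at the bin level so that the $\var_\cdot/v$ counts telescope \emph{additively} from bins up to the horizon without double-counting against the bin/block indexing counts $|\cdot|^{1-\gamma}$ inherited from the switching proof; in particular, I must keep the $\var_\cdot/v$ and $|\cdot|^{1-\gamma}$ summands inside the same $\sqrt{K^\p(\cdot)|\calE|}$ so that one Cauchy--Schwarz cleanly produces both $\sqrt{K^\p\var T/v}$ and $\sqrt{K^\p}(S^\p)^{1/4}T^{3/4}$. Once that bookkeeping is in place, the H\"older summations and the single-variable optimization of $v$ are routine.
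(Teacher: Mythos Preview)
Your approach is correct but takes a genuinely different route from the paper. The paper works entirely at the bin level: for a flat exploitation bin $\calI$ it applies the flat-bin inequality to sub-intervals of a common dyadic length $L'=2^q$, invokes Lemma~\ref{lem:dynamic2interval} to switch the comparator to $\pi_t^\star$, and then \emph{optimizes $L'$ locally per bin} (choosing $L'\approx(|\calI|/\var_\calI)^{2/3}$) to obtain the already-balanced bin bound $\otil\bigl(K^\p|\calI|^{2/3}\var_\calI^{1/3}+K^\p\sqrt{|\calI|}+\sum_{t\in\calI}(\beta_{B(i,j)}+K\mu_t)\bigr)$. The remaining aggregation is then just two clean H\"older steps: $\sum_b|\calI_b|^{2/3}\var_{\calI_b}^{1/3}\le|\calE|^{2/3}\var_\calE^{1/3}$ at the epoch level and the analogous inequality across epochs. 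Your route instead fixes a single global variation threshold $v$, treats each $v$-variation subinterval as a pseudo-stationary piece (so the bin count becomes $\var_\calI/v+1$), and carries the unbalanced terms $\sqrt{K^\p\var T/v}$ and $vT$ all the way to the horizon before tuning $v=(K^\p\var/T)^{1/3}$. Both routes land on the same bound; the paper's local optimization is a bit slicker since the per-bin expression is already in final form and sidesteps precisely the bookkeeping you flag as the ``main obstacle'' (keeping the $\var/v$ and $|\cdot|^{1-\gamma}$ counts together under one Cauchy--Schwarz), whereas your route has the virtue of reusing the template of Lemma~\ref{lemma:switch_flat_exploitation} almost verbatim and making the switching/drifting analogy transparent.
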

\begin{proof}
\begin{itemize}[leftmargin=*]
\item \textbf{Regret in a bin.} 
If $\calI$ is an flat exploitation bin in epoch $i$ and block $j$, then for all $[s,e]\subseteq \calI$ such that $e-s+1=2^q$, we have for all $\pi$,
\begin{align*}
\calR_{[s,e]}(\pi)\leq \calR_{[s,e]}(\hat{\pi}_{(i,j)}) + 2\beta_{B(i,j)}+4\alpha_{[s,e]},
\end{align*}
which implies (by expanding the definition of $\calR_{[s,e]}(\pi)$),
\begin{align}
\sum_{t=s}^{e} \E_t[r_t(\pi(x_t))-r_t(a_t)]&\leq \sum_{t=s}^{e} (\calR_t(\pi)-\calR_t(\hat{\pi}_{(i,j)})+K\mu_t) \nonumber \\
&\leq \otil\left( \sqrt{K^\p (e-s+1)} + K^\p + \sum_{t=s}^e (\beta_{B(i,j)}+K\mu_t)\right). \label{eqn:ada3_vbound_single}
\end{align}
Now we divide the whole bin into intervals of length $L'=2^q$ for some integer $q$. Then we can use Lemma~\ref{lem:dynamic2interval} to relate the dynamic regret in the whole bin to the sum of interval regret against a fixed policy on each of the intervals (that is, Eq.~\eqref{eqn:ada3_vbound_single}).

One subtle issue is that there might be one interval (the last one) whose length is less than $L'$. This interval can be further divided into no more than $\log_2L'$ subintervals whose length are all of $2$'s powers. As a whole, there are no more than $\frac{|\calI|}{L'}+\log_2 L'$ intervals each of length no more than $L'$. By Lemma~\ref{lem:dynamic2interval} and Eq.~\eqref{eqn:ada3_vbound_single}, we have
\begin{align*}
\sum_{t\in\calI} \E_t[r_t(\pi_t^\star(x_t))-r_t(a_t)]\leq \otil\left( \left(\frac{|\calI|}{L'} + \log_2 L'\right)(\sqrt{K'L'}+K') + L'\Delta_\calI + \sum_{t\in\calI} (\beta_{B(i,j)}+K\mu_t)\right). 
\end{align*}
Picking $ L'= \min\left\{2^{\floor{\log_2|\calI|}}, 2^{ \big\lfloor \frac{2}{3} \log_2 (|\calI|/\Delta_\calI)\big\rfloor } \right\} $, the right-hand side is further bounded by 
\begin{align*}
\otil\left(  K'|\calI|^{\frac{2}{3}}\var_\calI^{\frac{1}{3}} + K'|\calI|^{\frac{1}{2}} + \sum_{t\in\calI} (\beta_{B(i,j)}+K\mu_t)\right)
\end{align*}

\item \textbf{Regret in an epoch. }
Next, we sum the regret over flat exploitation bins in an epoch. Note there are at most $\otil(\abs{\calE}^{1-\gamma})$ bins in an epoch $\calE$. Using H\"{o}lder's inequality, we have
\begin{align*}
&\sum_{t\in \calE} \E_t[r_t(\pi_t^\star(x_t))-r_t(a_t)]\one\{t \text{ is in flat exploitation bins}\}\\
&\leq \otil\left(K^{\p} \abs{\calE}^{\frac{2}{3}}\Delta_\calE^{\frac{1}{3}} + K^{\p } \abs{\calE}^{1-\frac{\gamma}{2}}  + K^{\p\frac{1}{2}} \abs{\calE}^{\frac{2}{3}} \right) \\
&= \otil\left(K^{\p} \abs{\calE}^{\frac{2}{3}}\Delta_\calE^{\frac{1}{3}} + K^{\p } \abs{\calE}^{\frac{3}{4}} \right).
\end{align*}

\item \textbf{Regret in the whole time horizon. }
Summing over epochs and using H\"{o}lder's inequality, we get 
\begin{align*}
&\sum_{t=1}^T \E_t[r_t(\pi_t^\star(x_t))-r_t(a_t)]\one\{t \text{ is in flat exploitation bins}\}=\otil\left(K^{\p}\Delta^{\frac{1}{3}}T^{\frac{2}{3}} + K^\p S^{\p \frac{1}{4}}T^{\frac{3}{4}} \right). 
\end{align*}

\end{itemize}
\end{proof}

\begin{proof}[Proof of Theorem~\ref{thm:Ada3 regret} (Part II: dynamic regret)]
Combining Lemma~\ref{lemma:dynamic S not much}, \ref{lemma:dynamic flat exploitation}, \ref{thm:Ada3 exploration}, and~\ref{thm: Ada3 nonflat}, we see that with probability at least $1-5\delta/2$, 
\begin{align*}
\sum_{t=1}^T \E_t[r_t(\pi^\star_t(x_t)) - r_t(a_t)]\leq \otil\left(K'\var^{\frac{1}{3}}T^{\frac{2}{3}}  + K^\p\left(1+\var^{\frac{4}{5}}T^{\frac{1}{5}}\right)^{\frac{1}{4}} T^{\frac{3}{4}}\right)\leq \otil\left(K'\var^{\frac{1}{5}}T^{\frac{4}{5}}  + K^\p T^{\frac{3}{4}}\right). 
\end{align*}
Applying Hoeffding-Azuma inequality shows that with probability at least $1-3\delta$, 
\[
\sum_{t=1}^T r_t(\pi^\star_t(x_t)) - r_t(a_t) = \otil\left(K'\var^{\frac{1}{5}}T^{\frac{4}{5}}  + K^\p T^{\frac{3}{4}}\right).
\]
The theorem finally follows by a union bound combining the switching regret bound and the dynamic regret bound we have proven.
\end{proof}

\section{Omitted Proofs in Section~\ref{sec:implications}}\label{app:dynamic2interval}
\begin{proof}[of Lemma~\ref{lem:dynamic2interval}]
It suffices to show that for any $i \in [n]$,
\[
\sum_{t\in\calI_i}\E_t\left[  r_t(\pi^\star_t(x_t)) - r_t(a_t) \right] \leq 
\sum_{t\in\calI_i}\E_t\left[  r_t(\pi^\star_{s_i}(x_t)) - r_t(a_t) \right] + 2|\calI_i|\var_{\calI_i}
\]
The theorem follows by summing up the regrets over all intervals.
%and realizing $\sum_{i \in [n]} \var_{\calI_i} \leq \var$.

Indeed, one can rewrite the regret as follows:
\begin{align*}
\sum_{t\in\calI_i}\E_t\left[  r_t(\pi^\star_t(x_t)) - r_t(a_t) \right]
&=\sum_{t\in\calI_i} \E_t\left[  r_t(\pi^\star_{s_i}(x_t)) - r_t(a_t) \right]
+ \sum_{t\in\calI_i}\E_t\left[ r_t(\pi^\star_t(x_t)) - r_t(\pi^\star_{s_i}(x_t)) \right]\\
&= \sum_{t\in\calI_i} \E_t\left[  r_t(\pi^\star_{s_i}(x_t)) - r_t(a_t) \right]
+ \sum_{t\in\calI_i} \left( \calR_t(\pi^\star_t) - \calR_t(\pi^\star_{s_i}) \right).
\end{align*}
The last term can be further decomposed as:
\[
\sum_{t\in\calI_i} \left(
\calR_{s_i}(\pi^\star_t)  - \calR_{s_i}(\pi^\star_{s_i}) +
\sum_{\tau = s_i+1}^t \left( \calR_\tau(\pi^\star_t) - \calR_{\tau-1}(\pi^\star_t) \right) +
\sum_{\tau = s_i+1}^t \left( \calR_{\tau-1}(\pi^\star_{s_i})  - \calR_\tau(\pi^\star_{s_i}) \right) 
\right)
\]
where $\calR_{s_i}(\pi^\star_t) \leq \calR_{s_i}(\pi^\star_{s_i})$ by definition
and the rest is bounded by $2\var_{\calI_i}$. This finishes the proof.
\end{proof}

\begin{proof}[of Corollary~\ref{cor:dynamic AdaEG}]
The proof of Theorem~\ref{thm:AdaEG2} shows that with probability at
least $1 - \delta/2$, \AdaEG ensures that for any interval $\calI$
such that $|\calI| \leq L$ and $\var_\calI \leq L^{-1/3}$, we have
$\sum_{t \in \calI} \E_t[ r_t(\pi(x_t)) - r_t(a_t)] \leq
\otil\left(|\calI|L^{-\frac{1}{3}} + L^\frac{1}{6}\sqrt{\abs{\calI}K\ln(N/\delta)} +
K\ln(N/\delta)\right)$ for any $\pi$.  We can thus first partition $[1, T]$ evenly
into $T/L$ intervals, then within each interval, further partition it sequentially
into several largest subintervals so that for each of them the variation is at
most $v$. Since the total variation is $\var$, it is clear that
this results in at most $S'\leq T/L + \var/L^{-1/3}$ subintervals (denote them as $\calI_1, \ldots, \calI_{S'}$), each of
which satisfies the conditions of Theorem~\ref{thm:AdaEG2}. Using Lemma~\ref{lem:dynamic2interval}, we get 
\begin{align*}
\sum_{t=1}^T \E_t[ r_t(\pi_t^\star(x_t)) - r_t(a_t)] 
&\leq \sum_{i=1}^{S'} \otil\left(|\calI_i|L^{-\frac{1}{3}} + L^\frac{1}{6}\sqrt{|\calI_i|K\ln(N/\delta)} + K\ln(N/\delta) + |\calI_i|\var_{\calI_i}\right) \\
&\leq \otil\left(\left(\frac{T}{L^{1/3}} + L^{\frac{1}{6}}\sqrt{S'T}+S' \right)K\ln(N/\delta)\right) \\
&\leq \otil\left(\left(\frac{T}{L^{1/3}} + L^{\frac{1}{3}}\sqrt{\Delta T} + \frac{T}{L} + \var L^\frac{1}{3} \right)K\ln(N/\delta) \right) \\
&\leq \otil\left(\left(\frac{T}{L^{1/3}} + L^{\frac{1}{3}}\sqrt{\Delta T} \right)K\ln(N/\delta) \right), 
\end{align*}
where in the second inequality we use Cauchy-Schwarz inequality and in the last one we use the fact $\var \leq T$. Finally using Hoeffding-Azuma inequality leads to the claimed bound. 
\end{proof}

\begin{proof}[of Corollary~\ref{cor:dynamic AdaILTCB}]
The proof follows the same arguments as in Corollary~\ref{cor:dynamic AdaEG} except that now the interval regret is bounded by $\otil\left(\frac{|\calI|}{\sqrt{L}} + \sqrt{LK\ln(N/\delta)} \right)$, and $S^\p\leq T/L + \bvar/L^{-1/2}$. Thus, 
\begin{align*}
\sum_{t=1}^T \E_t[ r_t(\pi_t^\star(x_t)) - r_t(a_t)] 
&\leq \otil\left(\frac{T}{\sqrt{L}} + S' \sqrt{LK\ln(N/\delta)} + \bvar L\right) \\
&\leq \otil\left( \left(\frac{T}{\sqrt{L}} + \bvar L\right)K\ln(N/\delta) \right). 
\end{align*}
Using Hoeffding-Azuma inequality gives the bound.
\end{proof}

\section{Omitted Details for Corralling \bistro}\label{app:BISTRO+}

\LinesNumbered
\SetAlgoVlined
\setcounter{AlgoLine}{0}
\begin{algorithm}[H]
\DontPrintSemicolon
\caption{Corralling \bistro}
\label{alg:corralling_BISTRO+}
\nl {\bf Input}: Contexts $x_1, \ldots, x_T$ and parameter $L$ \\
\nl Define $\gamma = 1/T, \beta = e^{\frac{1}{\ln T}},  \eta = \min\{ \frac{1}{810}, \sqrt{T/\ln N}/(LK)\}, M = \ceil{T\eta} $\\
\nl Initialize $m=1$, $\eta_{1}(i) = \eta$, $\scale_{1}(i) = 2M$ for all $i \in [M]$, $w_1 = \bar{w}_1 = \frac{\one}{M}$,
$q_1 \in \Delta^M$ s.t. $q_1(1) = 1 $ \\
\nl Initialize $\base{1}$, a new copy of \bistro

\nl \For{$t=1$ \KwTo $T$} {
\nl    Receive suggested action $a_t^{i}$ from base algorithm $\base{i}$	 for each $i \in [m]$ \\
\nl    Sample $i_t \sim q_t$, play $a_t = a_t^{i_t}$, receive reward $r_t(a_t)$  \\
\nl    Construct estimated losses $\ell_t(i) = \frac{1-r_t(a_t)}{q_t(i_t)}\one\{i=i_t\} + (1-r_t(a_t))\one\{i > m\},\; \forall i \in [M]$  \label{line:virtual_costs}\\
\nl    Send feedback $\ell_t(i)$ to $\base{i}$ for each $i \in [m]$ \\
\nl    Compute $w_{t+1} \in \Delta^M$ s.t. \[\frac{1}{w_{t+1}(i)} = \frac{1}{w_{t}(i)} + \eta_{t}(i) (\ell_{t}(i) + z_t(i) - \lambda)\] 
      where $\lambda$ is a normalization factor and $z_t(i) = 6\eta_t(i)w_t(i)(\ell_t(i) - (1-r_t(a_t)))^2$ \label{line:LB-OMD} \label{line:Broad-OMD}\\
\nl    Set $\bar{w}_{t+1} = (1 - \gamma) w_{t+1} + \gamma \frac{\one}{M}$   \\
\nl    \For{$i=1$ \KwTo $M$} {
\nl        \lIf{$\frac{1}{\bar{w}_{t+1}(i)} > \scale_{t}(i)$} {set $\scale_{t+1}(i) = \frac{2}{\bar{w}_{t+1}(i)}$, $\eta_{t+1}(i) = \beta\eta_{t}(i)$}
\nl        \lElse{set $\scale_{t+1}(i) = \scale_{t}(i)$, $\eta_{t+1}(i) = \eta_{t}(i)$  \label{line:threshold}}  
    }
\nl    \If{$t$ {\normalfont  is a multiple of} $\lceil T/M\rceil $  \label{line:new_copy}} {
\nl        Update $m \leftarrow m + 1$ \\
\nl        Initialize $\base{m}$, a new copy of \bistro
    }
\nl    Set $q_{t+1}(i) = \frac{\bar{w}_{t+1}(i)}{\sum_{j=1}^m \bar{w}_{t+1}(j)}, \; \forall i \in [m]$
}
\end{algorithm}

We describe the idea of using \corral with \bistro as base algorithms
(see Algorithm~\ref{alg:corralling_BISTRO+} for the pseudocode).
Conceptually we always maintain $M$ base algorithms,
and use \corral almost in a black-box manner as in~\citep{AgarwalLuNeSc17}.
However, crucially the $i$-th copy of the base algorithm only starts
after the end of round $(i-1) \lceil T/M\rceil$, in order to provide regret guarantee
starting from that round (or close to that round).
Therefore, the extra work here is to make sure \corral does not pick algorithms
that have not started, and also to come up with ``virtual rewards'' for algorithms
before they start.

More concretely, at each time we maintain $m \leq M$ copies of the base algorithm 
and a distribution $q_t$ over them
(note that although $q_t$ is in the simplex $\Delta^M \coloneqq \{q \in \fR^M_+: \sum_{i=1}^M q(i)=1 \}$, 
the algorithm always ensure $q_t(i) = 0, \;\forall i > m$). 
First we receive suggested actions $a_t^i$ from each base algorithm $\base{i}$.
Then we sample a base algorithm $i_t \sim q_t$ and play according to its action, that is, $a_t = a_t^{i_t}$.
After receiving its reward $r_t(a_t)$ (or equivalently its cost $1-r_t(a_t)$), 
we construct estimated loss for each of the $M$ algorithms:
for algorithms that have started, this is simply the importance weighted loss;
for algorithms that have not started, this is the actual loss of the picked action (see Line~\ref{line:virtual_costs}).
Next, we send the estimated losses to the $m$ algorithms that have started,
and update several variables that \corral itself maintains, 
including the distributions $w_t$ and $\bar{w}_t$ and the thresholds $\rho_t$ (Line~\ref{line:LB-OMD} to~\ref{line:threshold}).
Finally, we re-normalize the weights $\bar{w}_{t+1}$ over the started algorithms
(including possibly a newly started one) to obtain $q_{t+1}$ and proceed to the next round.

Another additional difference from the original \corral is the way we update $w_t$ (Line~\ref{line:Broad-OMD}).
Here we follow the improved version proposed by~\citet{wei2018more} and incorporate an extra correction term $z_t$ into the loss vector $\ell_t$.
In the original \corral $z_t$ is simply the zero vector.
However, with this more carefully chosen $z_t$ we can eventually improve the bound, replacing some dependence on $T$ by $L$,
as shown in our proof.
%We now prove Theorem~\ref{thm:corralling_BISTRO+}.

~

\begin{proof}[Proof of Theorem \ref{thm:corralling_BISTRO+}]
For any time interval $\calI = [s, t]$ with $|\calI| \leq L$, if $|\calI| \leq T/M$ then the regret bound holds trivially.
Otherwise, there must be a round $s' \in \calI$ such that $s' - s \leq T/M$,
and there is a new copy of \bistro added to the pool at round $s'$.
Denote this new copy by $\base{i^\star}$.
The interval regret on $\calI$ is then clearly bounded by $T/M$ plus the interval regret on $[s', t]$.

Let $c_t(a) = 1 - r_t(a), \;\forall a\in [K]$ and $m_\tau$ be the value of $m$ at round $\tau$ before Line~\ref{line:new_copy}.
Then for any policy $\pi$, we rewrite the interval regret on $[s', t]$ as:
\begin{align*}
\E\left[ \sum_{\tau = s'}^t r_\tau(\pi(x_\tau)) - r_\tau(a_\tau) \right] 
&= \E\left[ \sum_{\tau = s'}^t c_\tau(a_\tau) -\ell_\tau(i^\star) + \ell_\tau(i^\star) - c_\tau(\pi(x_\tau))  \right]  \\
&= \E\left[ \sum_{\tau = 1}^t c_\tau(a_\tau) -\ell_\tau(i^\star) \right] + \E\left[\sum_{\tau = s'}^t c_\tau(a_\tau^{i^\star}) - c_\tau(\pi(x_\tau))  \right]  \\
&= \E\left[ \sum_{\tau = 1}^t \sum_{i=1}^{m_\tau} q_\tau(i) \ell_\tau(i) -\ell_\tau(i^\star) \right] + \E\left[\sum_{\tau = s'}^t c_\tau(a_\tau^{i^\star}) - c_\tau(\pi(x_\tau))  \right]  \\
&= \E\left[ \sum_{\tau = 1}^t \sum_{i=1}^{M} \bar{w}_\tau(i) \ell_\tau(i) -\ell_\tau(i^\star) \right] + \E\left[\sum_{\tau = s'}^t c_\tau(a_\tau^{i^\star}) - c_\tau(\pi(x_\tau))  \right]  \tag{$*$}
\end{align*}
where the second equality uses the fact $c_\tau(a_\tau) = \ell_\tau(i^\star)$ for $\tau < s'$ 
and $\E_{i_\tau \sim q_\tau}[\ell_\tau(i^\star) ] = c_\tau(a_\tau^{i^\star})$ for $\tau \geq s'$,
and the last equality holds because
\begin{align*}
\sum_{i=1}^{M} \bar{w}_\tau(i) \ell_\tau(i) &= 
\left(\sum_{i=1}^{m_\tau} \bar{w}_\tau(i)  \right)  \sum_{i=1}^{m_\tau} q_\tau(i) \ell_\tau(i) + 
\left(\sum_{i=m_\tau+1}^{M} \bar{w}_\tau(i) \right)  \sum_{i=1}^{m_\tau} q_\tau(i) \ell_\tau(i) 
= \sum_{i=1}^{m_\tau} q_\tau(i) \ell_\tau(i).
\end{align*}
Here the first equality follows since $q_\tau(i)\left(\sum_{j=1}^{m_\tau}
\bar{w}_\tau(j) \right) = \bar{w}_\tau(i)$ for $i \leq m_\tau$ and
$\ell_\tau(i) = \sum_{j=1}^{m_\tau} q_\tau(j) \ell_\tau(j) $ for $i >
m_\tau$ by definitions.  

Next we bound the two terms in $(*)$. 
The first term is essentially the regret of the master, corresponding to the update in Line~\ref{line:Broad-OMD}.
Using results from~\citep{wei2018more},\footnote{
This is not explicitly given in~\citep{wei2018more}, but is a direct application of their Theorem~2
with $m_{t,i} = \ell_{t,i_t}$ in their notation. 
}
we obtain
\begin{align*}
&\E\left[ \sum_{\tau = 1}^t \sum_{i=1}^{M} \bar{w}_\tau(i) \ell_\tau(i) -\ell_\tau(i^\star) \right] \\
&\leq \otil\left(\frac{M}{\eta} + \eta\sum_{\tau=1}^t \E\left[w_\tau(i^\star)(\ell_\tau(i^\star) - (1-r_\tau(a_\tau)))^2 \right]  \right) - \E\left[\frac{\rho_{t,i^\star}}{40\eta\ln T}\right] \\
&= \otil\left(\frac{M}{\eta} + \eta\sum_{\tau=s'}^t \E\left[w_\tau(i^\star)(\ell_\tau(i^\star) - (1-r_\tau(a_\tau)))^2 \right]  \right) - \E\left[\frac{\rho_{t,i^\star}}{40\eta\ln T}\right] \\
&\leq \otil\left(\frac{M}{\eta} +  L\eta  \right) - \E\left[\frac{\rho_{t,i^\star}}{40\eta\ln T}\right] 
\end{align*}
where the equality holds because by construction $\ell_\tau(i^\star) = (1-r_\tau(a_\tau))$ for all $\tau < s'$.\footnote{
This is the exact place where we obtain some improvement over the original \corral by using results of~\citep{wei2018more}.
}
For the second term in $(*)$, we apply Lemma~17 of~\citep{AgarwalLuNeSc17} to obtain
\[
\E\left[\sum_{\tau = s'}^t c_\tau(a_\tau^{i^\star}) - c_\tau(\pi(x_\tau))  \right] = \E\left[\rho_{t,i^\star}^{1/3}\right] (LK)^\frac{2}{3}(\ln N)^\frac{1}{3}.
\]
Combining and proceeding similarly as the proof of Theorem~7 of~\citep{AgarwalLuNeSc17}
we have
\begin{align*}
\E\left[ \sum_{\tau = s'}^t r_\tau(\pi(x_\tau)) - r_\tau(a_\tau) \right] 
&\leq \otil\left(\frac{M}{\eta} +  L\eta\right) - \E\left[\frac{\rho_{t,i^\star}}{40\eta\ln T}\right] + 
\E\left[\rho_{t,i^\star}^{1/3}\right] (LK)^\frac{2}{3}(\ln N)^\frac{1}{3}  \\
&\leq \otil\left(\frac{M}{\eta} + L\eta + LK\sqrt{\eta\ln N} \right).
\end{align*}
Adding back the extra $T/M$ term discussed above and plugging in the value of $\eta$ and $M$
lead to $\otil(T^{\frac{1}{4}}(LK)^{\frac{1}{2}}(\ln N)^{\frac{1}{4}} + \sqrt{T})$.
Note that the term $\sqrt{T}$ is dominant only when $L \leq \sqrt{T}$, in which case even the first term is superlinear in $L$ and becomes vacuous.
We can therefore drop the second term and obtain the claimed bound.
\end{proof}

We finally include the dynamic regret guarantee for this algorithm,
which is again a direct application of Lemma~\ref{lem:dynamic2interval} combined with Theorem~\ref{thm:corralling_BISTRO+},
similar to Corollary~\ref{thm:Exp4.S_dynamic}.

\begin{cor}\label{cor:BISTRO+_dynamic}
In the transductive setting, Algorithm~\ref{alg:corralling_BISTRO+} guarantees
\[
\E\left[ \sum_{t=1}^T r_t(\pi_t^\star(x_t)) - r_t(a_t) \right]  = 
\otil\left( \min_{0\leq L' \leq L}\left\{ \frac{T}{L'}\left( T^{\frac{1}{4}}(LK)^{\frac{1}{2}}(\ln N)^{\frac{1}{4}}\right) + \var L' \right\} \right).
\]
%\otil\left( T^{\frac{7}{8}} \var^{\frac{1}{2}} K^{\frac{1}{4}}(\ln N)^{\frac{1}{8}} +
%T^{\frac{3}{4}}K^{\frac{1}{2}}(\ln N)^{\frac{1}{4}}   \right).
If $\var$ is known, optimally setting $L=\min\{ T^{\frac{5}{6}}K^{\frac{1}{3}}(\ln N)^{\frac{1}{6}}/\Delta^{\frac{2}{3}},T\}$ gives
$\otil(\var^\frac{1}{3}T^\frac{5}{6}K^{\frac{1}{3}}(\ln N)^{\frac{1}{6}} + T^{\frac{3}{4}}K^{\frac{1}{2}}(\ln N)^{\frac{1}{4}})$; otherwise, setting $L = T^\frac{5}{6}$
gives $\otil((\sqrt{\var}+1) T^\frac{5}{6}K^\frac{1}{2}(\ln N)^{\frac{1}{4}})$.
\end{cor}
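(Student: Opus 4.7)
\section*{Proof Proposal for Corollary~\ref{cor:BISTRO+_dynamic}}

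The plan is to mirror the reduction already used for Corollary~\ref{thm:Exp4.S_dynamic}, but plug in the interval regret bound from Theorem~\ref{thm:corralling_BISTRO+} instead of the one from Theorem~\ref{thm:Exp4.S}. Concretely, I first partition $[1,T]$ into $n = \lceil T/L'\rceil$ contiguous subintervals $\calI_1, \ldots, \calI_n$, each of length at most $L' \leq L$, where $L'$ is a free parameter that I will optimize at the end. This partition is purely for the analysis; the algorithm itself is run with its fixed parameter $L$ and is oblivious to $L'$.

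On each subinterval $\calI_i = [s_i, t_i]$ I apply Theorem~\ref{thm:corralling_BISTRO+} with $\pi = \pi^\star_{s_i}$, which gives
\[
\E\left[\sum_{t\in\calI_i}\left(r_t(\pi^\star_{s_i}(x_t)) - r_t(a_t)\right)\right]\le \otil\!\left(T^{1/4}(LK)^{1/2}(\ln N)^{1/4}\right),
\]
since $|\calI_i|\le L'\le L$. Summing over $i$ contributes $\otil\!\bigl((T/L')\cdot T^{1/4}(LK)^{1/2}(\ln N)^{1/4}\bigr)$. Then I invoke Lemma~\ref{lem:dynamic2interval} to convert these per-interval fixed-policy regrets into dynamic regret at the cost of the additional term $2\sum_i |\calI_i|\var_{\calI_i} \le 2L'\sum_i \var_{\calI_i} \le 2L'\var$, where the last step uses that the $\var_{\calI_i}$'s sum to at most $\var$ by the definition of the variation measure. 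Combining these two contributions and taking the minimum over $L' \in [0, L]$ yields the main bound.

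For the tuned case with $\var$ known, I substitute $L = L'$ inside the first term (which is legal because the algorithm is then tuned with the same $L$ that defines the partition) to get $\frac{T}{\sqrt{L'}}\, T^{1/4}K^{1/2}(\ln N)^{1/4}$, balance this against $\var\, L'$, and solve $L' = T^{5/6} K^{1/3}(\ln N)^{1/6}/\var^{2/3}$. Capping at $T$ (to respect $L'\le T$) produces the stated $\otil(\var^{1/3}T^{5/6}K^{1/3}(\ln N)^{1/6} + T^{3/4}K^{1/2}(\ln N)^{1/4})$ bound, where the second summand is what remains when $\var$ is so small that the cap is active. For the parameter-free case, plugging in $L = L' = T^{5/6}$ directly into both terms recovers $\otil((\sqrt{\var}+1)T^{5/6}K^{1/2}(\ln N)^{1/4})$.

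The only nontrivial wrinkle I anticipate is the interaction between the algorithm's internal parameter $L$ and the analysis parameter $L'$: the interval regret bound from Theorem~\ref{thm:corralling_BISTRO+} grows like $\sqrt{L}$, so in the tuned corollary one must set them equal to get a clean expression, while in the parameter-free version one must not confuse the two. Aside from that bookkeeping, the proof is a direct application of Lemma~\ref{lem:dynamic2interval} plus Theorem~\ref{thm:corralling_BISTRO+} and the scalar optimization over $L'$, so no new probabilistic or combinatorial ingredient is needed.
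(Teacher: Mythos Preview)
Your approach matches the paper's exactly: partition $[1,T]$ into $\lceil T/L'\rceil$ pieces of length at most $L'$, apply Theorem~\ref{thm:corralling_BISTRO+} on each piece, and then invoke Lemma~\ref{lem:dynamic2interval}. The main bound and the tuned case are handled correctly.

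There is one slip in the parameter-free paragraph: setting $L' = L = T^{5/6}$ makes the second term $\var\, T^{5/6}$, not $\sqrt{\var}\, T^{5/6}$, so it does \emph{not} recover the stated $(\sqrt{\var}+1)T^{5/6}$ bound. To get $\sqrt{\var}$ you must keep the algorithm's $L = T^{5/6}$ fixed but still optimize the analysis parameter $L'$ inside the $\min$: taking $L' = \min\{T^{5/6},\, T^{5/6}K^{1/4}(\ln N)^{1/8}/\sqrt{\var}\}$ balances the two terms when $\var$ is large and defaults to $L' = L$ when $\var$ is small, exactly as done for Exp4.S in Corollary~\ref{thm:Exp4.S_dynamic}. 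This is precisely the $L$-versus-$L'$ bookkeeping you flagged in your final paragraph; you just need to carry it through here as well.
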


\begin{proof}{\textbf{sketch.}}
The proof follows the same procedure as in Corollary~\ref{thm:Exp4.S_dynamic}: partition $[1,T]$ into $\frac{T}{L'}$ intervals each of length $L'$, plug in the interval regret guarantee for each interval (Theorem~\ref{thm:corralling_BISTRO+}), and then apply Lemma~\ref{lem:dynamic2interval} to obtain the claimed dynamic regret. 
\end{proof}

%\begin{proof}
%For any $L \leq T$, by partitioning $[1,T]$ evenly into $T/L$ intervals and 
%applying Theorem~\ref{thm:corralling_BISTRO+} and Lemma~\ref{lem:dynamic2interval},
%we have
%\[
%\E\left[ \sum_{t=1}^T r_t(\pi_t^\star(x_t)) - r_t(a_t) \right]  = 
%\otil\left(\frac{T}{L} \cdot T^{\frac{3}{4}}K^{\frac{1}{2}}(\ln N)^{\frac{1}{4}} + L \var\right).
%\]
%Setting $L = \min\{T, T^{\frac{7}{8}}\var^{-\frac{1}{2}}K^{\frac{1}{4}}(\ln N)^\frac{1}{8}\}$ completes the proof.
%\end{proof}

\section{Interval Regret for \AdaBIN}
\label{appendix:interval_adabin}
\begin{theorem}
Let $\calI$ be an interval with $\Delta_\calI=0$. Then \AdaBIN
with parameter $\delta$ guarantees that with probability at least $1-5\delta$,  
\begin{align*}
\sum_{t\in\calI}r_t(\pi(x_t))-r_t(a_t)=\otil(\sqrt{K'}T^{\frac{3}{4}}+K'\sqrt{T})
\end{align*}
for any $\pi\in\Pi$, where $K'=K\ln(N/\delta)$.
\end{theorem}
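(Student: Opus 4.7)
The plan is to carry out a regret decomposition parallel to the switching-regret analysis of Theorem~\ref{thm:Ada3 regret} but restricted to the interval $\calI$, exploiting $\Delta_\calI = 0$ to simplify the flat-bin contribution.

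I will first condition on the event $\evt 3$ together with the two high-probability events (call them $\evt 4$ and $\evt 5$) used in the proofs of Lemmas~\ref{thm:Ada3 exploration} and~\ref{thm: Ada3 nonflat}; a union bound keeps them simultaneously with probability at least $1-3\delta$. Then I will show that at most one rerun is triggered inside $\calI$. If a new epoch $i$ starts at some $T_i+1$ that lies inside $\calI$, then since $\Delta_\calI = 0$ the expected reward of every policy is constant from $T_i+1$ through the end of $\calI$, and the Lemma~\ref{thm: Ada3 not many rerun} argument guarantees that no further rerun fires within this epoch while still in $\calI$. The only rerun that can occur inside $\calI$ therefore comes from the epoch $\calE_1$ that straddles the left endpoint of $\calI$, so $\calI$ decomposes as $\calI = \calI_1 \cup \calI_2$ with $\calI_1 \subseteq \calE_1$ and $\calI_2 \subseteq \calE_2$, where $|\calE_1|, |\calE_2| \leq T$ and either piece may be empty.

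I will then split the rounds of $\calI$ into flat exploitation bins, exploration bins, and non-flat (exploitation) bins. The per-epoch bounds inside the proofs of Lemmas~\ref{thm:Ada3 exploration} and~\ref{thm: Ada3 nonflat} give $\otil(|\calE|^{3/4})$ on the number of exploration rounds and on the number of non-flat rounds, respectively, in any single epoch. Applied to $\calE_1$ and $\calE_2$, this bounds the total number of exploration-plus-non-flat rounds lying in $\calI$ by $\otil(T^{3/4})$, and since rewards are in $[0,1]$ these rounds contribute at most $\otil(T^{3/4})$ regret. The at most two bins that straddle the endpoints of $\calI$ have length $\leq \sqrt{T}$ and contribute another $\otil(\sqrt{T})$, absorbed into the final bound.

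For the remaining flat exploitation bins fully inside $\calI$, I will use $\Delta_\calI = 0$ to observe that $\calR_{[s,e]}(\pi') = \calR_\calI(\pi')$ for every policy $\pi'$ and every $[s,e] \subseteq \calI$, which turns the flatness condition into the per-round bound $\calR_t(\pi) - \calR_t(\hat{\pi}_{(i,j)}) \leq 2\beta_{B(i,j)} + 4\alpha_{\calI_b}$ against the fixed competitor $\pi$. Summing this (plus the $K\mu_t$ error from the exploitation action distribution) over rounds in each bin, bins in each block, and blocks in each of $\calE_1, \calE_2$, using the same geometric-series calculations as in the proof of Lemma~\ref{lemma:switch_flat_exploitation} specialized to $S_\calE = 1$, yields $\otil(\sqrt{K'}|\calE|^{3/4} + K'\sqrt{|\calE|})$ per epoch, and therefore $\otil(\sqrt{K'}T^{3/4} + K'\sqrt{T})$ in total. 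Converting expected regret to actual regret via Hoeffding--Azuma (at the cost of another $\delta$ of failure probability) then finishes the proof at confidence $1-5\delta$.

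The hardest step is the analysis on $\calI_1$, because the empirical best policy $\hat{\pi}_{(i,j)}$ used there was computed partly from pre-$\calI$ data in $\calE_1$ and may be arbitrarily suboptimal on $\calI$'s distribution. The flat/non-flat dichotomy is what rescues this: whenever $\hat{\pi}_{(i,j)}$ is noticeably suboptimal on a bin contained in $\calI_1$ the bin is automatically non-flat, so its rounds are already absorbed in the $\otil(|\calE_1|^{3/4})$ bound from Lemma~\ref{thm: Ada3 nonflat}; and on flat bins the argument above goes through verbatim because all relevant sub-intervals $[s,e]$ lie inside the stationary $\calI$ and inherit $\calR_{[s,e]}(\pi')=\calR_\calI(\pi')$.
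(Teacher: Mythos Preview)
Your proposal is correct and follows essentially the same strategy as the paper: condition on the same high-probability events, argue that at most one rerun can occur in $\calI$, split $\calI$ at that rerun, and then reuse the per-epoch and per-block calculations from Lemmas~\ref{lemma:switch_flat_exploitation}, \ref{thm:Ada3 exploration}, and~\ref{thm: Ada3 nonflat} with $S_{\calE}=1$ on each piece. The only organizational difference is that you treat $\calI_1$ and $\calI_2$ symmetrically and isolate the at-most-two straddling bins (each of length $\leq \sqrt{T}$) explicitly, whereas the paper treats $\calI_2$ as a fresh epoch and, for $\calI_1$, applies the block-level bounds to $\calJ_{j'+1},\ldots,\calJ_{j^*}$ and absorbs the partial first block $\calJ_{j'}$ with a crude $\otil(\sqrt{T})$ bound; your version is arguably cleaner but conceptually identical.
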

\begin{proof}
In the proof of Lemma~\ref{thm: Ada3 not many rerun}, we have shown that with probability at least $1-\delta/2$, if there is no distribution change, an epoch will not rerun. This implies that with probability $1-\delta/2$, the rerun is triggered at most once in $\calI$. Below we assume this event indeed holds. Let $\calI=\calI_1\cup\{t'\}\cup\calI_2$, where in $\calI_1$ and $\calI_2$ rerun is never triggered. 

We can view $\calI_2$ as a fresh epoch with no distribution change in it. Reusing Lemma~\ref{lemma:switch_flat_exploitation}, \ref{thm:Ada3 exploration}, and \ref{thm: Ada3 nonflat}'s \textit{epoch regret} intermediate results (i.e., those \textbf{Regret in an epoch} paragraphs in the proofs) with $S_{\calI_2}=1$, we get 
\begin{align*}
\sum_{t\in \calI_2}\E_t[r_t(\pi(x_t))-r_t(a_t)]=\otil(\sqrt{K'}T^{\frac{3}{4}} + K'\sqrt{T})  
\end{align*}
with probability at least $1-2\delta$.

For $\calI_1$, we can decompose it into $\calJ_{j'} \cup \calJ_{j'+1} \cup \cdots \cup \calJ_{j^*}$, where $\calJ_{j'+1}, \cdots,  \calJ_{j^*-1}$ are complete blocks with block indices $j'+1, \ldots, j^*-1$ respectively, while $\calJ_{j'}$ and $\calJ_{j^*}$ are possibly incomplete blocks (rerun is triggered in $\calJ_{j^*}$). For $j=j'+1, \ldots, j^*$, we can bound the regret in flat exploitation bins in $\calJ_j$ by reusing the \textit{block regret} result in the proof of Lemma~\ref{lemma:switch_flat_exploitation} with $S_{\calJ_j}=1$. Applying the last bound in the \textbf{Regret in a block} part of Lemma~\ref{lemma:switch_flat_exploitation}, the regret in flat exploitation bins in $\calJ_{j'+1} \cup \ldots \cup \calJ_{j^*}$ can be bounded by
\begin{align*}
    &\sum_{t\in \calJ_{j'+1}\cup\cdots\cup\calJ_{j^*}} \E_t[r_t(\pi(x_t))-r_t(a_t)]\one\{t \text{ is in flat-exploitation bins}\}\\
    =&~\otil\left(\sum_{j=j'+1}^{j^*}\sqrt{K^\p}\left(2^{(j-1)\times\frac{3}{4}}\right) +K^\p\left(2^{(j-1)\times\frac{1}{2}}\right) \right) + \otil(\sqrt{K'}T^{\frac{2}{3}} + K'T^{\frac{1}{3}})\\
    =&~\otil(\sqrt{K'}T^{\frac{3}{4}} + K'\sqrt{T}).
\end{align*}
The sum of regret in exploration bins or in non-flat bins can be bounded by $\otil(T^{\frac{3}{4}})$ with probability $1-2\delta$ by Lemma~\ref{thm:Ada3 exploration} and \ref{thm: Ada3 nonflat}'s epoch regret results. Finally, the regret in $\calJ_{j'}$ can be bounded by $|\calJ_{j'}|=\otil(\sqrt{T})$. Combining all above and using Hoeffding-Azuma inequality complete the proof. 
\end{proof}

\end{document}